\documentclass[journal, 11pt, a4paper,onecolumn]{IEEEtran}

\usepackage[cmex10]{amsmath}
\usepackage{amsthm, amsfonts, amssymb, bm}
\usepackage{array}
\usepackage{color}
\usepackage{epsfig}
\usepackage[linesnumbered,ruled,vlined]{algorithm2e}
\usepackage {algpseudocode}
\usepackage{algorithmicx}
\usepackage{algcompatible}
\usepackage[active]{srcltx}
\usepackage{subfig}
\usepackage{hyperref}
\usepackage{rotating}
\usepackage{multirow}
\usepackage{hhline}
\usepackage[table]{xcolor}

\usepackage[autostyle]{csquotes}

\definecolor{gray}{rgb}{0.92,0.92,0.92}
\definecolor{dgray}{rgb}{0.75,0.75,0.75}
\definecolor{darkgray}{rgb}{0.5,0.5,0.5}
\definecolor{col1}{RGB}{213,229,255}
\definecolor{col2}{RGB}{128,179,255}
\definecolor{col3}{RGB}{42,127,255}
\definecolor{col4}{RGB}{0,85,212}
\definecolor{col5}{RGB}{0,51,128}
\definecolor{col6}{RGB}{0,34,85}

\newtheorem{theorem}{Theorem}[section]
\newtheorem{lemma}[theorem]{Lemma}
\newtheorem{corollary}[theorem]{Corollary}
\newtheorem{example}[theorem]{Example}
\newtheorem{remark}[theorem]{Remark}

\newtheorem{definition}[theorem]{Definition}

\def\norm#1{\|#1\|}

\def\R{{\mathbb R}}

\def\be{\begin{eqnarray*}}
\def\ee{\end{eqnarray*}}
\def\beq{\begin{equation}}
\def\eeq{\end{equation}}

\def\2q{\quad\quad}

\def\E{{\bf E}}

\def\R{{\mathbb R}}

\def\:{{\,:\,}}

\def\norm#1{{\left\|\,#1\,\right\|}}

\def\squote#1{{\textquoteleft\,#1\,\textquoteright}}

\def\abs#1{{\left|\,#1\,\right|}}

\def\argmin{\mathop{\rm arg\,min}}

\def\inprod#1#2{\left\langle #1,\,#2\right\rangle}

\def\itb{\begin{itemize}}
\def\ite{\end{itemize}}
\def\bit{\begin{itemize}}
\def\eit{\end{itemize}}
\def\dom{\hbox{dom}}

\begin{document}

\title{An extended asymmetric sigmoid with Perceptron(SIGTRON) for imbalanced linear classification}

\author{Hyenkyun Woo\thanks{Logitron X, \; hyenkyun@gmail.com} }

\author{Hyenkyun~Woo\thanks{H. Woo is with Logitron X, Republic of Korea, e-mail:hyenkyun@gmail.com.}}

\markboth{Preprint ,~Vol.~x, No.~x, \today}{H. Woo: An extended asymmetric sigmoid with Perceptron(SIGTRON) for imbalanced linear classification}

\maketitle

\begin{abstract}
This article presents a new polynomial parameterized sigmoid called SIGTRON, which is an extended asymmetric sigmoid with Perceptron, and its companion convex model called SIGTRON-imbalanced classification (SIC) model that employs a virtual SIGTRON-induced convex loss function. In contrast to the conventional $\pi$-weighted cost-sensitive learning model, the SIC model does not have an external $\pi$-weight on the loss function but has internal parameters in the virtual SIGTRON-induced loss function. As a consequence, when the given training dataset is close to the well-balanced condition considering the (scale-)class-imbalance ratio, we show that the proposed SIC model is more adaptive to variations of the dataset, such as the inconsistency of the (scale-)class-imbalance ratio between the training and test datasets. This adaptation is justified by a skewed hyperplane equation, created via linearization of the gradient satisfying $\epsilon$-optimal condition. 

Additionally, we present a quasi-Newton optimization(L-BFGS) framework for the virtual convex loss by developing an interval-based bisection line search. Empirically, we have observed that the proposed approach outperforms (or is comparable to) $\pi$-weighted convex focal loss and balanced classifier LIBLINEAR(logistic regression, SVM, and L2SVM) in terms of test classification accuracy with $51$ two-class and $67$ multi-class datasets. In binary classification problems, where the scale-class-imbalance ratio of the training dataset is not significant but the inconsistency exists, a group of SIC models with the best test accuracy for each dataset (TOP$1$) outperforms LIBSVM(C-SVC with RBF kernel), a well-known kernel-based classifier.
\end{abstract}

\begin{IEEEkeywords}
Extended exponential function, extended asymmetric sigmoid function, SIGTRON, Perceptron, logistic regression, large margin classification, imbalanced classification, class-imbalance ratio, scale-class-imbalance ratio, line search, Armijo condition, Wolfe condition, quasi-Newton, L-BFGS   
\end{IEEEkeywords}

\section{Introduction}
Learning a hyperplane from the given training dataset ${\cal D} = \{ (x_l,y_l) \in \R^s \times \{ -1,+1 \} \;|\; l=1,2,\cdots,d \}$ is the most fundamental process while we characterize the inherent clustered structure of the test dataset. The main hindrance of the process is that the dataset is imbalanced~\cite{fernandez18,johnson19,oksuz21}, and there is an inconsistency between the training and test datasets~\cite{bach06}. To address the class-imbalance problem, one can apply under-sampling or over-sampling strategies while preserving the cluster structure of dataset ${\cal D}$~\cite{he09}. In addition to the class imbalance problem, there is another imbalance problem, known as scale imbalance, between the positive class $\{ x_i  \;|\; i \in {\cal N}_+ \}$ and the negative class $\{x_j  \;|\; j \in {\cal N}_- \}$ of ${\cal D}$~\cite{oksuz21}. Here, ${\cal N}_+ = \{ \ell \;|\; y_{\ell} = +1 \; \hbox{ and } \; (x_{\ell},y_{\ell}) \in {\cal D} \}$ and ${\cal N}_- = \{ \ell \;|\; y_{\ell} = -1\; \hbox{ and }\; (x_{\ell},y_{\ell}) \in {\cal D} \}$. Considering scale and class imbalance simultaneously, we generalize the class-imbalance ratio $r_c = \frac{\abs{{\cal N}_+}}{\abs{{\cal N}_-}}$ to the scale-class-imbalance ratio  
\begin{equation}\label{imbrsc}
r_{sc} = r_c\sqrt{\frac{\norm{x_p^c}^2+1}{\norm{x_n^c}^2+1}},
\end{equation} 
where $x_p^c = \frac{1}{\abs{{\cal N}_+}}\sum_{i \in {\cal N}_+} x_i$ is the centroid of the positive class of ${\cal D}$ and $x_n^c = \frac{1}{\abs{{\cal N}_-}}\sum_{j \in {\cal N}_-} x_j$ is the centroid of the negative class of ${\cal D}$. When $r_{sc}=1$ and $\norm{x_p^c-x_n^c} > a$ where $a$ is a positive constant, we say that ${\cal D}$ is {\it well-balanced} with respect to $r_{sc}$. See \cite{he09,oksuz21} for more details on imbalanced problems appearing in classification. It is worth mentioning that we can improve the scale imbalance through various normalization methods~\cite{garcia15,ioffe15}. In our experiments, we use the well-organized datasets in~\cite{delgado14}. They are normalized in each feature dimension with mean zero and variance one. This dimension-wise standardization is the first step of batch normalization~\cite{ioffe15} in deep learning. By way of the mean zero normalization, the scale-class-imbalanced ratio is always better than the class-imbalance ratio. The details are following.
\begin{theorem}\label{rsc-standard}
For $\{ x_i \;|\; i \in {\cal N}_+ \}$ and $\{ x_j \;|\; j \in {\cal N}_- \}$, let us consider mean zero normalization, $\sum_{i \in {\cal N}_+} x_i + \sum_{j \in {\cal N}_-} x_j=0$. Here $x_p^c \not= 0$ and $x_n^c \not= 0$. Then, if $r_{sc} =1$, we have $r_c=1$. On the other hand, if $r_{sc} \not=1$, we have  $\abs{r_{sc} - 1} < \abs{r_c-1}$.
\end{theorem}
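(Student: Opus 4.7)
The plan is to reduce everything to a single algebraic relation between $r_c^2$ and $r_{sc}^2$ using the mean zero normalization, and then read off both claims from elementary monotonicity arguments.

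First I would exploit the hypothesis $\sum_{i \in {\cal N}_+} x_i + \sum_{j \in {\cal N}_-} x_j = 0$ by dividing through appropriately: rewriting the sum as $\abs{{\cal N}_+}\, x_p^c + \abs{{\cal N}_-}\, x_n^c = 0$ yields $x_p^c = -\tfrac{1}{r_c}\, x_n^c$, hence $\norm{x_p^c}^2 = \norm{x_n^c}^2 / r_c^2$. Setting $t = \norm{x_n^c}^2 > 0$ (using $x_n^c \neq 0$), the definition \eqref{imbrsc} then simplifies to
\begin{equation*}
r_{sc}^2 = r_c^2 \cdot \frac{t/r_c^2 + 1}{t+1} = \frac{t + r_c^2}{t+1}.
\end{equation*}
This single identity is the workhorse of the whole proof.

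From it I would immediately deduce
\begin{equation*}
r_{sc}^2 - 1 = \frac{r_c^2 - 1}{t+1},
\end{equation*}
which settles the first claim: $r_{sc} = 1$ forces $r_c^2 = 1$, and since $r_c > 0$ we get $r_c = 1$. For the second claim I would rewrite the identity as
\begin{equation*}
r_c^2 - r_{sc}^2 = \frac{t(r_c^2 - 1)}{t+1},
\end{equation*}
and split into the cases $r_c > 1$ and $0 < r_c < 1$. In the first case both displayed right-hand sides are positive, so $1 < r_{sc} < r_c$, giving $\abs{r_{sc}-1} = r_{sc}-1 < r_c-1 = \abs{r_c-1}$. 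In the second case both are negative, so $r_c < r_{sc} < 1$, and subtracting from $1$ again yields $\abs{r_{sc}-1} < \abs{r_c-1}$.

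No step is really an obstacle; the only thing to be careful about is that the strict inequality $t > 0$ (equivalently $x_n^c \neq 0$, which is given) is what makes the contraction factor $1/(t+1)$ strictly less than one, and that $r_c > 0$ is needed to pass from squared quantities back to the quantities themselves when comparing signs and magnitudes.
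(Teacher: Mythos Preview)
Your proof is correct and follows essentially the same route as the paper: both derive from the mean-zero condition the identity $\norm{x_n^c}^2(r_{sc}^2-1)=r_c^2-r_{sc}^2$ (your displays with $t=\norm{x_n^c}^2$ are just a rearrangement of this), and both conclude from there. You simply spell out the final case analysis that the paper compresses into ``thus the results are obtained.''
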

\begin{proof}
From $\sum_{i \in {\cal N}_+} x_i = - \sum_{j \in {\cal N}_-} x_j$, we have $r_c x_p^c = - x_n^c$ and thus $r_c= \frac{\norm{x_n^c}}{ \norm{x_p^c}}$. By using \eqref{imbrsc} and $r_c= \frac{\norm{x_n^c}}{ \norm{x_p^c}}$, we have $\norm{x_n^c}^2(r_{sc}^2-1) = r_c^2-r_{sc}^2$ and thus the results are obtained.
\end{proof}
In addition to the mean zero condition, the variance one condition implies that $r_c \left( \frac{\sum_{i \in {\cal N}_+} x_i^2}{\abs{{\cal N}_+}} -1 \right) + \left(\frac{\sum_{j \in {\cal N}_-} x_j^2}{\abs{{\cal N}_-}} -1 \right) = 0$. By assuming that $\abs{{\cal N}_+}$ and $\abs{{\cal N}_-}$ are sizable, we have
$r_c (Var({\cal N}_+) + (x_p^c)^2 -1) = (1-Var({\cal N}_-) - (x_n^c)^2)$
where $Var({\cal N}_+) = \frac{1}{\abs{{\cal N}_+}} \sum_{i \in {\cal N}_+ } (x_i - x_p^c)^2$ and $Var({\cal N}_-) = \frac{1}{\abs{{\cal N}_-}} \sum_{j \in {\cal N}_- } (x_j - x_n^c)^2$. If the number of positive instances is sufficiently larger than the number of negative instances, then, from the dimension-wise standardization, we have $x_p^c \gtrsim 0$ and $Var({\cal N}_+) \approx 1$. On the contrary, $x_n^c$ is located away from $0$, and thus $Var({\cal N}_-)$ is relatively small. We have the opposing situation in the case $r_c \ll 1$. We notice that when $r_c$ is sufficiently far from $1$, the minority class is tiny and far from coordinate zero. It is hard to quantify the variance of the minority class. Thus, the corresponding decision boundary is somehow ambiguous. Even though a deep neural network is trained to the limit with batch normalization, it is hard to obtain reasonable classification results. This phenomenon is known as minority collapse~\cite{fang21}. This article is mainly interested in the case where $r_{sc}$ is not severe, but there exists an inconsistency of $r_{sc}$ between the training and test datasets~\cite{bach06}.

In cost-sensitive learning~\cite{bach06,cui19,fernandez18,he09,lin20}, we usually use the $\pi$-weighted loss function to learn a hyperplane decision boundary considering $r_c$ of the training dataset. Empirically, $\pi$-weight is set to be in proportion to the inverse of the size of each class. Additionally, it could be determined by the cross-validation~\cite{lin20}. However, it is still unclear how $\pi$-weight is related to $r_c$ of the training dataset. \cite{cui19} uses the inverse of the effective number of instances, known as the expected volume of instances, for the $\pi$-weight. For the dimension-wise standardized dataset~\cite{delgado14}, the inverse of the proposed $r_{sc}$~\eqref{imbrsc} is a possible option for $\pi$-weight. For imbalanced object detection, \cite{lin20} has proposed the famous non-convex focal loss function, which uses the power function of the probability related to the opposite class (and $\pi$-weight) on the logistic loss. They also suggested a $\pi$-weighted convex focal loss function based on standard logistic loss. Regarding test classification accuracy, the convex approach is comparable to the non-convex focal loss. Additionally, see~\cite{masnadi08,reid11} for designing large-margin loss functions and the corresponding $\pi$-weighted cost-sensitive loss functions based on Bregman-divergence.

This article shows the connection between the class-imbalance ratio (including the scale-class-imbalance ratio, the ratio of the effective number of instances, etc) and the loss function via a skewed hyperplane equation. Instead of the margin of each instance, the statistic of the margin distribution~ \cite{elsayed18,jiang18}, i.e., mean-margin, is considered while we describe the skewed hyperplane equation. One of the primary goals of this article is to suggest not a $\pi$-weighted loss function but a new class of adjustable convex loss functions by way of virtualization for novel cost-sensitive learning. For this purpose, we design SIGTRON(extended asymmetric sigmoid with Perceptron) and a novel cost-sensitive learning model, the SIGTRON-imbalanced classification (SIC) model. The proposed SIC model has internal polynomial parameters in the virtual SIGTRON-induced loss function instead of the external $\pi$-weight on the loss function. By the inherent internal structure of the parameters, when $r_{sc}$ (or $r_c$) of the training dataset is not severe, the SIC model is more adaptable to inconsistencies of (scale-)class-imbalance ratio between training and test datasets. We demonstrate the effectiveness of our model by conducting experiments on $51$ two-class datasets. For more information, refer to Figure \ref{fig:X} (a) in Section \ref{Sec5A} and Table \ref{2classimb} in Appendix \ref{appB} for $r_{sc}$ and $r_c$ of the binary class datasets.  

Before we go further, we present the definition of virtualization. {\it The virtual convex loss function $\ell$ is defined as a function satisfying $\nabla\ell = -p$ for the given probability function $p$.} For instance, the gradient of the logistic loss function is the negative canonical sigmoid (probability function) $\nabla \ell(x) = -\sigma(-x)$. Various variants of soft-max function and canonical sigmoid function, such as sparsemax~\cite{martins16}, sphericalmax~\cite{ollivier15}, Taylormax~\cite{brebisson16}, high-order sigmoid function~\cite{woo19a}, and other diverse activation functions~\cite{dubey22} are in the category of gradients of virtual loss functions. SIGTRON, which we will introduce in the coming Section \ref{secLogitronSIGTRON}, is also in this category. Although, in this article, we only consider S-shaped probability functions~\cite{murphy12,sigmoidwiki} for virtualization, they could be expandable to general functions. A typical example is the quasi-score function, of which the virtual loss function is the negative quasi-likelihood function defined by the mean and variance relation~\cite{mccullagh89,wedderburn74,woo19c}. In addition, virtual loss functions with monotonic gradient function include various ready-made adjustable convex loss functions, such as tunable loss function~\cite{liao18,sypherd19}, high-order hinge loss~\cite{dubey22,fan08,janocha17,lin02}, and Logitron~\cite{woo19a}. 

The other main goal of this article is to introduce a quasi-Newton optimization framework for cost-sensitive learning, including the proposed SIC model and $\pi$-weighted convex focal loss~\cite{lin20}. We name the presented optimization framework {\it quasi-Newton(L-BFGS) optimization for virtual convex loss}. In quasi-Newton(L-BFGS) optimization, the Hessian matrix is approximated by a rank-two symmetric and positive definite matrix, and its inverse matrix is algorithmically computed by simple two-loop iterations with $m$ recent elements. It generally uses sophisticated cubic-interpolation-based line search to keep positive definiteness. This line search heavily depends on the evaluation of loss function~\cite{nocedal06,schmidt05}. Instead of the well-known cubic-interpolation-based line search, we propose a relatively simple but accurate line search method, the interval-based bisection line search. With the relatively accurate strong Wolfe stopping criterion, the proposed method performs better than L-BFGS with the cubic-interpolation-based line search regarding test classification accuracy. Please refer to the details in Figure \ref{fig:losslessperformance}. Although we only consider virtual convex loss functions, which are smooth and bounded below, the proposed optimization framework could be extended to deep neural networks where non-convexity of loss functions is not severe~\cite{mutschler20}. It is worth mentioning that with the exact line search condition, the nonlinear conjugate gradient utilizes a larger subspace for Hessian matrix approximation~\cite{hager05,hager06,nocedal06}.

We justify the performance advantage of the proposed approach, the cost-sensitive SIC model and {\it quasi-Newton(L-BFGS) for virtual convex loss}, with $118$ various classification datasets~\cite{delgado14,woo19a}. For binary classification problems($51$ datasets) where $r_{sc}$ of training datasets is not severe, the test classification accuracy of TOP$1$(a group of SIC models having the best test accuracy for each dataset) is $83.96\%$, which is $0.74\%$ better than that of kernel-based LIBSVM(C-SVS with RBF kernel) and $0.16\%$ better than that of TOP$1$-FL of $\pi$-weighted convex focal loss. Within linear classifiers, the MaxA$(\alpha_+=\frac{7}{8},\alpha_-=\frac{8}{7})$ SIC model shows better performance than the $\pi$-weighted convex focal loss~\cite{lin20} and the balanced classifier LIBLINEAR(logistic regression, SVM, and L2SVM)~\cite{fan08,galli22} in terms of test classification accuracy with all $118$ datasets. Last but not least, the proposed SIC model with $(\alpha_+,\alpha_-)$-matrix parameters is a useful tool for understanding the structure of each dataset. For example, see Figure \ref{fig:example1} {\it spectf} dataset for $r_{sc}$-inconsistency between training and test datasets. The training dataset of {\it spectf} is well-balanced, but the test dataset of it is imbalanced~\cite{kurgan01}. For the multi-label structure, refer to Figure \ref{fig:pattern} (e) {\it energy-y1} dataset and (f) {\it energy-y2} dataset. They have the same input but opposite outputs, such as heating load vs. cooling load~\cite{tsanas12}.

\subsection{Notation}
We briefly review the extended exponential function~\cite{woo19b} and the extended logarithmic function~\cite{woo17}. For information on the Tweedie statistical distribution and beta-divergence based on extended elementary functions, refer to the following citations: \cite{amari16,jorgensen97,woo17,woo19b,woo19c}.

For notational convenience, let $\R_{\ge a} = \{ x \in \R \;|\; x \ge a \}$ and $\R_{> a} = \{ x \in \R \;|\; x > a\}$, where $a \in \R$. In the same way, $\R_{\le a}$ and $\R_{< a}$ are set. Then the extended logarithmic function $\ln_{\alpha,c}$~\cite{woo17}  and the extended exponential function $\exp_{\alpha,c}$~\cite{woo19b} are defined as follows: 
\begin{eqnarray}
\label{exlog}
\ln_{\alpha,c}(x) &=& \left\{\begin{array}{l} \ln\left(\frac{x}{c}\right), \quad\;\; \hbox{ if } \alpha=1 \\ c_{\alpha} - x_{\alpha}, \quad \hbox{ otherwise } 
\end{array}\right.\\
\label{exexp}
\exp_{\alpha,c}(x) &=&
\left\{\begin{array}{l} 
c\exp(x), \qquad\qquad\;\; \hbox{ if } \alpha=1\\
c(1 - \frac{x}{c_{\alpha}})^{1/(1-\alpha)} , \quad \hbox{ otherwise }
\end{array}\right.
\end{eqnarray}
where $c>0$, $\alpha \ge 0$, $x_{\alpha} = \frac{1}{\alpha-1} x^{1-\alpha}$ and $c_{\alpha} = \frac{1}{\alpha-1} c^{1-\alpha}$. Note that we also explain $c_{\alpha}$ as $(c)_{\alpha}$ for notational convenience. In the case where $c=1$, the extended functions $\exp_{\alpha,c}$ and $\ln_{\alpha,c}$ become the generalized exponential and logarithmic functions~\cite{amari16,ding10,sypherd19}, respectively. For the effective domains of $\ln_{\alpha,c}$ and $\exp_{\alpha,c}$, see \cite{rockafellar70,woo17,woo19b}. In this article, we only consider restricted domains of $\ln_{\alpha,c}$ and $\exp_{\alpha,c}$ in Table \ref{table7}. Within the restricted domains in Table \ref{table7}, irrespective of $\alpha_i$ and $c_i$, we have $\ln_{\alpha_2,c_2}(\exp_{\alpha_1,c_1}(x)) \in \R$ for all $x \in int(\dom(\exp_{\alpha_1,c_1}))$. This property defines the extended logistic loss, including high-order sigmoid function~\cite{woo19a}. Here, $int(E)$ means the largest open interval contained in an interval $E \subseteq \R$. Note that $\inprod{x}{y} = \sum_{l=1}^s x_ly_l$ for $x,y \in \R^s$, $\norm{x} = \sqrt{\inprod{x}{x}}$, and $\norm{x}_{\infty} = \max_l \abs{x_l}$. Additionally, $\abs{\cdot}$ means the absolute value or the size of a discrete set, depending on the context in which it is used. 

\begin{table}[h]
\centerline{\begin{tabular}{c||c|c|c}
 & $\alpha = 1$  & $0 \le \alpha < 1$  & $\alpha>1$  \\ \hline 
 $\dom (\ln_{\alpha,c})$ & $\R_{>0}$  & $\R_{\ge0}$  & $\R_{>0}$ \\ 
$\dom (\exp_{\alpha,c})$   & $\R$  & $\R_{\ge c_{\alpha}}$ & $\R_{< c_{\alpha}}$ \\ 
$\dom (\sigma_{\alpha,c})$   & $\R$  & $\R_{\le -c_{\alpha}}$ & $\R_{\ge -c_{\alpha}}$ \\ 
\end{tabular}}
\caption{Restricted domains of the extended logarithmic function $\ln_{\alpha,c}(x)$~\eqref{exlog}, the extended exponential function  $\exp_{\alpha,c}(x)$~\eqref{exexp}, and the extended asymmetric sigmoid function $\sigma_{\alpha,c}(x)$~\eqref{xSigF}. Here $c>0$. Note that, when $\alpha>1$, $\dom(\sigma_{\alpha,c})$ is a closed set $\R_{\ge -c_{\alpha}}$ in the sense $\sigma_{\alpha,c}(-c_{\alpha}) = \lim_{x \searrow -c_{\alpha}}\frac{c}{c + \exp_{\alpha,c}(-x)} = 0 \in \R$. }\label{table7}
\end{table}

\subsection{Cost-sensitive Learning framework and Overview\label{overviewandc}}
Let us start with the cost-sensitive learning model
\begin{equation}\label{costsensitive}
h^* = \argmin_{h \in {\cal H}}\; \sum_{i \in {\cal N}_+ } L_+(h(x_i)) +  \sum_{j \in {\cal N}_- } L_-(-h(x_j)) + \frac{\lambda}{2} Reg(w), 
\end{equation}
where ${\cal H} = \{ \inprod{w}{\cdot} +b \;|\; (w,b) \in \R^s \times \R \}$ and $Reg$ is an appropriate regularizer for $w$, such as $\norm{w}^2$. Note that $L_+$ and $L_-$ are virtualized large-margin convex loss functions that are differentiable and lower-bounded. This article only considers the case that $-\nabla L_{\pm} = p_{\pm} \in [0,1]$ are probability functions. For more information on cost-sensitive learning, please refer to~\cite{bach06,fernandez18,garcia15,he09,johnson19,lin20,oksuz21}.

In Section \ref{secLogitronSIGTRON}, we study the various properties of SIGTRON, such as smoothness, inflection point, probability-half point, and parameterized mirror symmetry of inflection point with respect to the probability-half point. SIGTRON is used to exemplify the probability functions $p_{\pm}$ in \eqref{costsensitive}. The details of the SIC model are discussed in Section \ref{seclossfunction}, where the virtual SIGTRON-induced loss function is introduced. In Section \ref{chardecision}, we derive the skewed hyperplane equation of the cost-sensitive learning model~\eqref{costsensitive}, based on the SIGTRON probability function. In Section \ref{SIGTRON-line-search}, we demonstrate the usefulness of {\it quasi-Newton optimization(L-BFGS) for virtual convex loss}, which includes the interval-based bisection line search. With this optimization method, we solve two different types of cost-sensitive learning models: the SIC model and the $\pi$-weighted convex focal loss. The performance evaluation of the proposed framework, i.e., the SIC model and {\it quasi-Newton optimization(L-BFGS) for virtual convex loss}, is done in Section \ref{secexperiment}. We compare the proposed framework with the imbalanced classifier $\pi$-weighted convex focal loss~\cite{lin20}, the balanced classifier LIBLINEAR(logistic regression, SVM, and L2SVM)~\cite{fan08,galli22}, and the nonlinear classifier LIBSVM(C-SVC with RBF kernel)~\cite{chang11}. The conclusion is given in Section \ref{seccon}.

\begin{figure*}[t]
\centering
\includegraphics[width=3.45in]{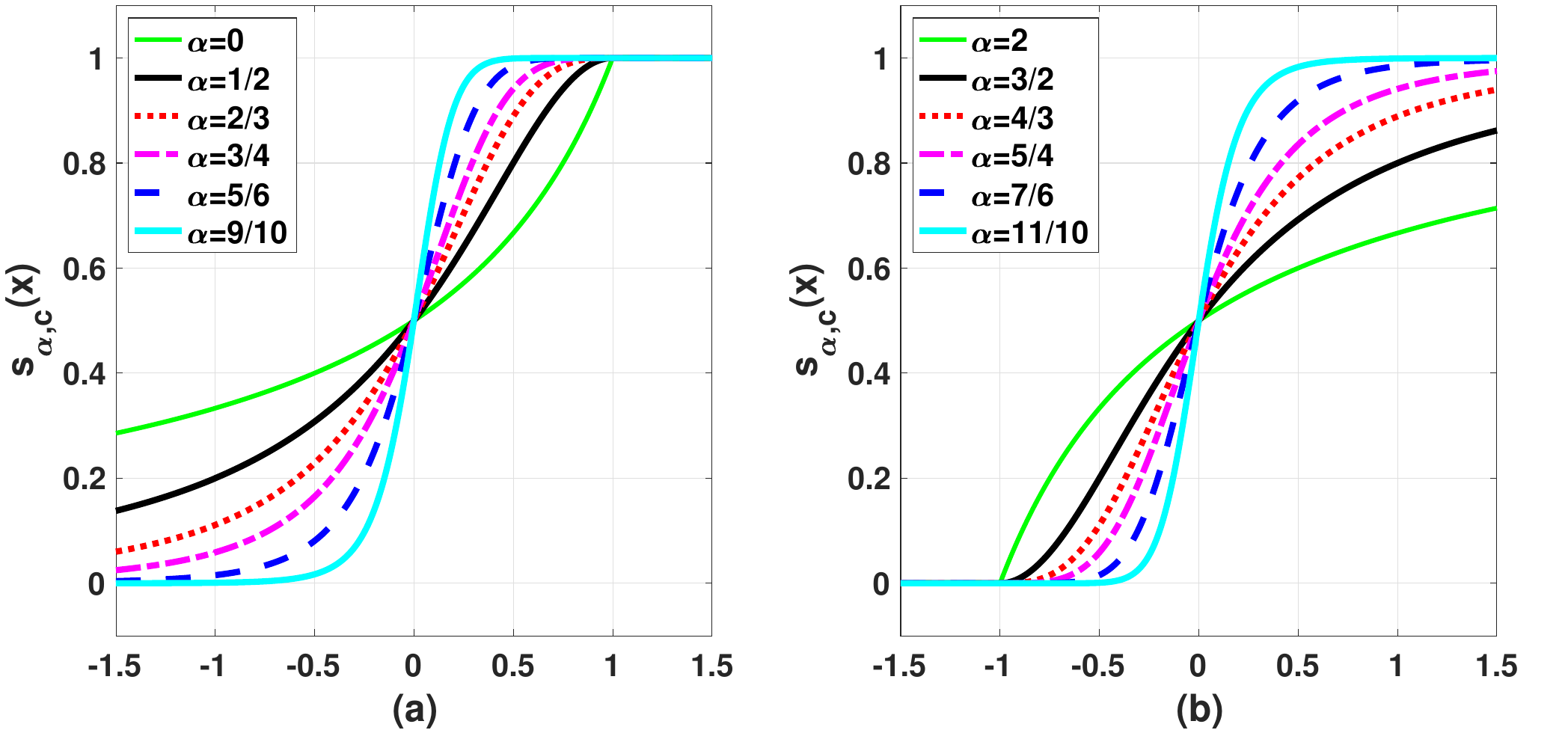}
\includegraphics[width=3.45in]{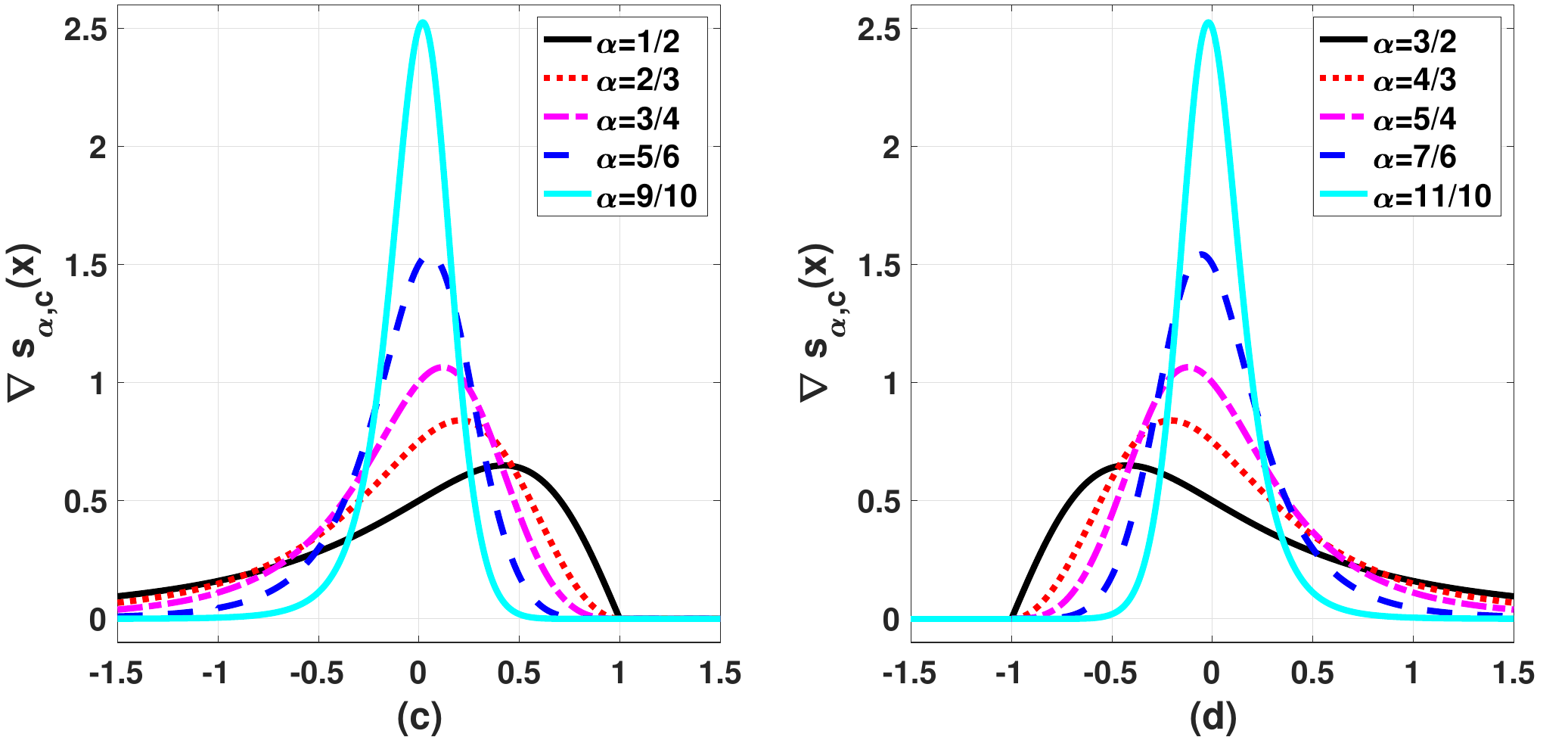}
\caption{(a) SIGTRON $s_{\alpha,c}(x)$ with $\alpha = \frac{k-1}{k}<1$ ($k=1,2,3,4,6,10$) and $c_{\alpha}=-1$. Note that $s_{\alpha,c}(x) = 1$, if $x \ge -c_{\alpha}$. (b) SIGTRON $s_{\alpha,c}(x)$ with $\alpha = \frac{k+1}{k}>1$ ($k=1,2,3,4,6,10$) and $c_{\alpha}=1$. Note that $s_{\alpha,c}(x) =  0$, if $x \le -c_{\alpha}$. (c) $\nabla s_{\alpha,c}(x)$ with $\alpha = \frac{k-1}{k}<1$ ($k=2,3,4,6,10$) and $c_{\alpha}=-1$. Note that $\nabla s_{\alpha,c}(x) = 0$, if $x \ge -c_{\alpha}$. The inflection point $x_{ip}$ is getting close to $-c_{\alpha} = 1$ as $\alpha \rightarrow 0$. (d) $\nabla s_{\alpha,c}(x)$ with $\alpha = \frac{k+1}{k}>1$ ($k=2,3,4,6,10$) and $c_{\alpha}=1$. Note that $\nabla s_{\alpha,c}(x) =  0$, if $x \le -c_{\alpha}$. The inflection point $x_{ip}$ is getting close to $-c_{\alpha} = -1$ as $\alpha \rightarrow 2$. }
\label{fig:img5}
\end{figure*}

\section{SIGTRON: extended asymmetric sigmoid with Perceptron~\label{secLogitronSIGTRON}}
In this Section, we define SIGTRON using the extended exponential function $\exp_{\alpha,c}$~\eqref{exexp}. We then study various properties of SIGTRON, such as its smoothness, inflection point, probability-half point, and parameterized mirror symmetry of the inflection point with respect to the probability-half point.
 
\begin{definition}[{\bf SIGTRON}]\label{Def:SIGTRON}
Let $ \alpha \ge 0$, $c >0$, and $x \in \R$. Then SIGTRON(extended asymmetric sigmoid with Perceptron) is defined as
\begin{equation}\label{sgtron}
s_{\alpha,c}(x) = \left\{
\begin{array}{l}
\sigma_{\alpha,c}(x) \qquad\hbox{ if } x \in \dom(\sigma_{\alpha,c})\\
\sigma_P(x) \qquad\;\; \hbox{ otherwise, }   
\end{array}
\right.
\end{equation}
where $\sigma_{\alpha,c}$ is the extended asymmetric sigmoid function
\begin{equation}\label{xSigF}
\sigma_{\alpha,c}(x) = \frac{c}{c+\exp_{\alpha,c}(-x)}.
\end{equation}
Here, $\exp_{\alpha,c}$ is the extended exponential function~\eqref{exexp} and 
$\sigma_P$ is the Perceptron function(or Heaviside function): $\sigma_P(x) = 1$, if $x \ge 0$ and $0$, otherwise. The restricted domains of $\exp_{\alpha,c}$ and $\sigma_{\alpha,c}$ are defined in Table \ref{table7}. Note that $s_{\alpha,c}(x) \in [0,1]$ is a non-decreasing continuous function defined on $\R$ with $\lim_{x \rightarrow -\infty} s_{\alpha,c}(x)= 0$ and $\lim_{x \rightarrow +\infty} s_{\alpha,c}(x)= 1$. Additionally, $s_{\alpha,c}(0) = 1/2$, irrespective of $\alpha$ and $c_{\alpha}$. Here $x_{ph}=0$ is denoted as the probability-half point. When $\alpha=1$, $s_{\alpha,c}(x) = \frac{1}{1+\exp(-x)}$ is the canonical sigmoid function, irrespective of $c$.
\end{definition}
Note that SIGTRON with $c=1$ becomes the canonical sigmoid function as $\abs{\alpha-1} \rightarrow 0$, since the extended exponential function with $c=1$ is the generalized exponential function. However, SIGTRON with $\abs{c_{\alpha}}=1$ becomes a smoothed Perceptron as $\abs{\alpha-1} \rightarrow 0$ and $\alpha \not=1$. Refer to Figure \ref{fig:img5} for additional information.

In the following Theorem \ref{th:derivative-bernoulli-like}, we characterize the smoothness of SIGTRON~\eqref{sgtron} depending on $\alpha$. The proof of Theorem \ref{th:derivative-bernoulli-like} is given in Appendix \ref{proofofnthderivative}. 
 \begin{theorem}\label{th:derivative-bernoulli-like}
For $n=1,2,3,\cdots$, when $\alpha \in \left(1-\frac{1}{n}, 1+ \frac{1}{n} \right)$, the $n$-th derivative of $s_{\alpha,c}$ is continuous on $\R$ and expressed as
\begin{equation}\label{nth-derivative_s}
\nabla^n s_{\alpha,c}(x) = \left\{\begin{array}{l} \sum_{k=1}^n F_{n,k}(x) \quad\hbox{ if } x \in \dom(\sigma_{\alpha,c}) \\\; 0 \qquad\quad\qquad\quad\; \hbox{ otherwise,} \end{array}\right.
\end{equation}
where 
\begin{equation}\label{nth-derivative_x}
F_{n,k}(x) =  A_{n,k}\left(\frac{1}{1-\alpha}\right) \frac{c \exp^{k-n(1-\alpha)}_{\alpha,c}(-x) }{ (c + \exp_{\alpha,c}(-x))^{k+1} },
\end{equation}
and
\begin{equation}\label{Ank}
A_{n,k}\left(\frac{1}{1-\alpha}\right) = (-1)^{n+k}k! \sum_{l=0}^n {\left[n \atop l\right]}{\left\{ l \atop k \right\}} (\alpha-1)^{n-l}.
\end{equation}
Here, $\left[ n \atop l \right]$ is the Stirling number of the first kind~~\cite{graham94} with the recurrence equation $\left[ n \atop l \right] = (n-1)\left[ n-1 \atop l \right] + \left[ n-1 \atop l-1 \right]$, where $n,l\ge1$. $\left\{ l \atop k \right\}$ is the Stirling number of the second kind with the recurrence equation $\left\{ l \atop k \right\} = k\left\{ l-1 \atop k \right\} + \left\{ l-1 \atop k-1 \right\}$, where $l,k\ge1$. 
\end{theorem}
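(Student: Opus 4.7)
The plan is to reduce the claim to a computation about the pure power-sigmoid $P(v):=1/(1+v^\beta)$ with $\beta:=1/(1-\alpha)$, by introducing the change of variable $v=1+x/c_\alpha$. On $\dom(\sigma_{\alpha,c})$ one has $\exp_{\alpha,c}(-x)=cv^\beta$, and hence $s_{\alpha,c}(x)=P(v)$, while $d/dx=(1/c_\alpha)\,d/dv$. Consequently $\nabla^n s_{\alpha,c}(x) = c_\alpha^{-n}P^{(n)}(v)$, and the prefactor $c_\alpha^{-n}=c^{n(\alpha-1)}(\alpha-1)^n$ will absorb cleanly into the claimed expression for $F_{n,k}$ after the combinatorial calculation below.

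To compute $P^{(n)}(v)$, I would introduce $t=\ln v$, so that $P=\sigma(-\beta t)$ with $\sigma$ the canonical sigmoid. The Euler/Cayley operator $D:=v\,d/dv=d/dt$ satisfies the classical identity $v^n\,d^n/dv^n=D(D-1)\cdots(D-n+1)$, and expanding this falling factorial in powers of $D$ introduces the Stirling numbers of the first kind,
\[
v^n\,\frac{d^n}{dv^n}=\sum_{l=0}^n(-1)^{n-l}\left[n\atop l\right]D^l.
\]
For the second kind, I would invoke the derivative formula
\[
\sigma^{(l)}(y)=\sum_{k=1}^l(-1)^{l+k}k!\left\{l\atop k\right\}\sigma(y)(1-\sigma(y))^k,
\]
which follows in one line from Fa\`a di Bruno applied to $f(z)=1/(1+z)$ composed with $z=e^{-y}$, once one observes that every $z^{(j)}=(-1)^jz$ collapses the Bell polynomial to $(-1)^nz^kB_{n,k}(1,\ldots,1)=(-1)^nz^k\left\{n\atop k\right\}$. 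Applying $D^l$ to $\sigma(-\beta t)$ pulls out $(-\beta)^l$, and substituting $\sigma(-\beta t)=1/(1+v^\beta)$ and $1-\sigma(-\beta t)=v^\beta/(1+v^\beta)$ gives
\[
P^{(n)}(v) = \sum_{k=1}^n (-1)^{n+k}k!\Bigl(\sum_{l=0}^n (-1)^l\left[n\atop l\right]\!\left\{l\atop k\right\}\beta^l\Bigr)\frac{v^{\beta k-n}}{(1+v^\beta)^{k+1}}.
\]
The identity $(-1)^l\beta^l=(\alpha-1)^{-l}$ shows the inner sum equals $(\alpha-1)^{-n}A_{n,k}(1/(1-\alpha))$, and combining with $c_\alpha^{-n}(\alpha-1)^{-n}=c^{n(\alpha-1)}$ recovers the claimed $F_{n,k}$ after rewriting $v^{\beta k-n}/(1+v^\beta)^{k+1}$ back in terms of $\exp_{\alpha,c}(-x)$.

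For the continuity of $\nabla^n s_{\alpha,c}$ on all of $\R$ under $\alpha\in(1-1/n,1+1/n)$, off $\dom(\sigma_{\alpha,c})$ the function is constant so its derivatives vanish, and the only issue is the boundary $x=-c_\alpha$. With $u=\exp_{\alpha,c}(-x)$, each $F_{n,k}$ is a constant multiple of $u^{k+n(\alpha-1)}/(c+u)^{k+1}$, which behaves like $u^{k+n(\alpha-1)}$ as $u\to 0^+$ when $\alpha<1$ and like $u^{n(\alpha-1)-1}$ as $u\to\infty$ when $\alpha>1$. The first vanishes for every $k\ge 1$ iff $\alpha>1-1/n$, and the second iff $\alpha<1+1/n$, which is precisely the hypothesis. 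A short induction on $n$ then glues the one-sided limits at $x=-c_\alpha$: if $\nabla^{n-1}s_{\alpha,c}$ is continuous on $\R$ and vanishes beyond the boundary, the difference-quotient defining $\nabla^n s_{\alpha,c}(-c_\alpha)$ exists and equals $0$.

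The main obstacle will be the combinatorial bookkeeping: tracking signs across the two Stirling conventions, threading the substitutions $x\leftrightarrow v\leftrightarrow t$, and matching the result against the precise form of $A_{n,k}$. Once the Cayley-operator viewpoint is adopted, however, the simultaneous appearance of Stirling numbers of both kinds is essentially forced — the first by the expansion of $D^{\underline n}$ and the second by Fa\`a di Bruno on the sigmoid — so the proof reduces to careful coefficient comparison together with the boundary analysis above.
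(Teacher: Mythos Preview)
Your argument is correct, and it takes a genuinely different route from the paper's proof. The paper establishes the formula for $\nabla^n s_{\alpha,c}$ by induction on $n$: after the same substitution $y=1+x/c_\alpha$, $a=1/(1-\alpha)$, it differentiates the inductive hypothesis once, reads off a two-term recurrence
\[
B_{n+1,k}(a)=(-ka)\,B_{n,k-1}(a)+(ka-n)\,B_{n,k}(a),
\]
and then verifies this recurrence directly from the defining recurrences of $\left[n\atop l\right]$ and $\left\{l\atop k\right\}$. Your proof instead derives the closed form in one shot: the Euler operator identity $v^n\,d^n/dv^n=\sum_l(-1)^{n-l}\left[n\atop l\right]D^l$ forces the first-kind Stirling numbers, and Fa\`a di Bruno applied to $\sigma(-\beta t)=1/(1+e^{\beta t})$ forces the second-kind ones. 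This is more conceptual and explains \emph{why} the product $\left[n\atop l\right]\left\{l\atop k\right\}$ must appear, whereas the paper's induction is more elementary but treats the Stirling structure as something to be checked rather than derived. Both proofs handle the continuity at $x=-c_\alpha$ identically, by reading off the exponent of $u=\exp_{\alpha,c}(-x)$ in each $F_{n,k}$ and observing that the constraints $k+n(\alpha-1)>0$ (as $u\to0^+$) and $n(\alpha-1)-1<0$ (as $u\to\infty$) reduce exactly to $\alpha\in(1-1/n,\,1+1/n)$.
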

For the computation of the Stirling number of the first kind and the second kind, we need additional notational conventions: $\left\{ 0 \atop 0 \right\} = \left[ 0 \atop 0 \right] = 1$ and $\left\{ a \atop 0 \right\} = \left[ a \atop 0  \right]=0$ for $a \ge 1$. We have $\left\{ a \atop 1 \right\} = 1$ and $\left[ a \atop 1 \right] = (a-1)!$ with $0!=1$, for $a\ge1$. Additionally, we note that $\left\{ a \atop b \right\} = \left[ a \atop b \right] = 0$ if $b>a \ge 0$. For more details, refer to \cite{graham94}. 

Theorem \ref{th:derivative-bernoulli-like} states that for any $\alpha \in (0,2)$, the gradient of $s_{\alpha,c}(x)$ is given by $c^{\alpha-1}(1-s_{\alpha,c}(x))^{\alpha}(s_{\alpha,c}(x))^{2-\alpha}$, where $x \in \R$. Check Figure \ref{fig:img5} (c) and (d) for a visual representation of $\nabla s_{\alpha,c}(x)$. The information regarding the inflection point of $s_{\alpha,c}$ is provided in Corollary \ref{inflectionpoint}. Additionally, we have observed that the function $\nabla s_{\alpha,c}(x)$ takes the form of the beta distribution $\beta_{D}(x;\alpha) = \frac{6}{\Gamma(3-\alpha)\Gamma(1+\alpha)}x^{2-\alpha}(1-x)^{\alpha}$, where $x\in[0,1]$. The cumulant distribution of the beta distribution, which has an adjustable parameter $\alpha$, can also be classified as an S-shaped sigmoid function.
\begin{corollary}\label{inflectionpoint}
For $\alpha \in (0,2)$, the inflection point $x_{ip}$ of SIGTRON $s_{\alpha,c}$ exists in the interval $int(\dom(\sigma_{\alpha,c}))$ and is expressed as 
\begin{equation*}
x_{ip} = -\ln_{\alpha,c}\left(\frac{c\alpha}{2-\alpha}\right). 
\end{equation*}
When $\alpha=1$, the inflection point is the probability-half point, that is, $x_{ip} = x_{hp}=0$. 
\end{corollary}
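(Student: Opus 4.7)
The plan is to locate the inflection point by setting the second derivative of $s_{\alpha,c}$ to zero, using the compact expression for $\nabla s_{\alpha,c}$ that follows from Theorem \ref{th:derivative-bernoulli-like} (namely $\nabla s_{\alpha,c}(x)=c^{\alpha-1}(1-s_{\alpha,c}(x))^{\alpha}(s_{\alpha,c}(x))^{2-\alpha}$ on $\dom(\sigma_{\alpha,c})$). Writing $u=s_{\alpha,c}(x)$ and differentiating once more using the chain rule gives
\begin{equation*}
\nabla^2 s_{\alpha,c}(x)=c^{\alpha-1}\,u^{1-\alpha}(1-u)^{\alpha-1}\bigl[(2-\alpha)(1-u)-\alpha u\bigr]\,\nabla s_{\alpha,c}(x).
\end{equation*}
For $\alpha\in(0,2)$ and $x\in int(\dom(\sigma_{\alpha,c}))$ we have $u\in(0,1)$ and $\nabla s_{\alpha,c}(x)>0$, so the prefactor $c^{\alpha-1}u^{1-\alpha}(1-u)^{\alpha-1}\nabla s_{\alpha,c}(x)$ is strictly positive. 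Consequently $\nabla^2 s_{\alpha,c}(x)=0$ reduces to the linear equation $(2-\alpha)(1-u)-\alpha u=0$, whose unique solution is $u=(2-\alpha)/2$.

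Next I would invert the relation $\sigma_{\alpha,c}(x_{ip})=(2-\alpha)/2$. From the definition \eqref{xSigF},
\begin{equation*}
\frac{c}{c+\exp_{\alpha,c}(-x_{ip})}=\frac{2-\alpha}{2}\qquad\Longleftrightarrow\qquad \exp_{\alpha,c}(-x_{ip})=\frac{c\alpha}{2-\alpha}.
\end{equation*}
Since $\alpha\in(0,2)$, the right-hand side lies in $\R_{>0}$, which is contained in $\dom(\ln_{\alpha,c})$ by Table \ref{table7}, so applying $\ln_{\alpha,c}$ to both sides (using that it is the inverse of $\exp_{\alpha,c}$ on these restricted domains) yields $x_{ip}=-\ln_{\alpha,c}\!\bigl(\tfrac{c\alpha}{2-\alpha}\bigr)$, as claimed. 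The case $\alpha=1$ is immediate: the formula degenerates to $-\ln(c\cdot 1/c)=0=x_{ph}$.

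It remains to verify that this $x_{ip}$ actually lies in $int(\dom(\sigma_{\alpha,c}))$ rather than on the Perceptron part. I would argue this by a monotonicity/limit argument: $\sigma_{\alpha,c}$ is continuous and strictly increasing on $\dom(\sigma_{\alpha,c})$, and a direct check using the boundary values $\exp_{\alpha,c}(c_{\alpha})=0$ (when $\alpha<1$) and $\exp_{\alpha,c}(x)\to+\infty$ as $x\nearrow c_{\alpha}$ (when $\alpha>1$) shows that $\sigma_{\alpha,c}$ sweeps $(0,1)$ over the open interior; hence the level $(2-\alpha)/2\in(0,1)$ is attained at a unique interior point, which is exactly $x_{ip}$.

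The main obstacle is essentially bookkeeping rather than any deep step: one must keep track of the two cases $\alpha<1$ and $\alpha>1$ (where the domain $\dom(\sigma_{\alpha,c})$ flips from $\R_{\le -c_{\alpha}}$ to $\R_{\ge -c_{\alpha}}$) and confirm that $\ln_{\alpha,c}$ is well-defined and acts as the inverse of $\exp_{\alpha,c}$ on the relevant range. Once that is in place, the identification of $x_{ip}$ is a short algebraic manipulation driven by the factorization of $\nabla^2 s_{\alpha,c}$.
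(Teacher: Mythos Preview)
Your argument is correct and follows the same overall strategy as the paper: compute $\nabla^2 s_{\alpha,c}$, set it to zero on $int(\dom(\sigma_{\alpha,c}))$, and solve. The difference is purely in how the second derivative is organized. The paper reads $\nabla^2 s_{\alpha,c}$ directly from the $F_{2,k}$ terms of Theorem~\ref{th:derivative-bernoulli-like}, obtaining an expression in $e=\exp_{\alpha,c}(-x)$ and reducing $\nabla^2 s_{\alpha,c}=0$ to $(2-\alpha)e=\alpha c$, whence $x_{ip}=-\ln_{\alpha,c}(c\alpha/(2-\alpha))$. You instead start from the compact identity $\nabla s_{\alpha,c}=c^{\alpha-1}(1-u)^{\alpha}u^{2-\alpha}$ with $u=s_{\alpha,c}$ (also recorded in the paper), differentiate via the chain rule, and land on the linear equation $(2-\alpha)(1-u)=\alpha u$, i.e.\ $u=(2-\alpha)/2$, which you then invert through $\sigma_{\alpha,c}$. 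The two computations are equivalent (indeed $u=c/(c+e)$ translates one into the other), but your route sidesteps the Stirling-number formula and makes the sign analysis of the prefactor transparent. You also add an explicit check that $x_{ip}\in int(\dom(\sigma_{\alpha,c}))$ via the range of $\sigma_{\alpha,c}$, which the paper leaves implicit.
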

\begin{proof} From \eqref{nth-derivative_s} and Appendix \ref{proofofnthderivative}, we know that $s_{\alpha,c} \in C^{\infty}(int(\dom(\sigma_{\alpha,c})))$ and 
$
\nabla^2 s_{\alpha,c}(x) = \frac{-\alpha c \exp_{\alpha,c}^{2\alpha-1}(-x)}{(c+\exp_{\alpha,c}(-x))^2} +  \frac{2c \exp_{\alpha,c}^{2\alpha}(-x)}{(c+\exp_{\alpha,c}(-x))^3}. 
$
Let $\alpha \not=1$, then, since $\exp_{\alpha,c}(-x) \not= 0$ for $x \in int(\dom(\sigma_{\alpha,c}))$, the inflection point $x_{ip}$ is a point satisfying 
$
x_{ip} = - \ln_{\alpha,c}\left( \frac{c\alpha}{2-\alpha} \right).
$ If $\alpha=1$, then $s_{\alpha,c}$ is the canonical sigmoid function. Thus, $x_{ip}=x_{hp} = 0$.
\end{proof}

Figure \ref{fig:img5} shows $s_{\alpha,c}$ and its derivative for various choices of $\alpha$ satisfying $\abs{\alpha -1} = \frac{1}{k}$ ($k=1,2,3,4,6,10$) and $\abs{c_{\alpha}}=1$. Note that $\nabla s_{\alpha,c}$ is not defined at $\alpha=0$ and $\alpha=2$. When $\alpha>1$, the inflection point $x_{ip}$ is getting close to $-1$ as $\alpha \rightarrow 2$. On the other hand, when $\alpha<1$, the inflection point $x_{ip}$ is getting close to $1$ as $\alpha \rightarrow 0$. 

\begin{remark}\label{remark:mirror}
SIGTRON is a general framework for replacing the S-shaped sigmoid function in diverse machine learning problems requiring adjustability of probability(or inflection point) and fixed probability-half point. For instance, refer to the skewed hyperplane equation for classification~\eqref{gradF4} and Example \ref{example:prob}. As canonical sigmoid function $\sigma(x) = \frac{1}{1+\exp(-x)}$ has a symmetric property $\sigma(x) = 1 - \sigma(-x)$, SIGTRON $s_{\alpha,c}$ also has an extended symmetric property:
\begin{equation}\label{sigmirror}
s_{\alpha,c}(x) = 1 - s_{2-\alpha,c^{-1}}(-x)
\end{equation}
where $\alpha \in [0,2]$. Also, for  $\alpha \in (0,2)$, we have $\nabla s_{\alpha,c}(x) = \nabla s_{2-\alpha,c^{-1}}(-x)$, the parameterized mirror symmetry with respect to probability-half point $x_{hp}=0$. See Figure \ref{fig:img5} (c) and (d) for examples of parameterized mirror symmetry of $\nabla s_{\alpha,c}$. It is worth commenting that the gradient of Logitron $L_{\alpha,c}$~\cite{woo19a} is also a negative probability function, of which the probability-half point depends on $\alpha$. For $\alpha \in (0,2]$, we have 
\begin{equation}\label{probX}
\nabla L_{\alpha,c}(x) = - (s_{2-\alpha,c^{-1}}(-x))^{\alpha}
\end{equation}
where the exponent $\alpha$ is an acceleration parameter of SIGTRON $s_{2-\alpha,c^{-1}}(-x)$ and \eqref{sigmirror} is used. 
\end{remark}

\begin{example}\label{example:prob}
It is well-known that it is hard to give a probability for the results of max-margin SVM classifier~\cite{lin07,platt99}. In fact, \cite{platt99} uses the canonical sigmoid function $\sigma(\gamma x+ \xi)$ to fit a probability to the classified results of the SVM. Here $\gamma$ and $\xi$ should be estimated~\cite{chang11}. Instead of fitting with the canonical sigmoid function $\sigma(\gamma x+ \xi)$, we could use SIGTRON $s_{\alpha,c}$ as a probability estimator for the results of the SVM classifier or any other classifiers having decision boundary, such as hyperplane. For this purpose, there are three steps to follow. First, we must place the probability-half point $x_{hp}$ of $s_{\alpha,c}$ at the decision boundary. Second, we should adjust $c_{\alpha}$ to place the exact probability-one point of $s_{\alpha,c}$ at a specific point, such as the maximum margin point. Finally, we only need to estimate $\alpha$ for the decreasing slope of $s_{\alpha,c}$ based on the distribution of classified results. See \cite{guo17} for the probability estimation issues in deep neural networks.
\end{example}

\section{Virtual SIGTRON-induced loss function, SIC(SIGTRON-imbalanced classification) model, and skewed hyperplane equation\label{seclossfunction}}
This Section studies the SIC model with the virtual SIGTRON-induced loss functions and the skewed hyperplane equation of the SIC model. 

\begin{definition}\label{definition-SIGTRON-induced-loss-function}
Let $\alpha \in [0,2]$, $c >0$, and $x \in \R$, then the virtual SIGTRON-induced loss function $L_{\alpha,c}^S$ is defined by the following gradient equation 
\begin{equation}\label{SIGTRON-induced-loss}
\nabla L^S_{\alpha,c}(x) = s_{\alpha,c}(x)-1,
\end{equation}
where $s_{\alpha,c}(x) -1$ is a negative probability function. By the extended symmetric property of SIGTRON in \eqref{sigmirror}, we have $s_{\alpha,c}(x) -1= -s_{2-\alpha,c^{-1}}(-x)$.
\end{definition}
We notice that an expansion of the class of Logitron loss~\eqref{probX} via virtualization is easily achieved by $\nabla {L}_{\beta,\alpha,c}(x) = - (s_{2-\alpha,c^{-1}}(-x))^{\beta}$ where $\beta>0$ is a tuning parameter which controls the location of probability-half point $x_{hp}$. Thus, the virtualized Logitron loss contains both the virtual SIGTRON-induced loss~\eqref{SIGTRON-induced-loss} and the Logitron loss~\eqref{probX}. 

\begin{lemma}\label{SIGTRON-induced-loss-Th} 
Let $\alpha \in [0,1)\cup(1,2]$ and $c > 0$. Then the virtual SIGTRON-induced loss function $L^S_{\alpha,c}$ satisfying \eqref{SIGTRON-induced-loss} has the following integral formulations: \newline
(1) Case $\alpha \in (1,2]$:
\begin{equation}\label{SIGTRON-induced-loss2}
L_{\alpha,c}^S(x) = \left\{ 
\begin{array}{l}
-c_{\alpha}F\left(1+ \frac{x}{c_{\alpha}};\alpha-1\right) + c_{\alpha}\;\; \hbox{ if } x \ge -c_{\alpha}\\
-x \hskip 1.6in \hbox{ otherwise. }
\end{array}
\right.
\end{equation}
(2) Case $\alpha \in [0,1)$:
\begin{equation}\label{SIGTRON-induced-loss1}
L_{\alpha,c}^S(x) = \left\{ 
\begin{array}{l}
c_{\alpha}F\left(1+ \frac{x}{c_{\alpha}};1-\alpha\right) - c_{\alpha} - x \;\; \hbox{ if } x \le -c_{\alpha}\\
0 \hskip 1.9in \hbox{ otherwise. } 
\end{array}
\right.
\end{equation}
Here, $F(z;b) = \int_0^{z} \frac{1}{1+t^{1/b}}dt$ with $z \in \R_{\ge 0}$ and $b>0$.
\end{lemma}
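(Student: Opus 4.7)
The plan is to integrate the defining gradient equation $\nabla L^S_{\alpha,c}(x) = s_{\alpha,c}(x) - 1$ piecewise, splitting at the boundary point $x = -c_\alpha$ where $s_{\alpha,c}$ switches between the extended asymmetric sigmoid $\sigma_{\alpha,c}$ and the Perceptron $\sigma_P$, and then determining the two integration constants by imposing continuity of $L^S_{\alpha,c}$ at that boundary.

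First I would handle the Perceptron region. By Table~\ref{table7}, when $\alpha>1$ we have $-c_\alpha<0$, so $x<-c_\alpha$ forces $\sigma_P(x)=0$, hence $\nabla L^S_{\alpha,c}(x) = -1$ and the antiderivative is $-x$ up to a constant. When $\alpha<1$ we instead have $-c_\alpha>0$, so $x>-c_\alpha$ forces $\sigma_P(x)=1$, hence $\nabla L^S_{\alpha,c}(x)=0$ and the antiderivative is a constant. These already give the second lines of \eqref{SIGTRON-induced-loss2} and \eqref{SIGTRON-induced-loss1}.

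Next, on the smooth side I would substitute the explicit formula $\exp_{\alpha,c}(-x) = c\,u^{1/(1-\alpha)}$ from \eqref{exexp} with $u = 1 + x/c_\alpha$ into
\[
s_{\alpha,c}(x) - 1 \;=\; -\frac{\exp_{\alpha,c}(-x)}{c + \exp_{\alpha,c}(-x)} \;=\; -\frac{u^{1/(1-\alpha)}}{1 + u^{1/(1-\alpha)}}.
\]
For $\alpha \in (1,2]$ the exponent $1/(1-\alpha)$ is negative, and dividing numerator and denominator by $u^{1/(1-\alpha)}$ rewrites the integrand as $-1/(1+u^{1/(\alpha-1)})$, which matches the integrand of $F(\,\cdot\,;\alpha-1)$. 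For $\alpha \in [0,1)$ the exponent is positive, and decomposing the fraction as $-1 + 1/(1+u^{1/(1-\alpha)})$ produces a linear part $-x$ together with the integrand of $F(\,\cdot\,;1-\alpha)$. In both cases the change of variable $dx = c_\alpha\,du$ converts the antiderivative into $\pm c_\alpha F(1+x/c_\alpha;|\alpha-1|)$ (with the minus in the $\alpha>1$ case and the plus, accompanying $-x$, in the $\alpha<1$ case).

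Finally, I would fix the two integration constants by continuity at $x=-c_\alpha$, where $u=0$ and $F(0;b)=0$. In the $\alpha>1$ case the Perceptron side gives $L^S_{\alpha,c}(-c_\alpha)=c_\alpha$, forcing the additive constant on the sigmoid side to be $+c_\alpha$, which reproduces \eqref{SIGTRON-induced-loss2}. In the $\alpha<1$ case the Perceptron side is identically $0$, so the constant on the sigmoid side must be $-c_\alpha$, reproducing \eqref{SIGTRON-induced-loss1}. The main obstacle is purely bookkeeping: the sign of $c_\alpha$, the sign of the exponent $1/(1-\alpha)$, and the Perceptron value of $s_{\alpha,c}$ all flip as $\alpha$ crosses $1$, so the two apparently distinct formulas must be derived from the same substitution while tracking each of these flips; once that is done, the endpoint matching is immediate because $F(0;b)=0$ in both regimes.
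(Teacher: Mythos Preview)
Your proposal is correct and follows essentially the same route as the paper: both arguments write the gradient $\nabla L^S_{\alpha,c}(x)$ explicitly via the substitution $u=1+x/c_\alpha$, recognize the resulting integrand as that of $F(\,\cdot\,;|\alpha-1|)$ (after the algebraic manipulation that flips with the sign of $1/(1-\alpha)$), and then integrate piecewise across $x=-c_\alpha$. The only cosmetic difference is that the paper integrates over a concrete interval with a distant base point and then ``removes constants,'' whereas you compute antiderivatives on each piece and pin down the constants by continuity at $x=-c_\alpha$; the content is the same.
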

\begin{proof}
(1) Case $\alpha \in (1,2]$:  From \eqref{SIGTRON-induced-loss}, we have
\begin{eqnarray*}
\nabla L_{\alpha,c}^S(x) = \left\{\begin{array}{l} 
-\frac{1}{ 1 + (1 + \frac{x}{c_{\alpha}})^{\frac{1}{\alpha-1}} }  \;\;\;\;\;\hbox{ if } x \ge -c_{\alpha} \\ 
-1 \qquad\qquad\qquad\;\; \hbox{ otherwise, }  
\end{array}\right.
\end{eqnarray*}
where $-c_{\alpha} <0$ and $1+\frac{x}{c_{\alpha}} \ge 0$. The integration of $\nabla L^S_{\alpha,c}$ becomes
\begin{eqnarray*}
L^S_{\alpha,c}(a_1) - L^S_{\alpha,c}(a_0) = \int_{a_0}^{a_1} \nabla L^S_{\alpha,c}(t)dt =  \left\{\begin{array}{l} 
-c_{\alpha}F(1+a_1/c_{\alpha};\alpha-1) + c_{\alpha} + a_0,  \;\hbox{ if } a_1 \ge -c_{\alpha} \\ 
-a_1 + a_0, \hskip 1.67in \hbox{ otherwise, }  
\end{array}\right.
\end{eqnarray*}
where we may choose $a_0 \ll -c_{\alpha}$. Then, we get the virtual SIGTRON-induced loss function \eqref{SIGTRON-induced-loss2}, after setting $a_1=x$ and removing constants. 

(2) Case $\alpha \in [0,1)$: We have
$$
\nabla L_{\alpha,c}^S(x) =  \left\{\begin{array}{l} 
\frac{1}{1 + (1+\frac{x}{c_{\alpha}})^{\frac{1}{1-\alpha}} } - 1 \quad\quad \hbox{ if } x \le -c_{\alpha}\\ 
0 \hskip 1.25in \hbox{ otherwise, }
\end{array}\right.
$$
where $-c_{\alpha} > 0$ and $1+\frac{x}{c_{\alpha}} \ge 0$. Thus, we get
\begin{eqnarray*}
L^S_{\alpha,c}(a_0) - L^S_{\alpha,c}(a_1) = \int_{a_1}^{a_0} \nabla L^S_{\alpha,c}(t)dt =  \left\{\begin{array}{l} 
-c_{\alpha}F(1+ \frac{a_1}{c_{\alpha}};1-\alpha) + c_{\alpha} + a_1 \quad \hbox{ if } a_1 \le -c_{\alpha} \\ 
0 \hskip 2.1in \hbox{ otherwise, }
\end{array}\right.
\end{eqnarray*}
where $a_0 > -c_{\alpha}$. Let $a_1=x$, then we get the virtual SIGTRON-induced loss function \eqref{SIGTRON-induced-loss1}. 
\end{proof}

\begin{figure}[t]
\centering
\includegraphics[width=4in]{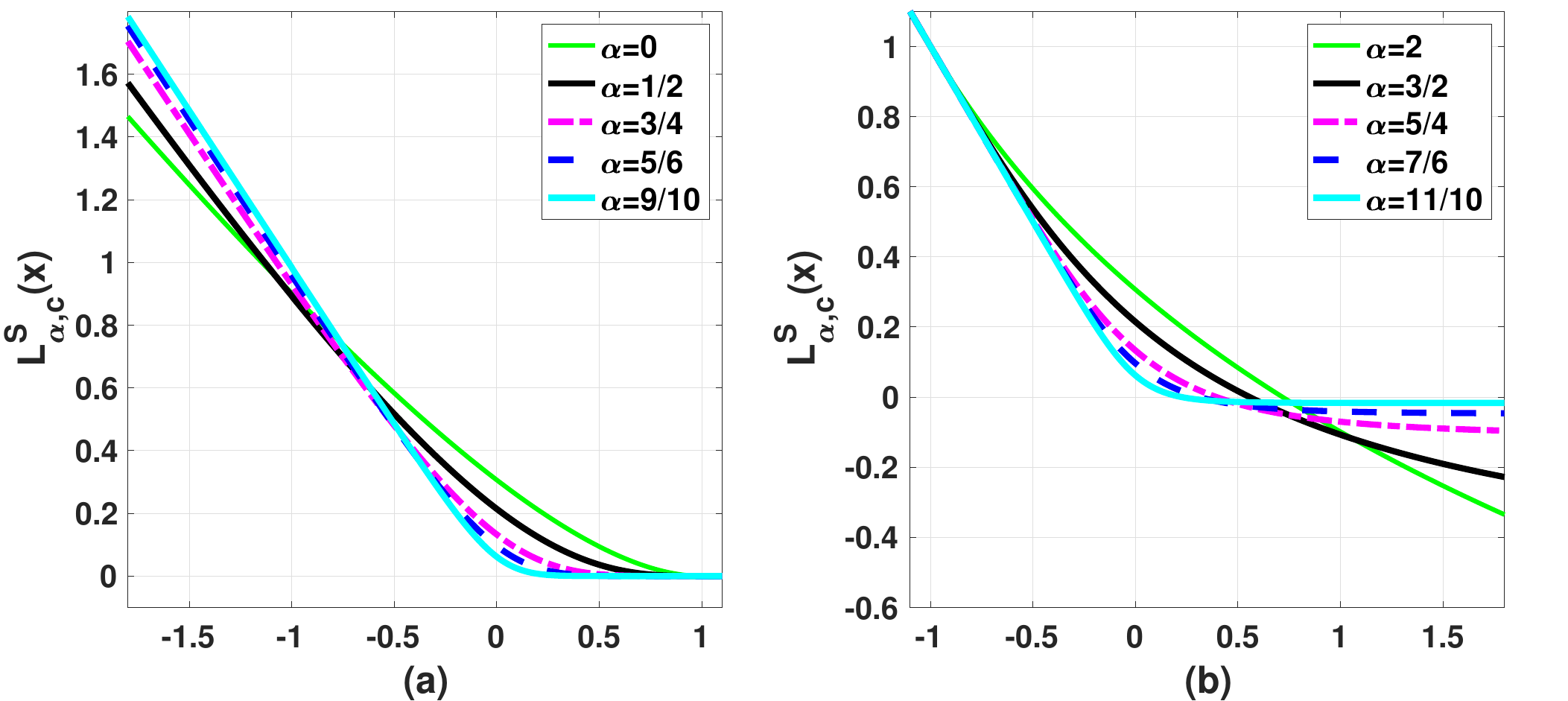}
\caption{Graphs of the virtual SIGTRON-induced loss function $L^S_{\alpha,c}$ for (a) $\alpha= \frac{k-1}{k}$ with $c_{\alpha}=-1$ and (b) $\alpha = \frac{k+1}{k}$ with $c_{\alpha}=1$. Here $k=1,2,4,6$, and $10$. In the case of $k=1,2,4,6$, $L_{\alpha,c}^S$ has a closed-form expression. See Example \ref{F-remark}.} 
\label{fig:SIGTRON}
\end{figure}

In Figure \ref{fig:SIGTRON}, we present the virtual SIGTRON-induced loss $L_{\alpha,c}^S(x)$ with $\abs{c_{\alpha}}=1$ and  $\abs{\alpha-1} = \frac{1}{k}$. Here $k=1,2,4,6$ and $10$ are the polynomial orders of $\exp_{\alpha,c}$. For $k=10$, $L_{\alpha,c}^S(x)$ is computed directly by \eqref{SIGTRON-induced-loss2} and \eqref{SIGTRON-induced-loss1}. For $k=1,2,4,6$, $L_{\alpha,c}^S(x)$ is expressed in a closed form by virtue of Example \ref{F-remark}. As we increase the polynomial order $k = \frac{1}{\abs{\alpha-1}}$, i.e. $\alpha \rightarrow 1$ and $\abs{c_{\alpha}}=1$, $L_{\alpha,c}^S(x)$ is getting close to the smoothed Perceptron loss function~\cite{vaswani19x}, not to the logistic loss.

\begin{example}\label{F-remark}
We make a list of $F(a;1/k)$ for $k=1,\cdots,6$. 
\begin{itemize}
\item $k=1:$ $F(a;1/1) = \ln(1+a)$
\item $k=2:$ $F(a;1/2) = \arctan(a)$
\item $k=3:$ $F(a;1/3) = \frac{1}{6}\log\left( 1 + \frac{3a}{a^2-a+1} \right)+\frac{1}{\sqrt{3}}\arctan\left(\frac{2a-1}{\sqrt{3}}\right) - \frac{1}{\sqrt{3}}\arctan\left(-\frac{1}{\sqrt{3}}\right)$
\item $k=4:$ $F(a;1/4) = \frac{1}{4\sqrt{2}}\log\left( 1 + \frac{2\sqrt{2}a}{a^2-\sqrt{2}a+1} \right) + \frac{1}{2\sqrt{2}} \arctan\left(1+\sqrt{2} a \right) - 2 \arctan(1 - \sqrt{2}a )$
\item $k=5:$ $F(a;1/5) =  \frac{(\sqrt{5} -1)}{20}\log(2a^2+(\sqrt{5}-1)a+2) - \frac{(\sqrt{5}+1)}{20}\log(2a^2-(\sqrt{5}+1)a+2) + \frac{\log(1+a)}{5}\\~~~~~~~~~~~~~~~~~~~~~~~~~~~~
 - \frac{\sqrt{10-2\sqrt{5}}}{10}\arctan\left(\frac{-4a + \sqrt{5} +1}{\sqrt{10-2\sqrt{5}}}\right) + \frac{\sqrt{10+2\sqrt{5}}}{10}\arctan\left(\frac{4a + \sqrt{5} - 1}{\sqrt{10+2\sqrt{5}}}\right)  - \frac{(\sqrt{5} -1)\log 2 + (\sqrt{5}+1)\log 2}{20}\\~~~~~~~~~~~~~~~~~~~~~~~ +\frac{\sqrt{10-2\sqrt{5}}}{10}\arctan\left(\frac{\sqrt{5}+1}{\sqrt{10-2\sqrt{5}}}\right) - \frac{\sqrt{10+2\sqrt{5}}}{10}\arctan\left(\frac{\sqrt{5} -1}{\sqrt{10 + 2\sqrt{5}}}\right)$
\item $k=6:$ $F(a;1/6) =  \sqrt{3}\log\left(\frac{a^2 + \sqrt{3}a+1}{a^2-\sqrt{3}a+1}\right) + \arctan( \sqrt{3} + 2a)/6 -\arctan( \sqrt{3} - 2a)/6 + \arctan(a)/3$
\end{itemize}
\end{example}

\subsection{Learning a hyperplane with SIC model\label{chardecision}}
Let us first consider the cost-sensitive convex minimization model~\eqref{costsensitive} to find a hyperplane $h^*(x) = 0$ from the given training dataset ${\cal D}$. The following is the realization of \eqref{costsensitive} through the virtual SIGTRON-induced loss function~\eqref{SIGTRON-induced-loss} and $\ell_2$-regularizer.
\begin{equation}\label{linmin}
h^* = \argmin_{ h \in {\cal H}}\;  {\cal F}(h)
\end{equation}
where ${\cal H} = \{ \inprod{w}{\cdot}+b \;|\; (w,b) \in \R^s \times \R \}$ and
\begin{equation}\label{lossf}
{\cal F}(h) = \sum_{i \in {\cal N}_+ } L^S_{\alpha_+,c_+}(h(x_i)) +  \sum_{j \in {\cal N}_- } L^S_{\alpha_-,c_-}(-h(x_j)) + \frac{\lambda}{2} \norm{w}_2^2.
\end{equation}
This minimization problem~\eqref{linmin} with~\eqref{lossf} is named as the SIGTRON-imbalanced classification(SIC) model. In the following example, we introduce a feature of the SIC model which is similar to the SVM.
\begin{example}\label{exactcase}
Let us assume that the training dataset ${\cal D}$ is separable, $\alpha_+, \alpha_- \in [0,1)$, $\lambda=0$, and $\norm{w^*}=1$. When ${\cal F}(h^*) = 0$, from Lemma \ref{SIGTRON-induced-loss-Th}, we have $h^*(x_i) \ge - (c_+)_{\alpha_+}$ for all $x_i \in {\cal N}_+$ and $h^*(x_j) \le (c_-)_{\alpha_-}$ for all $x_j \in {\cal N}_-$. Let $x_p^m$ be a point satisfying $h^*(x_p^m) = \min_{i \in {\cal N}_+} h^*(x_i)$ and $x_n^m$ be a point satisfying $h^*(x_n^m) = \max_{j \in {\cal N}_-} h^*(x_j)$. Then, if we set $-(c_+)_{\alpha_+} = h^*(x_p^m)$ and $(c_-)_{\alpha_-} = h^*(x_n^m)$, we have the following skewed hyperplane equation
\begin{equation}\label{SVM-like-skewed-hyperplane}
\left\{\; x \in \R^s \;|\; \inprod{w^*}{x - \frac{x_p^m + x_n^m}{2}} = \frac{(c_+)_{\alpha_+}-(c_-)_{\alpha_-}}{2} \;\right\}.
\end{equation}
Here, when $(c_+)_{\alpha_+} = (c_-)_{\alpha_-}$, we have the max-margin hyperplane $\{ x \in \R^s \;|\; \inprod{w^*}{x - \frac{x_p^m + x_n^m}{2}} = 0 \}$.
\end{example}
 In the max-margin region where ${\cal F}(h^*)=0$ with $\lambda=0$, $r_c$ does not affect the location of the hyperplane. We also notice that $\pi$-weight on the loss function is meaningless in this region. As observed in Example \ref{exactcase}, margin parameters $(c_+)_{\alpha_+}$ and $(c_-)_{\alpha_-}$ of the SIC model~\eqref{linmin} are mainly relevant to the location of the (skewed) hyperplane. See \cite{byrd19,ji20,soudry18} for related issues in deep learning where the max-margin region exists at infinity.

From now on, we are interested in $h^*$, satisfying the following $\epsilon$-optimal condition
\begin{equation}\label{firstoptc}
\norm{\nabla {\cal F}(h^*)}_{\infty} \le \epsilon,
\end{equation}
where ${\cal F}$ is defined in \eqref{lossf}. This $\epsilon$-optimal condition~\eqref{firstoptc} is used as a gradient-based stopping criterion while we do numerical experiments in Section \ref{secexperiment}. The gradient of ${\cal F}(h^*)$ becomes
\begin{equation}\label{gradF}
\nabla {\cal F}(h^*) = \sum_{i \in {\cal N}_+ } - p_+(h^*(x_i))\left[\begin{array}{c} x_i \\ 1 \end{array}\right] +  \sum_{j \in {\cal N}_- } p_-(-h^*(x_j))  \left[\begin{array}{c} x_j \\ 1 \end{array}\right] + \lambda w^*
\in [-\epsilon {\bf 1},+\epsilon {\bf 1}]
\end{equation}
where $p_{+} = -\nabla L_{\alpha_+,c_+}^S, p_{-} = -\nabla L_{\alpha_-,c_-}^S  \in (0,1)$, and ${\bf 1}$ is all one vector in $\R^{s+1}$. Let us assume that $p_{\pm}$ is twice differentiable and $\nabla p_{\pm}(0) \not= 0$. Then by linearization of $p_{\pm}$ at $0$~\cite{elsayed18}, we have $p_{\pm}(a) = p_{\pm}(0) + a \nabla p_{\pm}(0) + D_{p_{\pm}}(a | 0)$ where $D_{p_{\pm}}(a|0) = \int_{0}^{a} \nabla^2p_{\pm}(y)(a-y)dy$ and thus, in terms of $b^*$, \eqref{gradF} becomes
\begin{eqnarray*}
\sum_{i \in {\cal N}_+ } \left(p_{+}(0) + h^*(x_i) \nabla p_{+}(0) + D_{p_{+}}(h^*(x_i) | 0)\right) - \sum_{j \in {\cal N}_- }  \left(p_{-}(0) - h^*(x_j) \nabla p_{-}(0) + D_{p_{-}}(-h^*(x_j) | 0)\right)  \in [-\epsilon,+\epsilon].
\end{eqnarray*}
This equation simplifies to
\begin{equation}\label{gradF2}
r_c\left( p_{+}(0) + h^*(x_p^c)\nabla p_{+}(0) + E_+ \right) - \left( p_{-}(0) - h^*(x_n^c) \nabla p_{-}(0) + E_- \right) \in I_{\epsilon} 
\end{equation}
where $E_+ =  \frac{\sum_{i \in {\cal N}_+}D_{p_{+}}(h^*(x_i) | 0)}{\abs{{\cal N}_+ }}$, $E_- = \frac{\sum_{j \in {\cal N}_-}D_{p_{-}}(-h^*(x_j) | 0)}{\abs{{\cal N}_- }}$, and $I_{\epsilon} = [\frac{-\epsilon}{\abs{{\cal N}_-}}, \frac{+\epsilon}{\abs{{\cal N}_-}}]$. Since $h^*(x) =\inprod{w^*}{x}+b^*$, we have an expression below explaining $b^*$ from the $\epsilon$-optimal condition~\eqref{firstoptc}
\begin{equation}\label{gradF3}
\inprod{w^*}{A} +b^* \in -E,
\end{equation}
where $A = \frac{r_c \nabla p_+(0) x_p^c + \nabla p_-(0) x_n^c}{ r_c \nabla p_+(0) + \nabla p_-(0)}$ and $E =  \frac{(r_c p_+(0) - p_-(0)) + (r_cE_+ - E_-) +  I_{\epsilon}}{ r_c \nabla p_+(0) + \nabla p_-(0) }$. Now, the skewed hyperplane equation for $h^*(x)=0$ is expressed as
\begin{equation}\label{gradF4}
\{\; x \in \R^s \;|\; \inprod{w^*}{x-A} \in E \;\}.
\end{equation}
Through this skewed hyperplane equation, we could somehow understand a mysterious connection between the hyperplane decision boundary and the loss function. Depending on the skewness level $E$, the hyperplane decision boundary $h^*(x)=0$ may not be located between $x_p^c$ and $x_n^c$. The effect of $E$, however, can be discarded in a particular dataset structure. The details are following.  
\begin{example}\label{sym1}
Let us assume that the given dataset ${\cal D}$ has a symmetric structure with $r_c=1$. That is, there is a one-to-one correspondence between the positive dataset ${\cal N}_+$ and the negative dataset ${\cal N}_-$ with respect to the hyperplane $h^*(x) = 0$. For any $x_i \in \R^s$ where $i \in {\cal N}_+$, there is an unique $x_j \in \R^s$ where $j \in {\cal N}_-$, such that $h^*(x_i) \approx -h^*(x_j)$. The opposite is also true. If we additionally assume that $p_-(x)=p_+(x)$ then we obtain $E_+ \approx E_-$. Thus, the skewed hyperplane equation~\eqref{gradF4} becomes 
\begin{equation}\label{hyperEqx}
\left\{ x \in \R^s \;|\; \inprod{w^*}{x-A} \approx 0 
\right\}
\end{equation}
where $A = \frac{x_p^c+x_n^c}{2}$. 
\end{example}
By using a physically symmetric and separable dataset, \cite{lyu21} analyzes the structure of the Leaky ReLU two-layer neural network in the max-margin region at infinity. Please refer to \cite{vardi23} for a review of the implicit bias, including max-margin region at infinity, in machine learning including deep neural networks. In the following Theorem, we summarize the skewed hyperplane equation obtained by the SIC model~\eqref{linmin} under the condition $\abs{r_c E_+ - E_-} \ll 1$.

\begin{theorem}\label{thseparable}
Let $\norm{x_p^c-x_n^c}>a$ where $a>0$, $0< h^*(x_p^c)  < 1$, and $0< - h^*(x_n^c) < 1$, $\alpha_{\pm} \in [0,1)\cup(1,2]$. Additionally, assume that $\abs{r_c E_+ - E_-} \ll 1$. Then the skewed hyperplane equation~\eqref{gradF4} becomes
\begin{equation}\label{centerB}
\left\{\; x \in \R^s \;|\; \inprod{w^*}{ x - \left( \frac{ r_{c} c_+^{\alpha_+-1}x_p^c + c_-^{\alpha_--1}x_n^c}{r_{c} c_+^{\alpha_+-1} + c_-^{\alpha_- -1}} \right)}  \approx \frac{2(r_{c}-1)}{r_{c} c_+^{\alpha_+-1} +  c_-^{\alpha_--1}} \; \right\}.
\end{equation}
If $r_{c}=1$ then \eqref{gradF3} is expressed as $\inprod{w^*}{(1-\eta) x_p^c + \eta x_n^c} + b^* \approx 0$ and the signed distance of $x_p^c$ to the hyperplane $h^*(x)=0$ is approximately given as 
\begin{equation}\label{simplemargin}
\frac{h^*(x_p^c)}{\norm{w^*}} \approx \eta \norm{x_p^c-x_n^c} cos(\theta_+) 
\end{equation}
where $\eta = \frac{ c_-^{\alpha_--1}}{c_+^{\alpha_+-1} + c_-^{\alpha_- -1}} \in (0,1)$ and $cos(\theta_+) = \inprod{\frac{w^*}{\norm{w^*}}}{\frac{x_p^c-x_n^c}{\norm{x_p^c-x_n^c}}}>0$. In the same way, for $x_n^c$, we have $\frac{h^*(x_n^c)}{\norm{w^*}} \approx (\eta-1)\norm{x_p^c-x_n^c}cos(\theta_+)$.
\end{theorem}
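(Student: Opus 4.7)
The plan is to specialize the general skewed hyperplane equation \eqref{gradF4} to the SIC model by evaluating the probability functions $p_\pm$ and their derivatives at the origin in closed form, then substituting to recognize the barycentre and slack terms displayed in \eqref{centerB} and \eqref{simplemargin}.

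First, I would identify $p_\pm$ for the SIC model. Since $\nabla L^S_{\alpha,c}(x) = s_{\alpha,c}(x)-1$ by Definition~\ref{definition-SIGTRON-induced-loss-function}, we have $p_\pm = 1 - s_{\alpha_\pm,c_\pm}$, and the probability-half property of SIGTRON gives $p_\pm(0) = 1/2$. The gradient identity $\nabla s_{\alpha,c}(x) = c^{\alpha-1}(1-s_{\alpha,c}(x))^{\alpha}(s_{\alpha,c}(x))^{2-\alpha}$ from Theorem~\ref{th:derivative-bernoulli-like}, evaluated at $x=0$ where $s_{\alpha,c}(0)=1/2$, yields $\nabla s_{\alpha,c}(0) = c^{\alpha-1}/4$, hence $\nabla p_\pm(0) = -c_\pm^{\alpha_\pm-1}/4$. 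These are the only SIGTRON-specific inputs; the remainder relies only on the abstract structure of \eqref{gradF4}.

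Next, I would substitute these values into the formulas for $A$ and $E$. The common factor $-1/4$ cancels in both the numerator and the denominator of $A$, producing the weighted barycentre $A = (r_c c_+^{\alpha_+-1} x_p^c + c_-^{\alpha_--1} x_n^c)/(r_c c_+^{\alpha_+-1} + c_-^{\alpha_--1})$ appearing in \eqref{centerB}. For $E$, the dominant numerator term is $r_c p_+(0) - p_-(0) = (r_c-1)/2$ once the hypothesis $\abs{r_c E_+ - E_-} \ll 1$ and the smallness of $I_\epsilon$ are invoked; dividing by $r_c \nabla p_+(0) + \nabla p_-(0) = -(r_c c_+^{\alpha_+-1} + c_-^{\alpha_--1})/4$ collapses $E$ to the constant displayed on the right-hand side of \eqref{centerB}, up to the controlled remainders.

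Finally, I would specialize to $r_c = 1$. The right-hand side of \eqref{centerB} vanishes, so $\inprod{w^*}{x - A} \approx 0$ on the hyperplane; writing $A = (1-\eta) x_p^c + \eta x_n^c$ with $\eta = c_-^{\alpha_--1}/(c_+^{\alpha_+-1} + c_-^{\alpha_--1}) \in (0,1)$ recovers the stated form of \eqref{gradF3}. To obtain \eqref{simplemargin}, I would compute $h^*(x_p^c)/\norm{w^*} = \inprod{w^*}{x_p^c - A}/\norm{w^*}$, use the algebraic identity $x_p^c - A = \eta(x_p^c - x_n^c)$, and factor out $\norm{x_p^c - x_n^c}$ via the definition of $\cos(\theta_+)$; the parallel identity $x_n^c - A = (\eta-1)(x_p^c - x_n^c)$ delivers the signed distance for $x_n^c$. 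The main delicacy is sign bookkeeping through the division by $r_c \nabla p_+(0) + \nabla p_-(0)$, which is negative, together with checking that the hypotheses $0 < h^*(x_p^c) < 1$ and $0 < -h^*(x_n^c) < 1$ force $\cos(\theta_+) > 0$ via the geometric assumption $\norm{x_p^c - x_n^c} > a$; once these signs are tracked the remaining steps are routine.
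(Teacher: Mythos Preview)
Your proposal is correct and matches the paper's approach: the paper states Theorem~\ref{thseparable} without a separate proof, treating it as the direct specialization of the general framework \eqref{gradF}--\eqref{gradF4} to the SIGTRON loss, and your write-up makes that substitution explicit by computing $p_\pm(0)=\tfrac12$ and $\nabla p_\pm(0)=-c_\pm^{\alpha_\pm-1}/4$ from Definition~\ref{definition-SIGTRON-induced-loss-function} and the gradient formula following Theorem~\ref{th:derivative-bernoulli-like}, then plugging into $A$ and $E$. The only point to watch is the sign when dividing by the negative quantity $r_c\nabla p_+(0)+\nabla p_-(0)$, which you already flag.
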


 \begin{figure*}[t]
\centering
\includegraphics[width=3.15in]{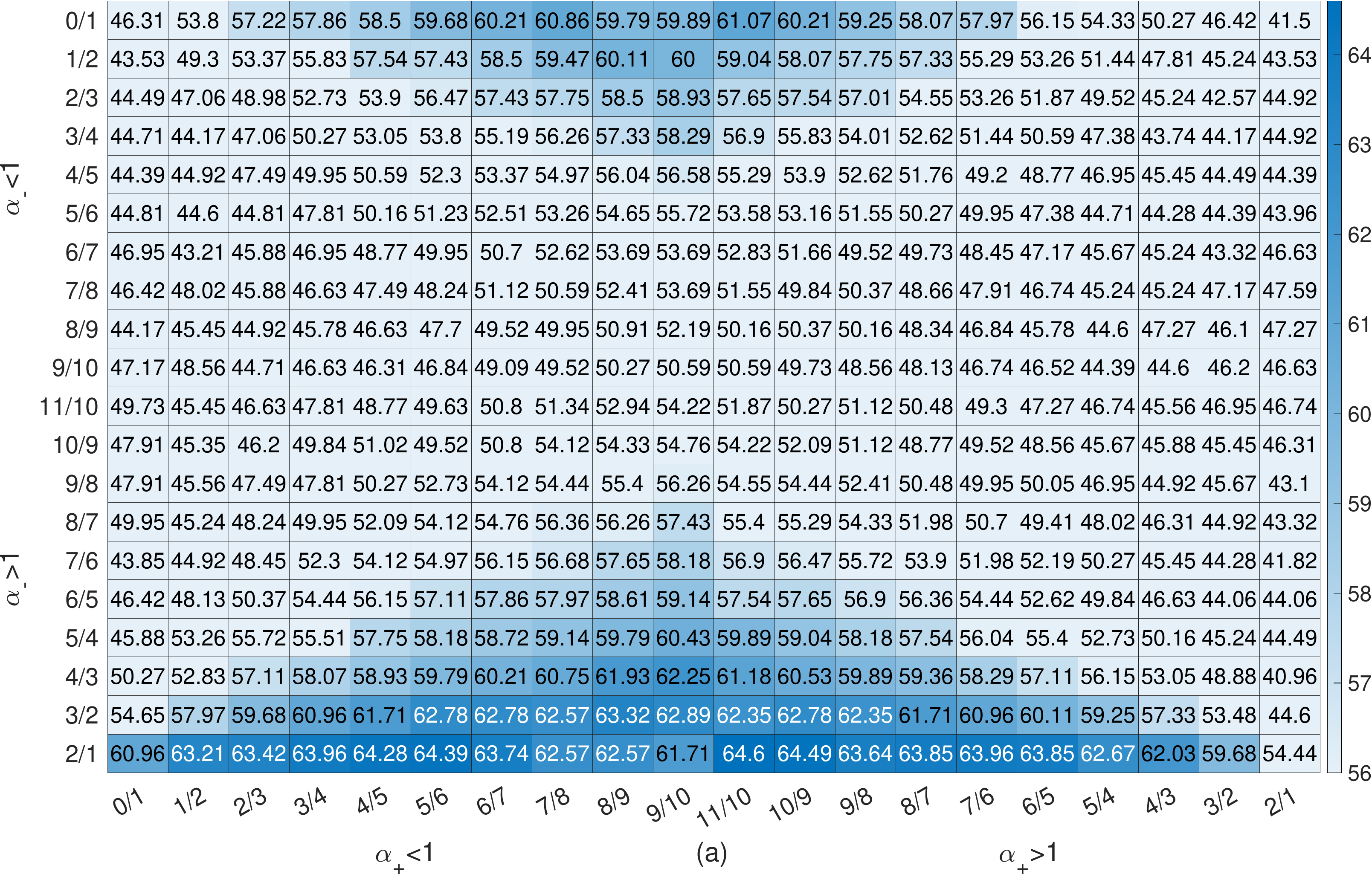} \hskip 0.25in
\includegraphics[width=3.15in]{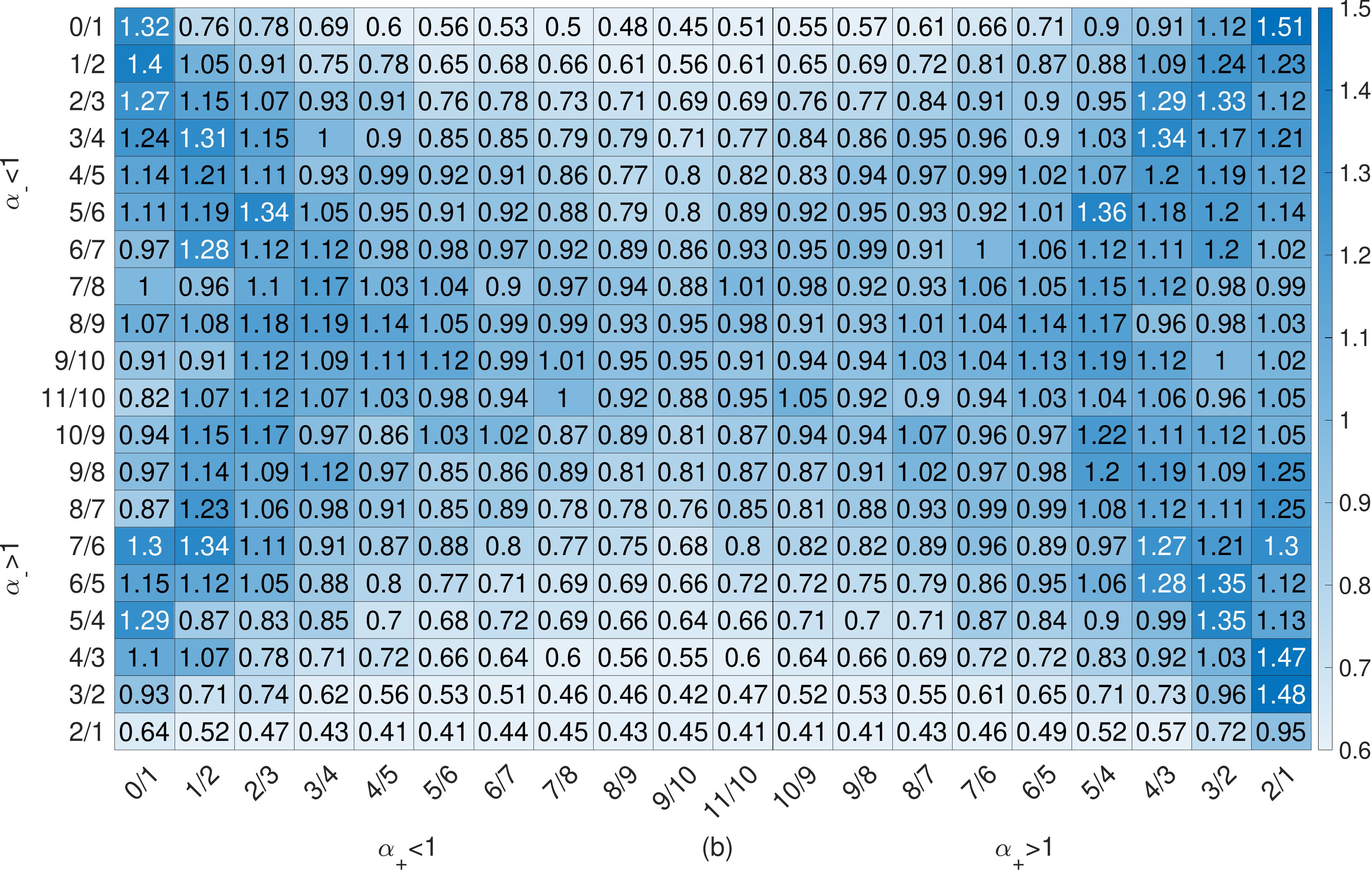} \vskip 0.1in
\includegraphics[width=3.15in]{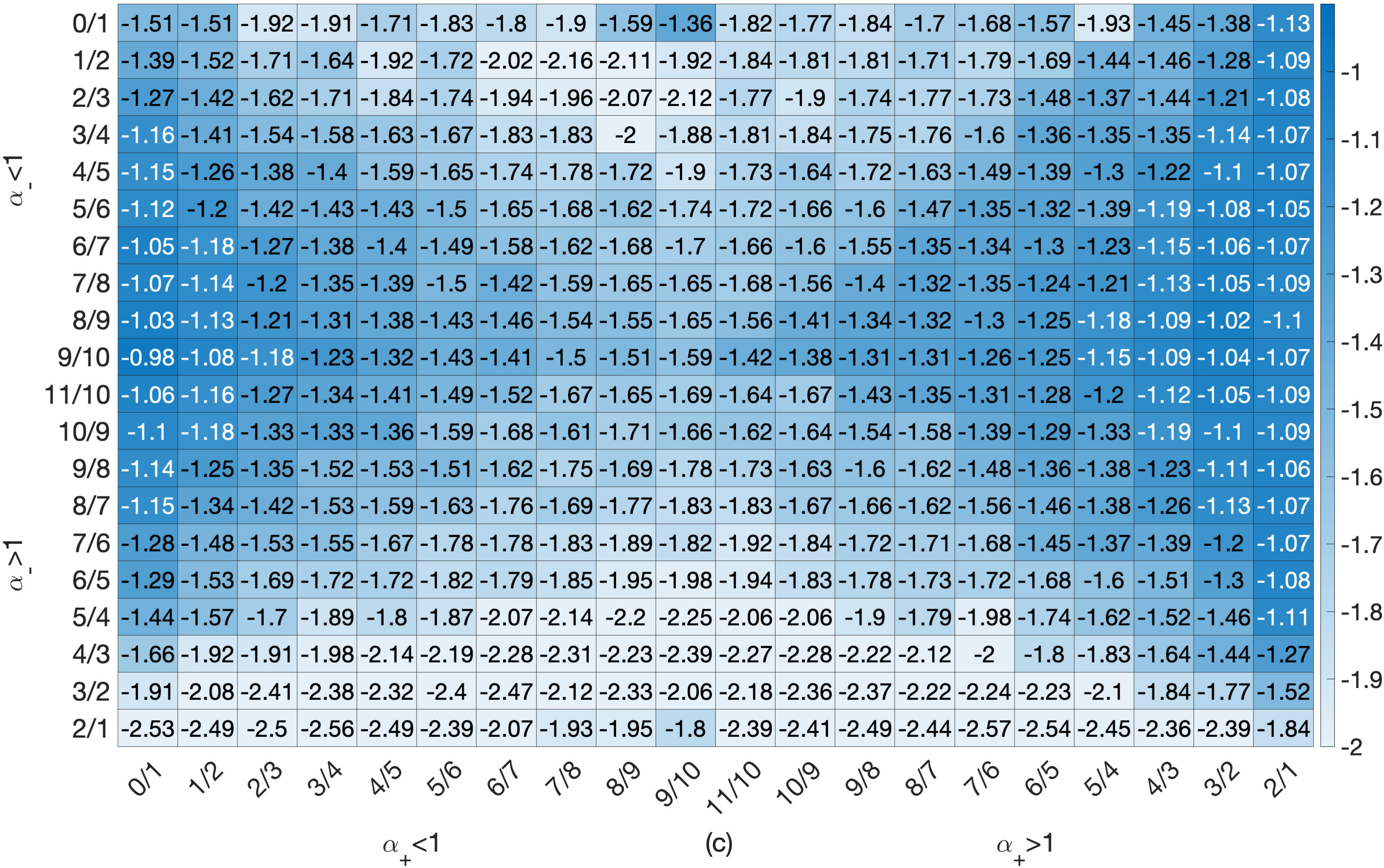} \hskip 0.25in 
\includegraphics[width=3.15in]{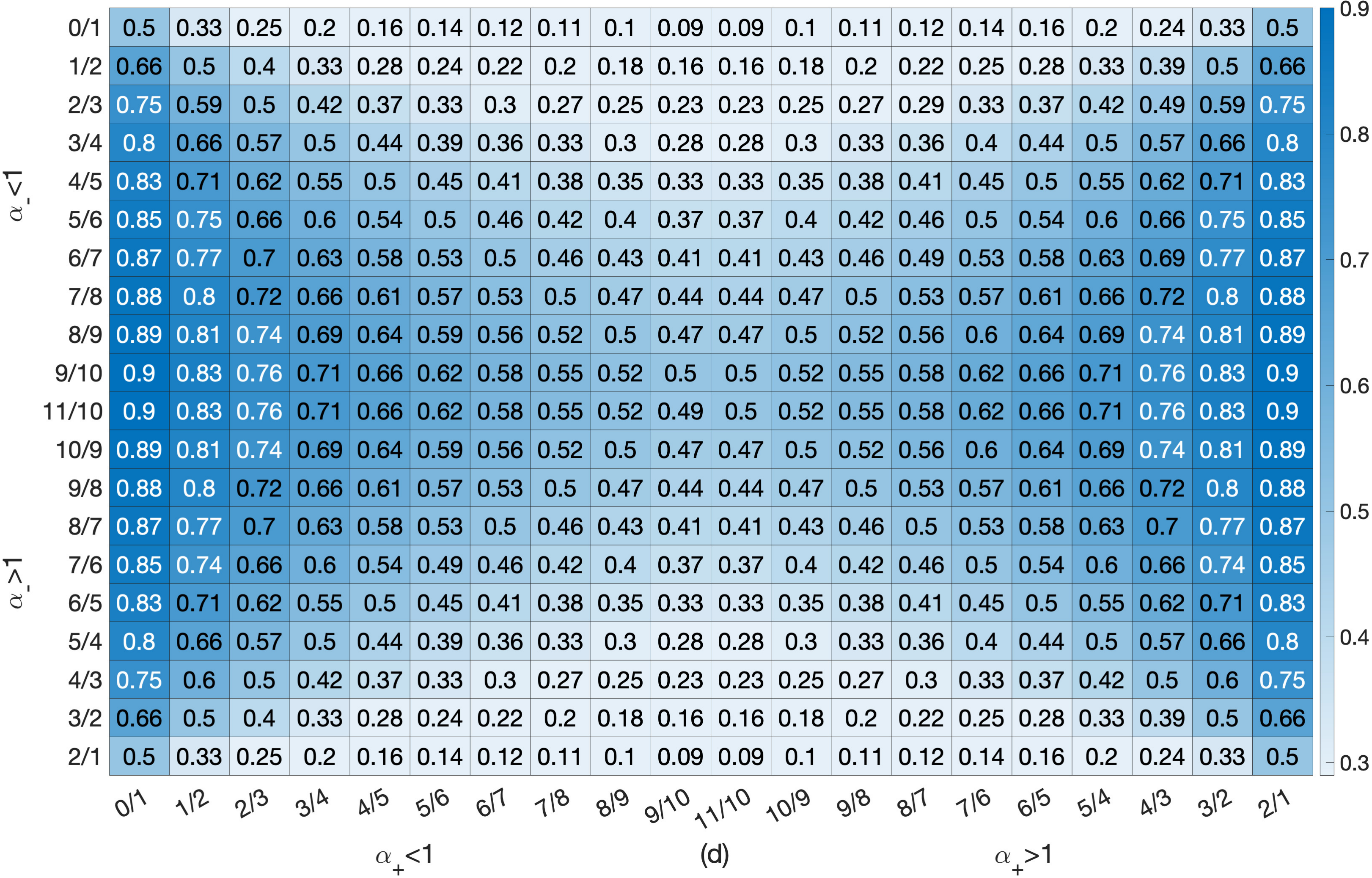}
\caption{Classification results with the {\it spectf} dataset in Table \ref{2classimb}. The test dataset has $r_{sc} = 0.26(r_c=0.09)$. However, the training dataset is well-balanced, i.e., $r_{sc}=1$. We have $20\times20$ hyperplanes $h^*_{(\alpha_+,\alpha_-)}(x)=0$ by solving $20\times20$ SIC models~\eqref{linmin} with the well-balanced training dataset. (a) The pattern of the test classification accuracy. (b) The pattern of the signed distance of the centroid of the positive test dataset to the hyperplane. (c) The pattern of the signed distance of the centroid of the negative test dataset to the hyperplane. (d) The pattern of $\eta$ in \eqref{simple-eta}. The best test accuracy is achieved at $(\alpha_+,\alpha_-) = (\frac{11}{10},2)$. This point is the smallest distance of the centroid of the positive test dataset to the hyperplane. And, it is contained in the group $\left\{ (\frac{11}{10},2), (\frac{9}{10},2), (\frac{11}{10},0), (\frac{9}{10},0) \right\}$ having the smallest $\eta = \frac{1}{11}$. See Example \ref{example2} for more details.}  
\label{fig:example1}
\end{figure*}

In practice, due to computational constraints, we normally choose polynomial functions for $\exp_{\alpha,c}$, i.e., positive integers for $\frac{1}{\abs{\alpha_{\pm}-1}}$. Assume that $\abs{(c_+)_{\alpha_{+}}} = \abs{(c_-)_{\alpha_{-}}}$ is a constant, $\frac{1}{\abs{\alpha_+-1}} = k_+$, and $\frac{1}{\abs{\alpha_--1}}=k_-$. Here, $k_{\pm}=1,2,3,\cdots$. Then we have
\begin{equation}\label{simple-eta}
\eta = \frac{1/c_-^{1-\alpha_-}}{1/c_+^{1-\alpha_+}+1/c_-^{1-\alpha_-}} = \frac{k_-}{k_++k_-}.
\end{equation}
The skewed hyperplane equation~\eqref{centerB} is interpreted as that the hyperplane $h^*(x)=0$ is tuned by the ratio of polynomial order of SIGTRON if $\cos(\theta_+)$ does not change much. We notice that \cite{cao19} has added a margin-related parameter, of which the role is  similar to $c_{\alpha}$ of our SIC model, inside of the loss function to adjust decision boundary in imbalanced deep neural network. Before we go further, we introduce a simplified model to show the role of $r_{sc}$ in the skewed hyperplane equation~\eqref{centerB}.
\begin{remark}
Let us assume that $x_i \approx x_p^c > 0$ for $i \in {\cal N}_+$, $x_j \approx x_n^c > 0$ for $j \in {\cal N}_-$, $\lambda=0$, and $p_{\pm} \in (0,1)$. Then \eqref{gradF} becomes
$
r_c p_+(h^*(x_p^c))\left[\begin{array}{c} x_p^c \\ 1 \end{array}\right] \approx p_-(-h^*(x_n^c))  \left[\begin{array}{c} x_n^c \\ 1 \end{array}\right]. 
$
We apply $\inprod{\cdot}{\left[\begin{array}{c} x_p^c \\ 1 \end{array}\right]}$ and $\inprod{\cdot}{\left[\begin{array}{c} x_n^c \\ 1 \end{array}\right]}$. Then we simplify the corresponding equations. As a result, we have $r_{sc} p_+(h^*(x_p^c)) \approx p_-(-h^*(x_n^c))$. After linearization at $0$~\cite{elsayed18}, we obtain \eqref{centerB} with $r_{sc}$, instead of data-sensitive $r_c$. Actually, as observed in Theorem \ref{rsc-standard}, by the dimension-wise standardization of the dataset, we always have $\abs{r_{sc}-1} < \abs{r_c - 1}$ where $r_{sc} \not=1$, irrespective of the domain of the dataset ${\cal D}$. Hence, empirically we could use $r_{sc}$, instead of $r_c$ (see Example \ref{example2}). In addition, refer to \cite{cui19} where they invent effective number to replace the role of $r_c$ in imbalanced classification. 
\end{remark}

The following Example \ref{example2} describes the tunable hyperplane via skewed hyperplane equation for $r_{sc}$-inconsistent dataset having the well-balanced training and imbalanced test datasets.

\begin{example}\label{example2}
Let us start with the two-class \squote{\it spectf} dataset in Table \ref{2classimb}. The training dataset is well-balanced, i.e., $r_{sc}=1$. However, the test dataset has $r_{sc} = 0.26$($r_c=0.09$). It indicates that the positive class of the test dataset is the minority class. The hyperplane to be learned should be located near the minority class to achieve better test classification accuracy. As observed in \eqref{simplemargin} and Figure \ref{fig:example1} (d), to move the hyperplane to the minority class as close as we can, we need to select the smallest $\eta = 1/11$. This $\eta$ corresponds to four $(\alpha_+,\alpha_-)$ candidates: $(11/10,2)$,$(9/10,2)$,$(11/10,0)$, and $(9/10,0)$. In fact, at $(\alpha_+,\alpha_-)=(11/10,2)$, we obtain the minimum distance of $x_{test,p}^c$ (the centroid of the positive class of test dataset) to the hyperplane $h_{(\alpha_+=11/10,\alpha_-=2)}^*(x)=0$ (Figure \ref{fig:example1} (b)) and the best test classification accuracy $64.6\%$ (Figure \ref{fig:example1} (a)). Note that the pattern of $\eta$ in Figure \ref{fig:example1} (d) is similar to the pattern of the distance of $x_{test,p}^c$ to the hyperplane in Figure \ref{fig:example1} (b). As Figure \ref{fig:example1} (a) shows, the region $\alpha_- \approx 2$ obtains better test classification accuracy than the region $\alpha_- \approx 0$. Additionally, note that  $cos(\theta_{test,+}) = \inprod{\frac{w_{(\alpha_+,\alpha_-)}^*}{\norm{w_{(\alpha_+,\alpha_-)}^*}}}{\frac{x_{test,p}^c-x_{test,n}^c}{\norm{x_{test,p}^c-x_{test,n}^c}}} \in [0.52,0.90]$ and $\E(cos(\theta_{test,+}))=0.71$. As a reference, we obtained $20\times20$ hyperplanes $h^*_{(\alpha_+,\alpha_-)}(x)=0$ by solving $20\times20$ SIC models~\eqref{linmin} with the well-balanced training dataset. The cross-validation was used for the best regularization parameter $\lambda$. We set $\abs{(c_+)_{\alpha_+}}=\abs{(c_-)_{\alpha_-}}=2$, $\frac{1}{\abs{1-\alpha_{+}}} = k_+ = 1,2,\cdots,10$, and $\frac{1}{\abs{1-\alpha_{-}}} = k_- = 1,2,\cdots,10$. 
\end{example}

In Example \ref{example2}, we saw that the hyperplane should be located near minority class for better performance. Depending on the pattern of the distribution of the data in the minority class, e.g. the variance of the minority class, it may not be true. For instance, \cite{cao19} recommend that the decision boundary be required to be placed near majority class. For better understanding of the pattern of the data distribution, we recommend to use $(\alpha_+,\alpha_-)$-matrix. 

 \begin{figure*}[t]
\centering
\includegraphics[width=7in]{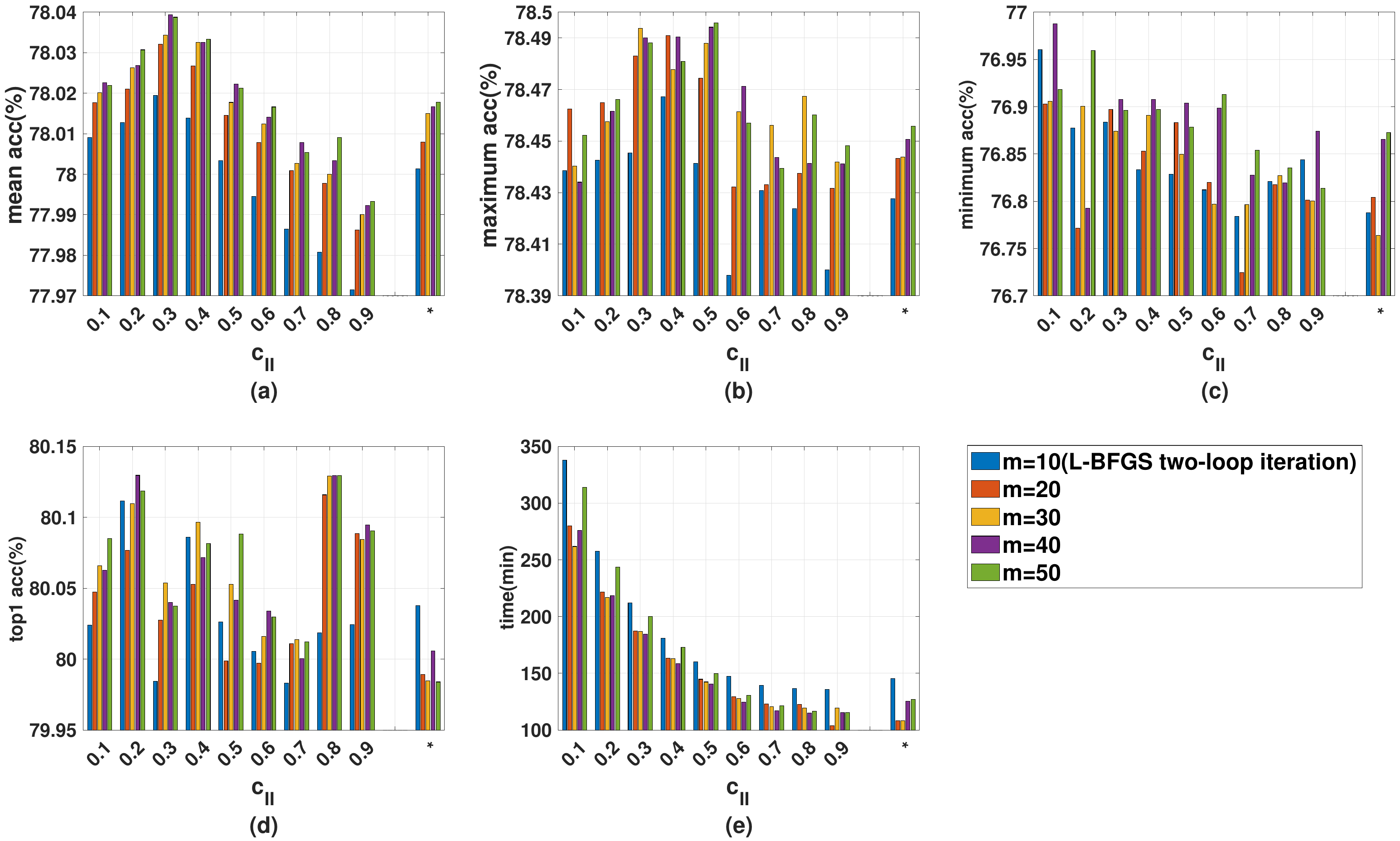}
\caption{A comparison of performanve between {\it quasi-Newton(L-BFGS) for virtual convex loss}, which uses the strong Wolfe condition~\eqref{strongwolfeD} with $c_{II} \in [0.1,0.9]$, and the classic L-BFGS(*), which uses the strong Wolfe condition~\eqref{strongwolfeD} with $c_{II}=0.9$ and the Armijo condition~\eqref{armijoC} with $c_{I}=10^{-4}$. Note that L-BFGS(*) uses the cubic-interpolation-based line search~\cite{nocedal06,schmidt05}. For our experiments, we use $12\times12$ SIC models with $\abs{c_{\alpha}}=1$ and $\abs{\alpha_{\pm}-1}=\frac{1}{k}$, where $k=1,2,3,4,5,6$. (a) Mean test accuracy of $12 \times 12$ SIC models. (b) Maximum test accuracy and (c) minimum test accuracy obtained by an SIC model with fixed $\alpha_{\pm}$ for each $c_{II}$. (d) Test accuracy of TOP$1$ for each $c_{II}$. (e) Total computation time of $12 \times 12$ SIC models. Here, we report the average values of five times repeated experiments with all datasets in Table \ref{2classimb} and \ref{mclassimb}. For $0.1 \le c_{II} \le 0.5$, in terms of mean test classification accuracy in (a), {\it quasi-Newton(L-BFGS) for virtual convex loss} outperforms L-BFGS(*), for each $m$ of two loop iterations.} 
\label{fig:losslessperformance}
\end{figure*}
 
\begin{remark}\label{focalremark}
Lately, \cite{lin20} has proposed two focal loss functions for imbalanced object detection. The first one is the non-convex focal loss function. It has $L_+(h(x)) = -\pi(1-p(h(x)))^{\gamma_g}\log(p(h(x)))$ and $L_-(h(x)) = -(1-\pi)p(h(x))^{\gamma_g}\log(1-p(h(x)))$ where $p(h(x)) \in (0,1)$ is a probability function, like canonical sigmoid $\sigma$ or reduced Sigtron. The second one is the convex focal loss function. It has $L_+(h(x))= -\pi\log(\sigma(\gamma h(x)+\xi))$ and $L_-(h(x)) =- (1-\pi)\log(1-\sigma(\gamma h(x) + \xi))$. Here $\pi \in (0,1)$ is known as a cost-sensitive parameter to be selected depending on $r_{c}$ (or $r_sc$) of the training dataset. Note that $\gamma \ge 1$ and $\xi \ge 0$ control the stiffness and shift of the convex focal loss, respectively. As \cite{lin20} mentioned, the performance gap between the two types of focal losses is negligible. Therefore, we exclusively compare the convex focal loss to the convex SIC model. To find additional information, please refer to Section \ref{Sec5A}.
\end{remark}

\section{Quasi-Newton optimization(L-BFGS) for virtual convex loss\label{SIGTRON-line-search}}
This Section presents {\it quasi-Newton optimization(L-BFGS) for virtual convex loss} framework. It includes the proposed interval-based bisection line search, which uses gradients of a virtual convex loss function. 

Let us discuss the SIC model~\eqref{linmin}, where ${\cal F}(h)$ is convex, differentiable, and bounded below. It is worth noting that the optimization framework we will be proposing for this model can also be used for cost-sensitive learning model~\eqref{costsensitive}, including the $\pi$-weighted convex focal loss. Before we proceed, let us take a moment to review the quasi-Newton optimization framework described in \cite{nocedal06}. The iterates $h_0,h_1,h_2,\cdots$ satisfy $h_{t+1} = h_t + \rho_tz_t$ where $\rho_t>0$ is a step length and $z_t = -B_t^{-1}\nabla {\cal F}(h_t)$ is a descent direction. Here, $B_t$ is a symmetric and positive definite rank-two approximation of the Hessian matrix $\nabla^2 {\cal F}(h_t)$. Interestingly, L-BFGS directly approximates $B_t^{-1}\nabla {\cal F}(h_t)$ by two-loop iterations with $m$ recent elements. Here, $m$ is the tuning parameter of L-BFGS. The performance comparison of the proposed optimization framework considering $m$ of L-BFGS is shown in Figure \ref{fig:losslessperformance}. For the initial point, we set $h_0=0$, corresponding to the probability-half point of SIGTRON in the gradient of the SIC model. It is well known that, to guarantee sufficient descent of ${\cal F}(h)$ and positive definiteness of low-rank matrix $B_t$, the step length $\rho_t$ of L-BFGS should satisfy the Armijo condition~\eqref{armijoC} and the Wolfe condition~\eqref{wolfeC}: 
\begin{equation}\label{armijoC}
{\cal F}(h_t + \rho_t z_t) - {\cal F}(h_t) \le c_{I} \rho_t \inprod{\nabla {\cal F}(h_t)}{z_t}
\end{equation}
and
\begin{equation}\label{wolfeC}
\inprod{\nabla {\cal F}(h_t + \rho_tz_t)}{z_t} \ge c_{II} \inprod{\nabla {\cal F}(h_t)}{ z_t}, 
\end{equation}
where $0<c_I < c_{II}<1$.
The Armijo condition~\eqref{armijoC} can be reformulated through the expectation of gradients: 
\begin{equation}\label{lineexpectation}
\phi(\rho_t) - \phi(0) =\int_0^{\rho_t} \phi'(\rho) d\rho = \rho_t\E_{[0,\rho_t]}(\phi'),
\end{equation}
where $\phi(\rho_t) = {\cal F}(h_t + \rho_t z_t)$ and $\phi'(\rho) = \inprod{\nabla {\cal F}(h_t + \rho z_t)}{ z_t }$. Note that $\phi'(0) = \inprod{\nabla {\cal F}(h_t)}{z_t} < 0$ where $z_t = - B_t^{-1}\nabla {\cal F}(h_t)$. Now, we get the reformulated Armijo condition
\begin{equation}\label{armijoD}
\E_{[0,\rho_t]}(\phi')  \le c_{I}\phi'(0)
\end{equation}
and the Wolfe condition
\begin{equation}\label{wolfeD}
c_{II} \phi'(0) \le \phi'(\rho_t),
\end{equation}
where \eqref{wolfeD} is also known as the curvature condition~\cite{more94}, which is clearly understood by the reformulation of \eqref{wolfeD} as $\E_{[0,\rho_t]}(\phi'') > (c_{II}-1)\rho_t\phi'(0) > 0$. The positive definiteness of $B_t$ in L-BFGS is adjusted by $c_{II} \in (0,1)$, normally set as $0.9$. For more details, see~\cite{nocedal06}. 

The reformulated Armijo condition~\eqref{armijoD} has several advantages, compared to the Armijo condition~\eqref{armijoC}. First, it is more intuitive about the descent condition of the loss function. The average slopes of $\phi$ in the interval $[0,\rho_t]$ must be less than the initial slope $\phi'(0)$. Second, for the SIC model \eqref{linmin}, using an approximation of \eqref{armijoD} is more practical. That is, $\E_{[0,\rho_t]}(\phi') \approx \sum_{i=1}^n a_i \phi'(\tilde{\rho}_i)$, where  $a_i \ge 0$, $\sum_{i=1}^n a_i = 1$, and $0 \le \tilde{\rho}_0 < \tilde{\rho}_1 < \cdots < \tilde{\rho}_n \le \rho_t$. This approach is workable for the general loss function, including virtual non-convex loss function. For a virtual convex loss function, however, we do not need to evaluate a relatively large number of directional derivatives in the interval $[0,\rho_t]$. Instead of \eqref{armijoD} and \eqref{wolfeD}, we can use the strong Wolf condition, i.e., (relative) strong Wolfe stopping criterion. 
\begin{equation}\label{strongwolfeD}
\abs{\phi'(\rho_t)} \le -c_{II} \phi'(0) 
\end{equation}
where $c_{II} \in (0,1)$ is a tuning parameter of the proposed {\it quasi-Newton(L-BFGS) optimization for virtual convex loss}. See also \cite{hager05,hager06} for related line search algorithms utilizing \eqref{strongwolfeD}. In this article, for the strong-Wolfe stopping criterion~\eqref{strongwolfeD}, we create a new interval-based bisection line search(Algorithm \ref{algox}). See \cite{hiriart-urruty96,nocedal06} for the various characteristics of the interval reduction method in general line search. The overall framework of {\it quasi-Newton(L-BFGS) optimization for virtual convex loss} is stated in Algorithm \ref{algoz}, which contains the interval-based bisection line search in Algorithm \ref{algox}. See also Theorem \ref{th:loss} for the convergence of Algorithm \ref{algox}. 

\begin{algorithm}
\caption{Quasi-Newton optimization(L-BFGS) for virtual convex loss}\label{algoz}
\textbf{Input:} $h_0=(w_0,b_0)=0$, $\epsilon_{tol1} = 10^{-2}$, $\epsilon_{tol2}=10^{-4}$, $c_{II}=0.4$, $MaxIter=100$.\\
\textbf{Output:} $h^* = (w^*,b^*)$.\\
\For{ $0 \le r \le MaxIter$}{
\If{$\norm{\nabla {\cal F}(h_t)}_{\infty} \le \epsilon_{tol1}$ or $\norm{h_{t+1}-h_t}_{\infty} \le \epsilon_{tol2}$}{$h^*=h_t$ and STOP}
\textbf{Compute descent direction $z_t$} by Quasi-Newton optimization(L-BFGS)\\ 
$$
z_t = \hbox{L-BFGS}(\nabla{\cal F}(h_t))
$$
\textbf{Compute step-Length $\rho_t$} by Algorithm \ref{algox} with strong-Wolfe stopping criterion \eqref{strongwolfeD}\\
$$
\rho_{r} = \argmin_{\rho \in \R_{\ge 0}} {\cal F}(h_t + \rho z_t)
$$
\textbf{Update} $h_{t+1} = h_t + \rho_tz_t$
}
$h^* = h_{MaxIter+1}$
\end{algorithm}

\begin{theorem}\label{th:loss}
Let $\phi$ be convex, differentiable, and bounded below. Then Algorithm \ref{algox} with an initial condition $\phi'(0)<0$ converges to $\rho^*$ satisfying \eqref{strongwolfeD}, where $c_{II} \in (0,1)$, in finite steps.
\end{theorem}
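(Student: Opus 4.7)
My plan is to reduce the proof to showing two things: that the set $S:=\{\rho\ge 0 : |\phi'(\rho)|\le -c_{II}\phi'(0)\}$ on which \eqref{strongwolfeD} is satisfied is a non-degenerate interval, and that Algorithm \ref{algox} maintains a bracket whose length is bounded below by the length of $S$ but halves at every iteration. Finite termination will then follow because a quantity that is halved yet stays above a fixed positive number can only be halved finitely many times.

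For the first task, I will exploit convexity of $\phi$ to obtain that $\phi'$ is continuous and non-decreasing, and use $\phi'(0)<0$ together with $c_{II}\in(0,1)$ to order the thresholds as $\phi'(0)<c_{II}\phi'(0)<0<-c_{II}\phi'(0)$. Lower boundedness of $\phi$ rules out $\phi'(\rho)\le c_{II}\phi'(0)$ for all $\rho\ge 0$, since otherwise a direct integration of $\phi'$ would force $\phi(\rho)\to-\infty$. The intermediate value theorem together with continuity of $\phi'$ will then produce endpoints $0<\alpha\le\beta$ with $\phi'(\alpha)=c_{II}\phi'(0)$ and $\phi'(\beta)=-c_{II}\phi'(0)$, with $\beta=+\infty$ allowed in case $\phi'$ never reaches $-c_{II}\phi'(0)$. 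The strict gap $c_{II}\phi'(0)\ne -c_{II}\phi'(0)$ combined with continuity forces $\alpha<\beta$, and thus $S\supseteq[\alpha,\beta]$ has positive length.

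For the second task, I will trace Algorithm \ref{algox}. Its bracketing phase enlarges $\rho_H$ until $\phi'(\rho_H)\ge c_{II}\phi'(0)$; finite termination of this phase comes from the same integration argument. If the resulting $\rho_H$ also satisfies $\phi'(\rho_H)\le -c_{II}\phi'(0)$, which handles the $\beta=+\infty$ edge case, the algorithm stops immediately. Otherwise, monotonicity of $\phi'$ yields $[\alpha,\beta]\subset(\rho_L,\rho_H)$, and at every bisection $\rho_m=(\rho_L+\rho_H)/2$ exactly one of three things occurs: $\rho_m\in S$ and the algorithm terminates; $\rho_m<\alpha$ and we update $\rho_L\leftarrow\rho_m$; or $\rho_m>\beta$ and we update $\rho_H\leftarrow\rho_m$. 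The invariant $[\alpha,\beta]\subset(\rho_L,\rho_H)$ is preserved in the two non-terminating cases, so $\rho_H-\rho_L\ge\beta-\alpha>0$ throughout, while each bisection halves $\rho_H-\rho_L$; this caps the total number of bisections at $\lceil\log_2((\rho_H^{(0)}-\rho_L^{(0)})/(\beta-\alpha))\rceil$.

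I expect the main obstacle to be the degenerate endpoint scenarios, in particular excluding $S=\{\rho_0\}$ (which is handled by continuity of $\phi'$ plus the strict threshold gap) and handling the unbounded case $\beta=+\infty$ (which is handled by showing that the bracketing phase itself already outputs a point of $S$). Once these edge cases are isolated, the halving-versus-lower-bound argument yields finite termination immediately.
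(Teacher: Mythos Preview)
Your proposal is correct and somewhat more careful than the paper's own proof. The paper argues by splitting into the coercive and non-coercive cases: in the coercive case it produces a minimizer $\rho_{opt}$ with $\phi'(\rho_{opt})=0$, observes that the doubling phase eventually brackets it, and then asserts that the bisection interval ``is shrinking to $\rho_{opt}$ and the strong-Wolfe stopping criterion \ldots\ is satisfied within finite steps''; in the non-coercive case it notes $\phi'(\rho)\to 0$ so the doubling phase alone eventually hits the stopping set. Your route instead isolates the success interval $S=[\alpha,\beta]$ directly, proves $\beta-\alpha>0$ from continuity of $\phi'$ and the strict gap $c_{II}\phi'(0)<-c_{II}\phi'(0)$, and then runs a clean halving-versus-lower-bound argument that yields an explicit bound $\lceil\log_2((\rho_U^{(0)}-\rho_L^{(0)})/(\beta-\alpha))\rceil$ on the number of bisections. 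This is genuinely different in decomposition: you cut on $\beta<\infty$ versus $\beta=\infty$ rather than on coercivity (these are related but not equivalent, since $\beta=\infty$ can occur when $\lim\phi'\in(0,-c_{II}\phi'(0)]$, which is coercive). What your approach buys is a quantitative termination bound and a fully explicit reason why the bisection phase ends in finitely many steps, something the paper only asserts; what the paper's split buys is brevity and a clean separation of the ``doubling wins outright'' scenario. One cosmetic note: the algorithm's variable is $\rho_U$, not $\rho_H$, and you should state that you assume the input $\rho_0>0$, since otherwise the doubling step $\rho_{i+1}=2\rho_L$ stalls at zero.
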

\begin{proof}
Let us first consider the case that $\phi$ is a coercive function. Since $\phi'(0)<0$ and $\phi'$ is a non-decreasing function, there is $\rho_{opt} >0$ such that $\phi'(\rho) \ge 0$ for all $\rho \ge \rho_{opt}$. As noticed in line $6-7$ and line $10-11$ of Algorithm \ref{algox}, there is $i$th iteration such that $\phi'(\rho_i) < \phi'(\rho_{opt}) = 0 < \phi'(2\rho_i)$. Therefore, the interval, which includes $\rho_{opt}$, is established as $[\rho_L,\rho_U] = [\rho_i,2\rho_i]$. Then by the bisection algorithm in line $6-9$ and line $13$, $[\rho_L,\rho_U]$ is shrinking to $\rho_{opt}$ and the strong-Wolfe stopping criterion in line $4$ of Algorithm \ref{algox} is satisfied within finite steps. Now, we consider the case that $\phi$ is not a coercive function. Since $\phi$ is convex, bounded below, and $\phi'(0)<0$, $\lim_{\rho \rightarrow +\infty} \phi'(\rho) \rightarrow 0$(line $11$). Therefore, it stops by strong-Wolfe stopping criterion in line $4$. 
\end{proof}

\begin{algorithm}
\caption{Interval-based bisection line search with \eqref{strongwolfeD}}\label{algox}
\textbf{Input:} $c_{II}=0.4$, $itermax=100$, $\rho_0$, $\phi'(\rho) = \inprod{\nabla {\cal F}(h_t+\rho z_t)}{z_t}$,  $\phi'(0)<0$, $[\rho_L,\rho_U] = [0,\infty]$\\
\textbf{Output:} $\rho^*$\\
\For{$0 \leq i \leq$ itermax}{
\If{ $\abs{\phi'(\rho_{i})}  \le - c_{II} \phi'(0)$ }{
$\rho^*=\rho_{i}$ and STOP\\
}
\vskip 0.05in
\If{$0 < \phi'(\rho_{i})$}{
        $\rho_U    = \rho_{i}$~~
        }
\ElseIf{$\phi'(\rho_{i}) < 0$}{
        $\rho_L  = \rho_{i}$~~
}
\vskip 0.05in
\If{$\rho_U = \infty$}{

$\rho_{i+1} = 2\rho_{L}$
}
\Else{
$\rho_{i+1} = \frac{1}{2}(\rho_L + \rho_U)$
}
}
$\rho^* = 0$\\
\end{algorithm}

\begin{remark}
Besides Armijo~\eqref{armijoC} and Wolfe~\eqref{wolfeC} criteria for line search, there is an additional criterion known as Goldstein condition~\cite{nocedal06}. By way of \eqref{lineexpectation}, it is reformulated as
\begin{equation}\label{goldsteinC}
(1-c_{III}) \phi'(0) \le\E_{[0,\rho_t]}(\phi') \le c_{III}\phi'(0)
\end{equation}
where $c_{III} \in (0,1/2)$ and $\phi'(0)<0$. Unfortunately, this condition does not always include the solution of $\min_{\rho} \phi(\rho)$. To plug it into the quasi-Newton(L-BFGS) optimization for virtual loss, We need an additional curvature condition~\eqref{wolfeD}. 
\end{remark}

 \begin{figure}[t]
\centering
\includegraphics[width=2.3in]{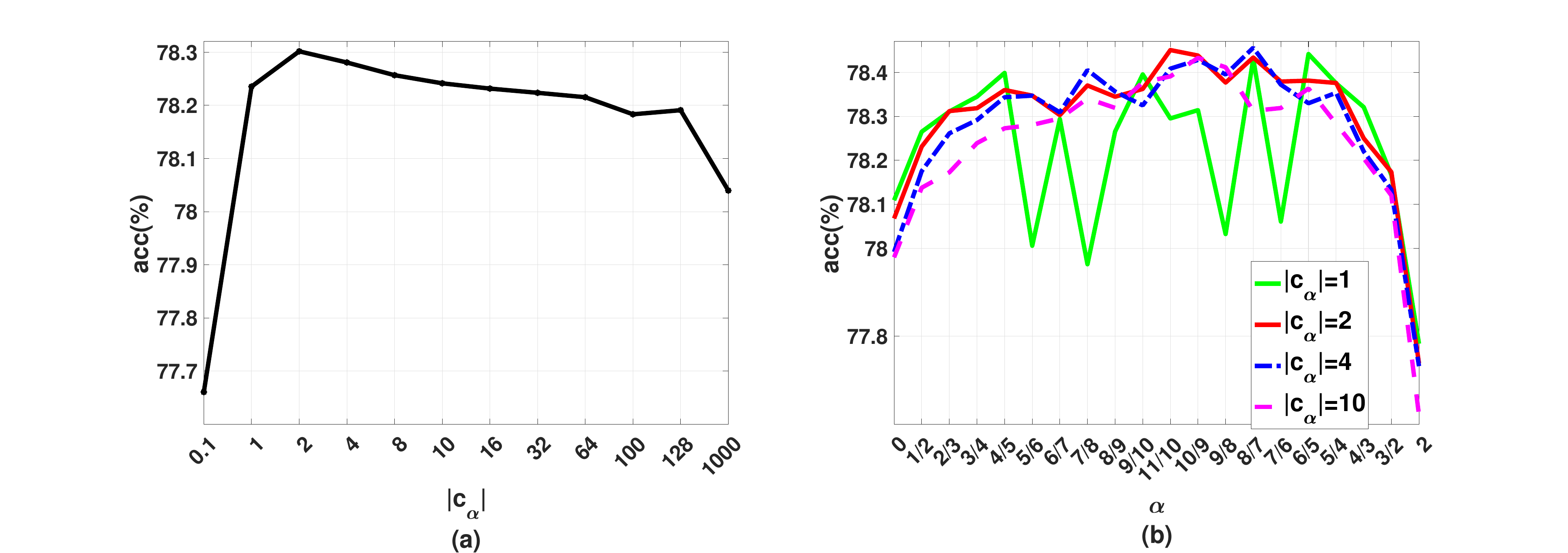} \hskip 0.1in
\includegraphics[width=2.4in]{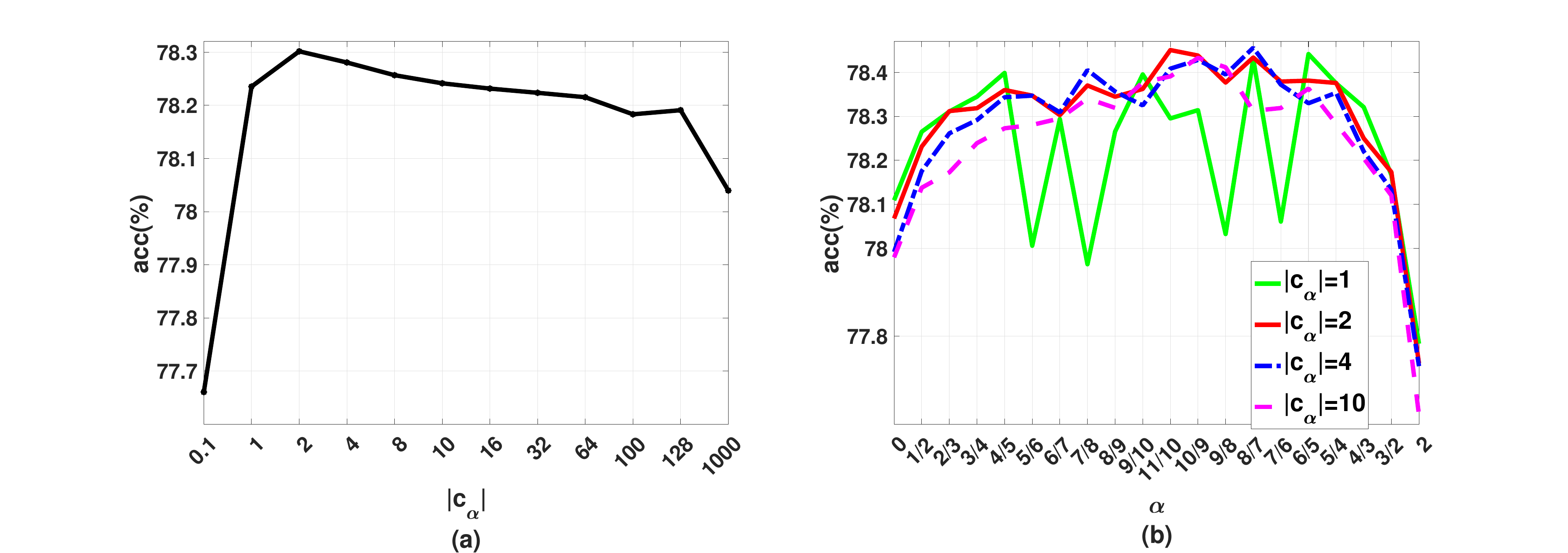}
\caption{Graphs of classification performance of the SIC model~\eqref{linmin} with $\alpha=\alpha_-=\alpha_+ \in [0,2]$. (a) Test accuracy($\%$) vs. $\abs{c_{\alpha}}$. The test accuracy is the average of all results obtained with $\alpha$ in \eqref{setofalpha}. When $\abs{c_{\alpha}} = 2$, the best performance is achieved. (b) Test accuracy($\%$) vs. $\alpha$. When $\frac{1}{5} \le \abs{\alpha-1} \le 1$, the best performance is achieved with $\abs{c_{\alpha}} = 1$. However, when $\abs{\alpha-1}< \frac{1}{5}$, the SIC model with $\abs{c_{\alpha}} \ge 2$ shows better performance.}
\label{fig:alphavsc}
\end{figure}

\section{Numerical experiments with the $20 \times 20$ SIC models\label{secexperiment}}
This Section reports the classification results acquired by the SIC model~\eqref{linmin} and {\it quasi-Newton(L-BFGS) for virtual convex loss}(Algorithm \ref{algoz} and \ref{algox}). We compare the proposed methodology with well-known classifiers: $\pi$-weighted convex focal loss~\cite{lin20}, LIBLINEAR(logistic regression, SVM, and L2SVM)~\cite{fan08,galli22}, and LIBSVM(C-SVC with RBF kernel)~\cite{chang11}. Note that {\it Quasi-Newton(L-BFGS) for virtual convex loss} is mainly implemented in {\sc Matlab}(version R2023b) based on \cite{schmidt05}. This optimization algorithm is used for the SIC model~\eqref{linmin} and the $\pi$-weighted convex focal loss~\cite{lin20}. LIBLINEAR(version 2.4.5)~\cite{galli22} and LIBSVM(version 3.3.2)~\cite{chang11} are mainly implemented in {\sc C/C++} language with {\sc Matlab} interface. All runs are performed on APPLE M2 Ultra with a 24-core CPU and 192GB memory. The operating system is MacOS Sonoma(version 14.1). We use {\it parfor} in {\sc Matlab} for parallel processing of all models, including LIBLINEAR and LIBSVM, in a 24-core CPU. In terms of multi-class datasets, the OVA(one-vs-all) strategy is used for all linear classification models. The OVO(one-vs-one) strategy is used for the kernel-based classification model LIBSVM~\cite{chang11}. 

Concerning {\it quasi-Newton(L-BFGS) for virtual convex loss}, as observed in Figure \ref{fig:losslessperformance}, it is recommended to select $m \in [20, 50]$ for two-loop iterations of L-BFGS and $c_{II} \in [0.1,0.5]$ for the interval-based bisection line search(Algorithm \ref{algox}).  We choose $m=40$ and $c_{II}=0.4$ considering performance-computation complexity. For stopping criterions of {\it quasi-Newton(L-BFGS) for virtual convex loss}, we use $\norm{\nabla {\cal F}(h_t)}_{\infty} \le \epsilon_{tol1}$ and $\norm{h_{t+1}-h_t}_{\infty} \le \epsilon_{tol2}$ where $\epsilon_{tol1}=10^{-2}$ and $\epsilon_{tol2}=10^{-4}$(Algorithm \ref{algoz}). We could select a smaller $c_{II}$ for exact line search, used in other quasi-Newton optimization, such as nonlinear conjugate gradient~\cite{hager05}. 

In order to use the $\pi$-weighted convex focal loss~\cite{lin20} discussed in Remark \ref{focalremark}, we need to set three parameters: $\gamma$, $\xi$, and $\pi$. Following the recommendations in \cite{lin20}, we choose $\gamma=1, 2,3,4$ and $\xi=0,1$. As for $\pi$, we select $19$ regular points ranging from $0.05$ to $0.95$. This gives us $152$ convex focal losses, expressed as a $(\pi, \gamma:\xi)$-matrix.

We have selected LIBLINEAR~\cite{fan08} and LIBSVM~\cite{chang11} as our standard for balanced linear classification models and non-linear classification models, respectively. For logistic regression, we use logistic loss, hinge-loss for SVM, and squared hinge-loss for L2SVM. To learn an inhomogeneous hyperplane, we set $B=1$. We use the primal formulation ($s=0$) for logistic regression and the dual formulation ($s=3$) for SVM. As for L2SVM, we use the primal formulation ($s=2$). In LIBSVM~\cite{chang11}, we use C-SVC(support vector classification) ($s=0$) with the RBF kernel $K(x_i,x_j) = \exp(-\nu \norm{x_i-x_j}^2)$ ($t=2$).

All models have an $\ell_2$-regularizer $\frac{\lambda}{2}\norm{w}^2$. In terms of regularization parameter $\lambda$ for the cost-sensitive learning framework~\eqref{costsensitive}, including $20\times20$ SIC models~\eqref{linmin} and $19\times8$ $\pi$-weighted convex focal loss models in Remark \ref{focalremark}, we use CV(cross-validation) with candidates in \eqref{lambdaD} as recommended in LIBSVM~\cite{chang11}.
\begin{equation}\label{lambdaD}
\lambda = 2^{r}, \; r = -14,-13,-12,\cdots,5
\end{equation} 
In LIBLINEAR and LIBSVM, the regularization parameter $\lambda$ is located on the loss function. Therefore, we use $C = \lambda^{-1}$ with \eqref{lambdaD} for CV. For LIBSVM, in addition to the regularization parameter $C$ on the loss function, the RBF kernel parameter $\nu$ is cross-validated with candidates $\nu = 2^r$ and $r=-14,..,5$.

Regarding benchmark datasets~\cite{delgado14}, they are pre-processed and normalized in each feature dimension with mean zero and variance one~\cite{ioffe15}, except for when the variance of the raw data is zero. This process reduces the effect of scale imbalance of datasets. The scale-class-imbalance ratio $r_{sc}$~\eqref{imbrsc} of two-class and multi-class datasets is presented in Table \ref{2classimb} and Table \ref{mclassimb}, respectively. In the case of two-class datasets in Table \ref{2classimb}, the mean value of $r_{sc}$ of training dataset is $\E(r_{sc}{\bf T}) \approx 1.61$. Also, we have $\min r_{sc}{\bf T} = 0.49$ and $\max r_{sc}{\bf T} =7$. Thus, the two-class datasets used in our experiments are roughly well-balanced. However, most of the two-class datasets have variations between $r_{sc}{\bf T}$ and $r_{sc}$ of test dataset ($r_{sc}{\bf Te}$). The raw format of each benchmark dataset is available in the UCI machine learning repository~\cite{ucidata}. As commented in~\cite{wainberg16}, we reorganize datasets in \cite{delgado14}. Each dataset is separated into the non-overlapped training and test datasets. The training dataset of each dataset is randomly shuffled for $4$-fold CV~\cite{chang11}. Table~\ref{2classimb}($51$ two-class datasets) and Table~\ref{mclassimb}($67$ multi-class datasets) include all information of datasets such as number of instances, size of training dataset, size of test dataset, feature dimension, number of classes, class-imbalance ratio $r_c$ for combined/training/test dataset, and scale-class-imbalance ratio $r_{sc}$ for combined/training/test dataset. The experiments are conducted five times using randomly selected CV datasets, with a fixed initial condition of $(w_0,b_0) = (0,0)$. For $\alpha$ and $c_{\alpha}$ of SIC model~\eqref{linmin}, we conducted a preliminary experiment with the reduced class of SIC model ($\alpha=\alpha_+=\alpha_-$). We found that the best test classification accuracy is obtainable when $\abs{c_{\alpha}}=2$. For general purposes, $\abs{c_{\alpha}} \in [1,10]$ is a possible choice. When $\alpha$ is not close to $1$, the SIC model with $\abs{c_{\alpha}}=1$ shows the best performance. The detailed information is provided in Figure \ref{fig:alphavsc}. For the experiments in this Section, we set $c_{\alpha} = 2$ for $\alpha>1$ and $c_{\alpha}=-2$ for $\alpha<1$. Thus, $\alpha_{\pm}$ are the only tuning parameters for which we use the following $20$ different values in $[0,2]$:
\begin{equation}\label{setofalpha}
\alpha_{\pm} \in \left\{\;\frac{0}{1},\frac{1}{2},\frac{2}{3},\cdots,\frac{9}{10},\;\frac{11}{10},\frac{10}{9},\frac{9}{8},\cdots,\frac{3}{2},\frac{2}{1} \;\right\}
\end{equation}
This gives us $20 \times 20$ SIC models. The characteristic of each dataset could be captured by the large class of hyperplanes $h^*_{(\alpha_+,\alpha_-)}=(w^*_{(\alpha_+,\alpha_-)},b^*_{(\alpha_+,\alpha_-)})$ learned via the $20 \times 20$ SIC models~\eqref{linmin}, as noticed in Theorem \ref{thseparable} and Example \ref{example2}. The details are as follows.

\begin{table*}
\centering
\begin{tiny}\begin{tabular}{l||c|c|c|c||c|c|c|c||c|c|c||c}
\hline
\multicolumn{1}{l||}{{\bf MODEL}}&\multicolumn{4}{c||}{{\bf SIC model}\eqref{linmin}}&\multicolumn{4}{c||}{\textbf{Convex Focal Loss}\cite{lin20}}&\multicolumn{3}{c||}{\textbf{LIBLINEAR}\cite{fan08}}&\multicolumn{1}{c}{\textbf{LIBSVM}\cite{chang11}}\\\hline
\textbf{SubModel}&\textbf{TOP$1$}&\textbf{MaxA}&\textbf{Max$2$}&\textbf{MaxM}&\textbf{TOP$1$-FL}&\textbf{MaxA-FL}&\textbf{Max$2$-FL}&\textbf{MaxM-FL}&\textbf{Logistic}&\textbf{SVM}&\textbf{L2SVM}&\textbf{C-SVC}\\
$(\alpha_+,\alpha_-)$ or $(\pi,\gamma,\xi)$&-&$(\frac{7}{8},\frac{8}{7})$&$(\frac{3}{4},0)$&$(\frac{8}{9},\frac{11}{10})$&-&$(0.5,2,1)$&$(0.5,3,0)$&$(0.6,2,1)$&(primal)&(dual)&(primal)&\textbf{RBF} Kernel\\\hline\hline
\textbf{Mean acc($\%$)} of Two Class&{\bf 83.96}&82.49&{\it 82.51}&82.36&83.80&82.14&82.37&81.39&82.11&81.59&82.06&83.22\\
\textbf{Mean acc($\%$)} of Multi Class&77.30&{\it 75.57}&75.19&{\it 75.57}&76.68&75.53&75.36&75.55&74.75&72.94&74.18&{\bf 79.96}\\
\textbf{Mean acc($\%$)} of All Class&80.18&{\it 78.56}&78.35&78.50&79.76&78.39&78.39&78.07&77.93&76.68&77.58&{\bf 81.37}\\\hline
\textbf{Time} of all class&423m&106s&122s&98s&81m&88s&88s&88s&60s&109s&{\bf 57s}&2077m\\\hline
\end{tabular}
\caption{A comparison of the SIC model~\eqref{linmin} with benchmark models: $\pi$-weighted convex focal loss~\cite{lin20}, LIBLINEAR~\cite{fan08}, and LIBSVM~\cite{chang11}. In this comparison, TOP$1$ refers to a group of SIC models with the best test accuracy for each dataset, while MaxA/Max$2$/MaxM is an SIC model with the best test accuracy for all-, two-, and multi-class. The same notations are used for $\pi$-weighted convex focal loss: TOP$1$-FL, MaxA-FL, Max$2$-FL, and MaxM-FL. In the two-class problems, TOP$1$ performs better than kernel-based LIBSVM and TOP$1$-FL. Max$2$ of the SIC model offers the best test classification accuracy among various linear classifiers. In the multi-class problems, although the SIC model's TOP$1$ accuracy is less than kernel-based LIBSVM, it still achieves $77.30\%$ accuracy, which is $0.62\%$ better than TOP$1$-FL. When the models' parameters are fixed, the SIC model performs similarly to the convex focal loss having the external $\pi$-weight parameter.}
\label{table:allperformance}
\end{tiny}
\end{table*}

\subsection{Performance evaluation of $20 \times 20$ SIC models\label{Sec5A}}
Table \ref{table:allperformance} summarizes the classification accuracy ($\%$) and computation time of all experiments conducted on $118$ datasets. The acronym TOP$1$ refers to a group of SIC models that have the highest test accuracy for each dataset, while MaxA/Max$2$/MaxM refers to an SIC model with the best test accuracy for all-, two-, and multi-class datasets. The same notations are used for $\pi$-weighted convex focal loss: TOP$1$-FL, MaxA-FL, Max$2$-FL, and MaxM-FL. The test classification accuracy of each dataset is reported in Table \ref{table:TwoclassTOP5} for two class datasets and in Table \ref{table:MxclassTOP5} for multi-class datasets. Note that MaxA$(\alpha_+=\frac{7}{8},\alpha_-=\frac{8}{7})$ achieves $78.56\%$. On the other hand, MaxA-FL$(\pi=0.5,\gamma=2,\xi=1)$ obtains $78.39\%$. Over half of all SIC models obtain at least $78.20\%$ accuracy. Out of all the $\pi$-weighted convex focal losses, only $10\%$ can achieve the same level of accuracy as the proposed SIC model. This implies that the SIC model is less sensitive to the parameter than the $\pi$-weighted convex focal loss. Therefore, the SIC model could serve as an alternative cost-sensitive learning framework without external $\pi$-weight. Refer to Figure \ref{fig:ALLclass} for additional information. The details are as follows.

In the case of two-class, of which the training dataset is close to the well-balanced condition, TOP$1$ achieves the best results, i.e., $0.74\%$ better than the kernel-based classifier LIBSVM(C-SVC with RBF kernel) and $0.16\%$ better than TOP$1$-FL. When the parameters of the SIC model are fixed, its performance is still better than other linear classifiers, such as $\pi$-weighted convex focal loss and LIBLINEAR. For instance, Max$2$$\left(\alpha_+=\frac{3}{4},\alpha_-=0\right)$ has $82.51\%$ accuracy, which is $0.14\%$ better than Max$2$-FL and $0.4\%$ better than logistic regression, the best model of LIBLINEAR. As shown in Figure \ref{fig:X} (a), the test accuracy of all SIC models is in the range of $[80.64\%, 82.51\%]$. More than $35\%$ of all SIC models achieve at least $82.20\%$ test accuracy. On the other hand, the test accuracy of all convex focal losses is in the range of $[65.93\%,82.37\%]$. Out of all the convex focal loss, only $2\%$ can achieve $82.20\%$ test accuracy. It appears that the SIC models are quite resilient to internal parameter changes. Specifically, Figure \ref{fig:TWOandMULTI} (a) shows an ${\bm X}$-shaped pattern. This pattern covers a much larger area compared to the best test accuracy area of convex focal losses in Figure \ref{fig:TWOandMULTI} (c). The ${\bm X}$-shaped pattern relates to the pattern of $\eta$ in Figure \ref{fig:example1} (d). It represents a small deviation from the balanced SIC model, which has $\abs{\alpha_+-1}=\abs{\alpha_--1}$. Essentially, the virtual SIGTRON-induced loss functions $L_{\alpha_+,c_+}^S$ and $L_{\alpha_-,c_-}^S$ of the SIC model have similar polynomial orders, i.e., $k_+ \approx k_-$. Figure \ref{fig:pattern} (b) {\it horse-colic} demonstrates the ${\bm X}$-shaped pattern. 

It is important to note that {\it spectf} dataset in Table \ref{2classimb} is a typical $r_{sc}$-inconsistent dataset. By using this dataset, the connection between $\eta = \frac{k_-}{k_-+k_+}$ and the movement of the hyperplane $h^*_{\alpha_+,\alpha_-}(x)=0$ is empirically demonstrated in Figure \ref{fig:example1}. Notably, the best test accuracy of the dataset is observed in the region $(\alpha_+,\alpha_-)=(-,2)$, which is outside the ${\bm X}$-shaped pattern.

 \begin{figure*}[t]
\centering
\includegraphics[width=7in]{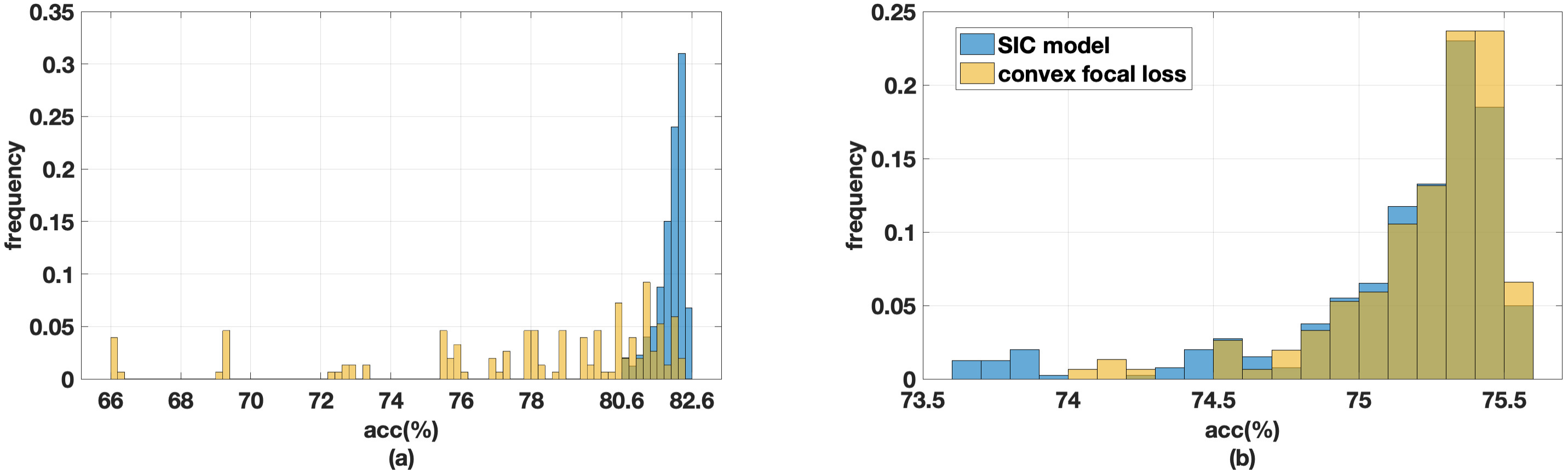}
\caption{The statistical distributions of $20\times20$ SIC models and $19\times 8$ convex focal losses with respect to test classification accuracy. (a) The two-class histogram. The SIC models are in the $[80.64\%, 82.51\%]$ range, and convex focal losses are in the $[65.93\%,82.37\%]$ range. The SIC model is less sensitive to internal parameters $\alpha_{\pm}$ and outperforms the $\pi$-weighted convex focal losses. (b) The multi-class histogram. The SIC models are in the $[73.64\%, 75.57\%]$ range, and convex focal losses are in the $[74.08\%,75.55\%]$ range. Although the SIC models have no external $\pi$-weight parameters, they show comparable performance to $\pi$-weighted convex focal loss. For more information on the matrix pattern of the models, refer to Figure \ref{fig:TWOandMULTI}. }
\label{fig:X}
\end{figure*}

 \begin{figure*}[t]
\centering
\includegraphics[width=3.4in]{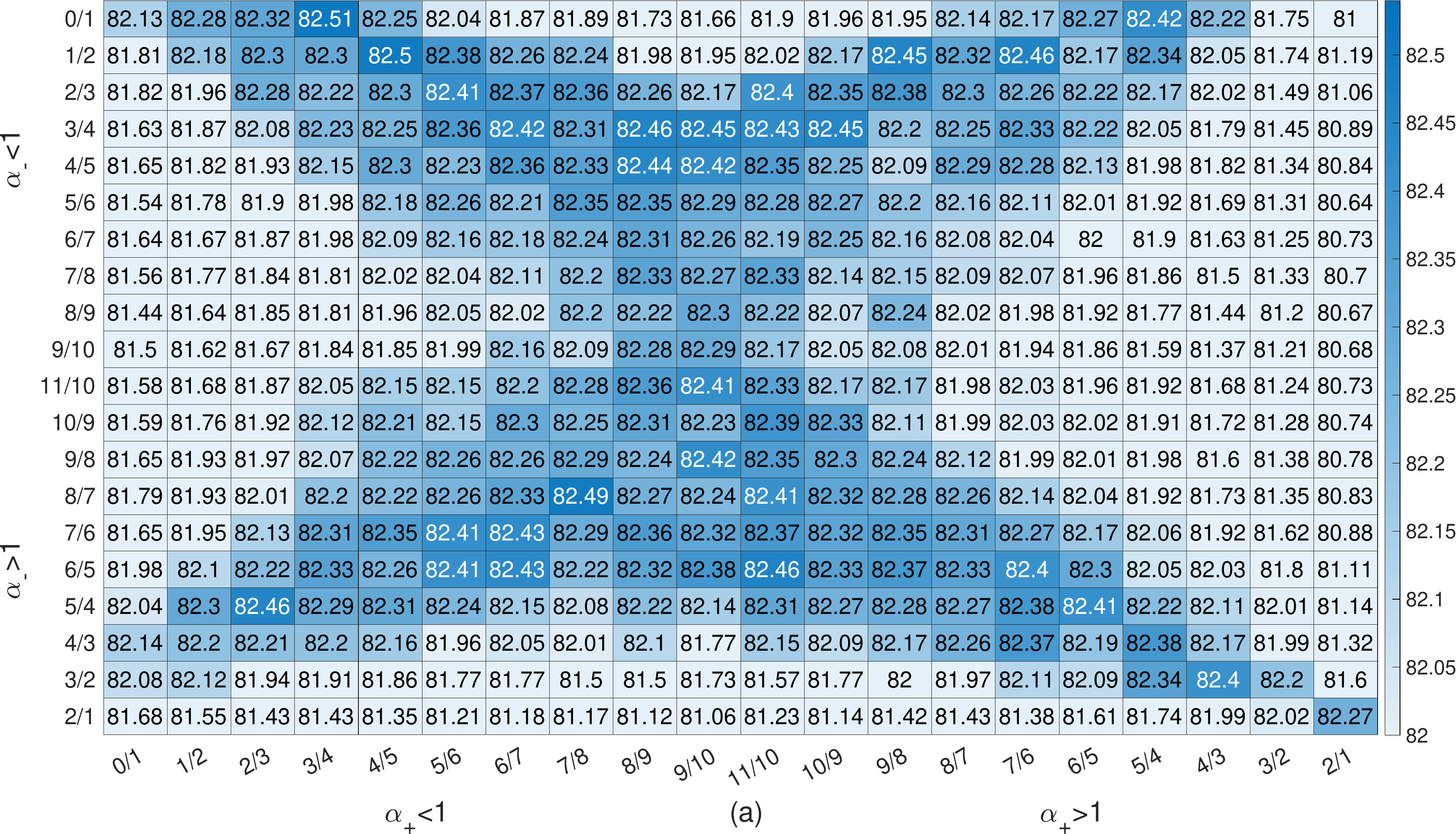} \hskip 0.5cm
\includegraphics[width=3.4in]{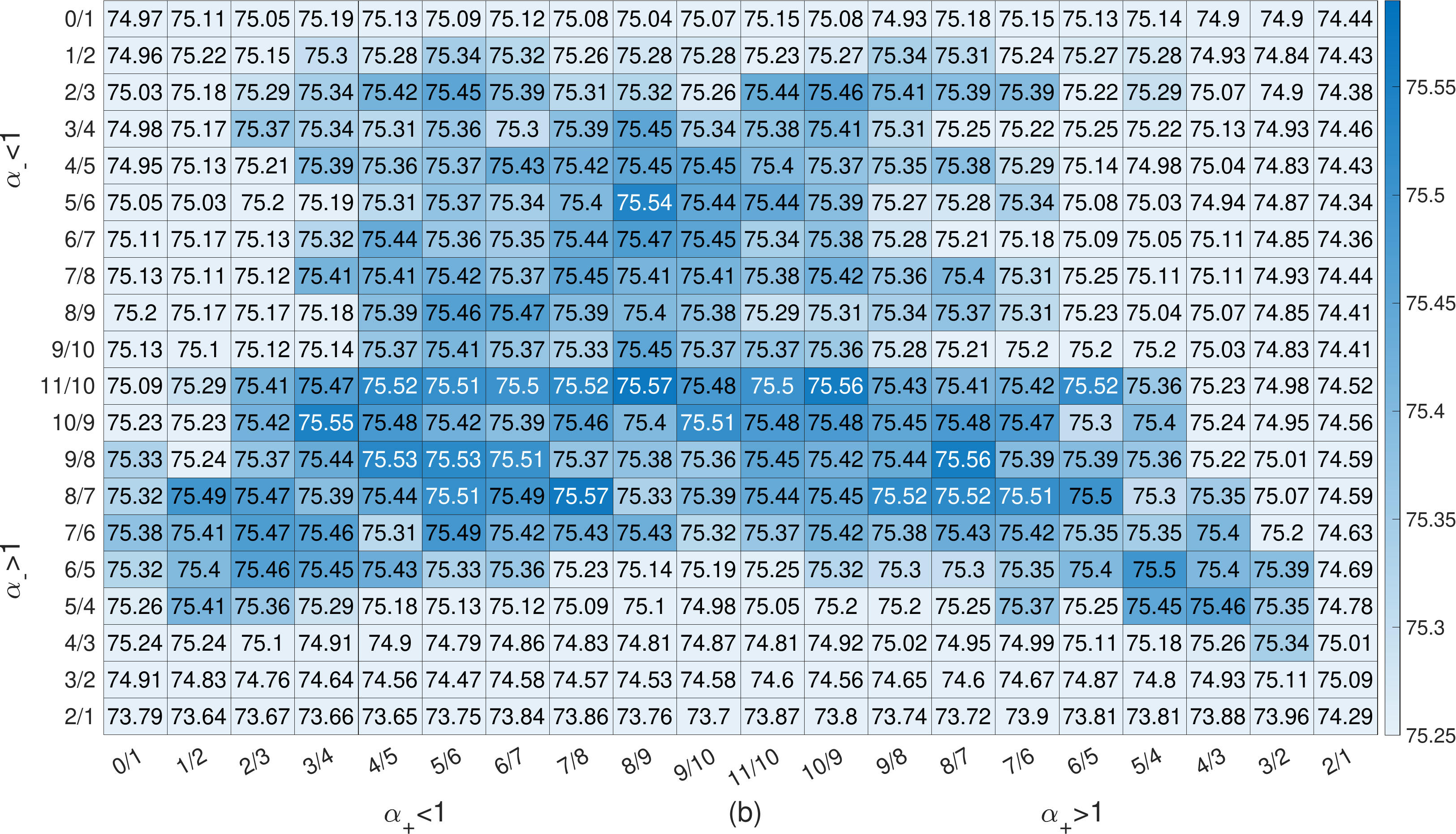} \vskip 0.5cm
\includegraphics[width=3.4in]{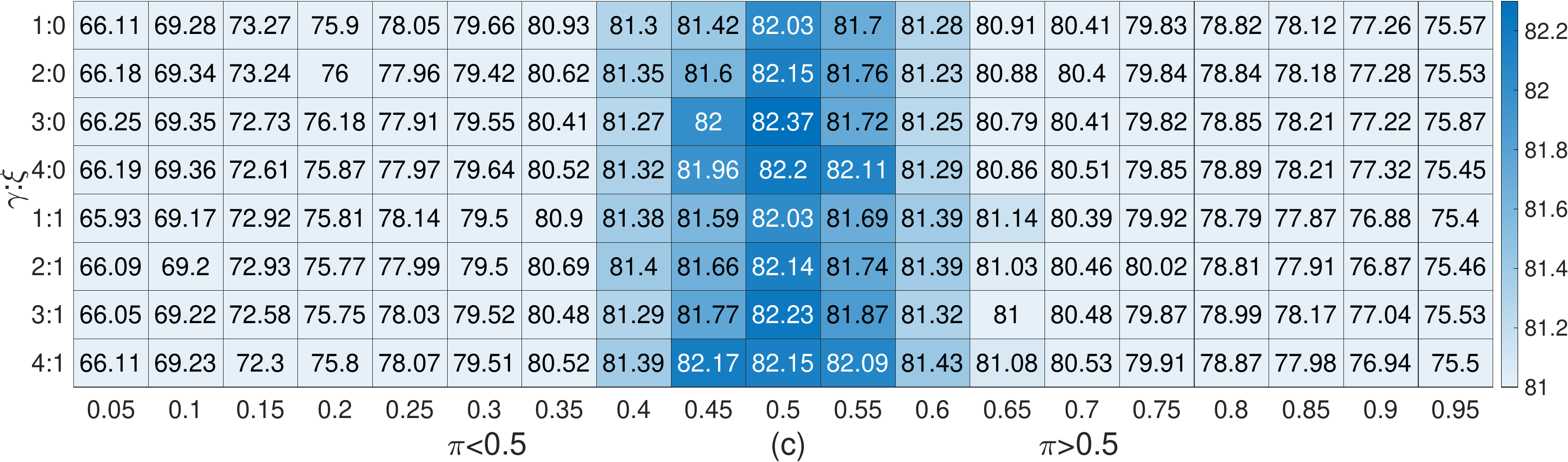} \hskip 0.5cm
\includegraphics[width=3.4in]{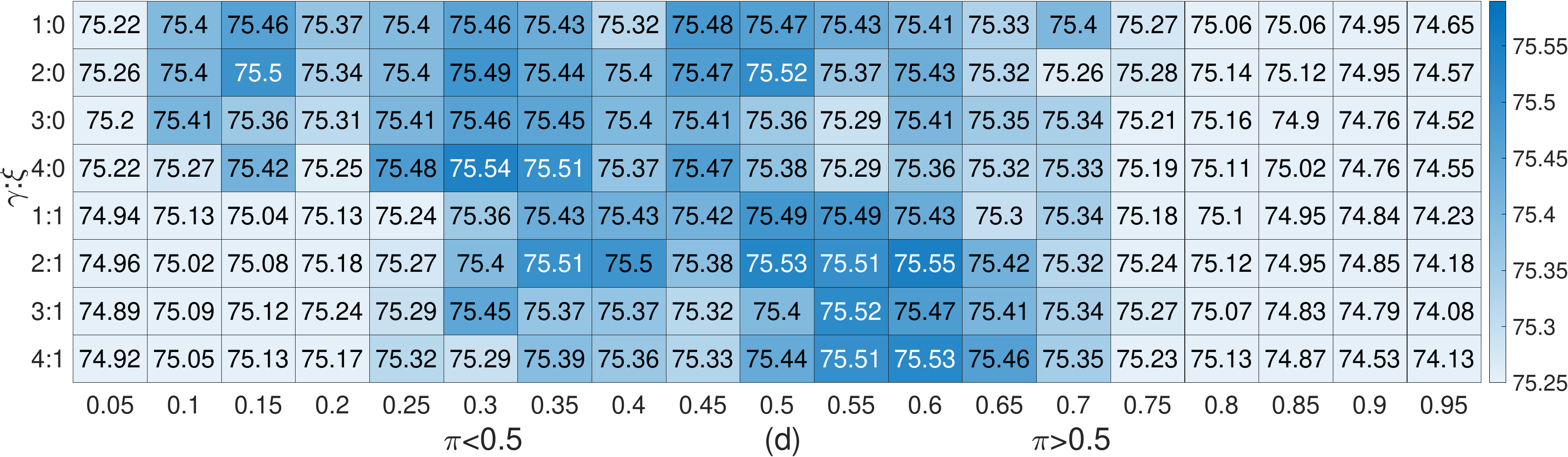}
\caption{Test accuracy(\%) patterns of $20\times20$ SIC models for (a) two-class and (b) multi-class. Test accuracy(\%) patterns of $19\times8$ convex focal losses for (c) two-class and (d) multi-class. As shown in (a), SIC models exhibit an ${\bm X}$-shaped pattern, i.e., $\abs{\alpha_--1} \approx \abs{\alpha_+-1}$. This indicates that the best-performing SIC models have similar polynomial order between $L_{\alpha_+,c_+}^S$ and $L_{\alpha_-,c_-}^S$. This ${\bm X}$-shaped pattern covers a much larger region than the $\pi$-weighted convex focal losses in (c). For more information and statistical distribution of the models, refer to Figure \ref{fig:X}.}
\label{fig:TWOandMULTI}
\end{figure*}

 \begin{figure*}[t]
\centering
\includegraphics[width=3.4in]{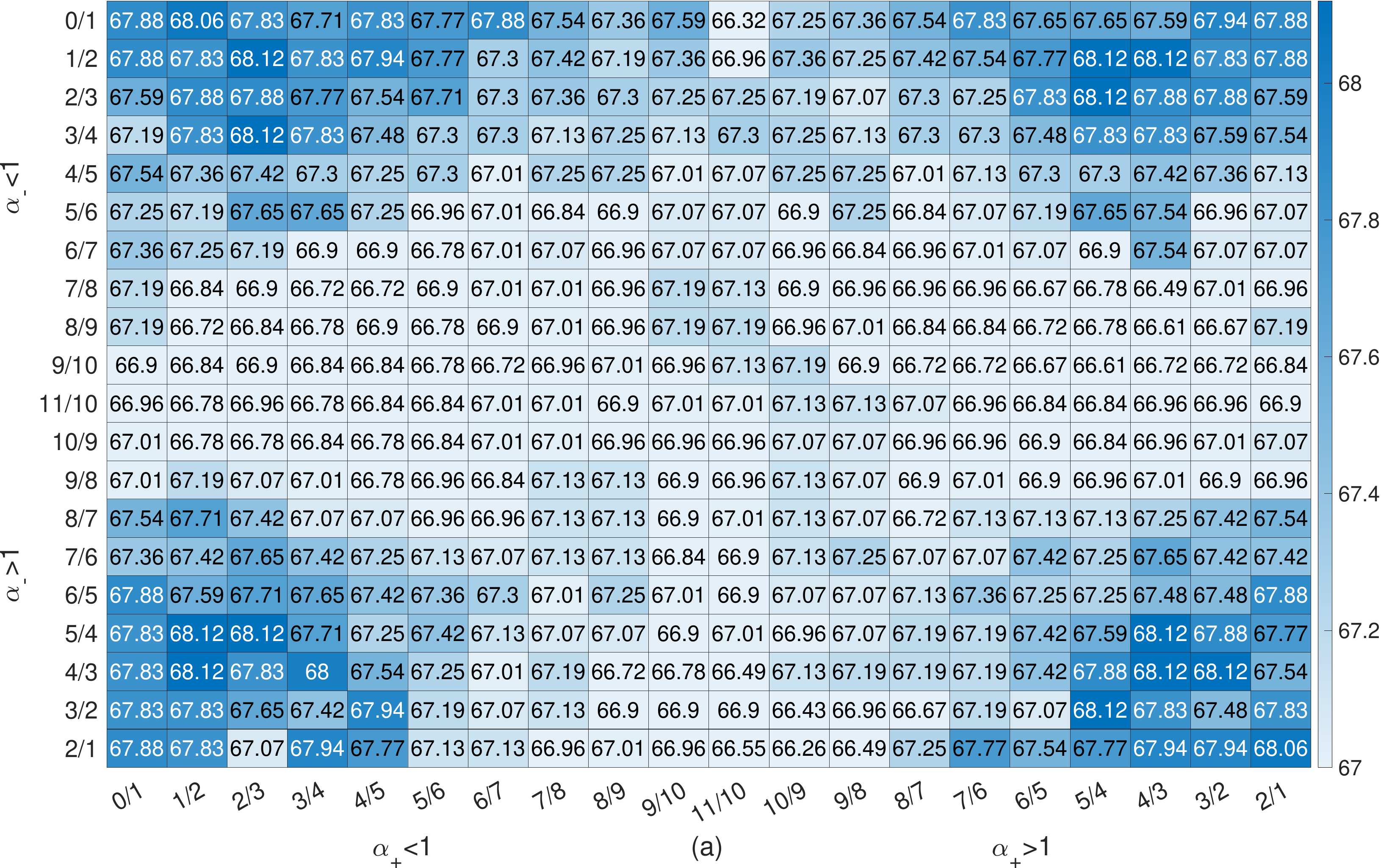} \hskip 0.07in
\includegraphics[width=3.4in]{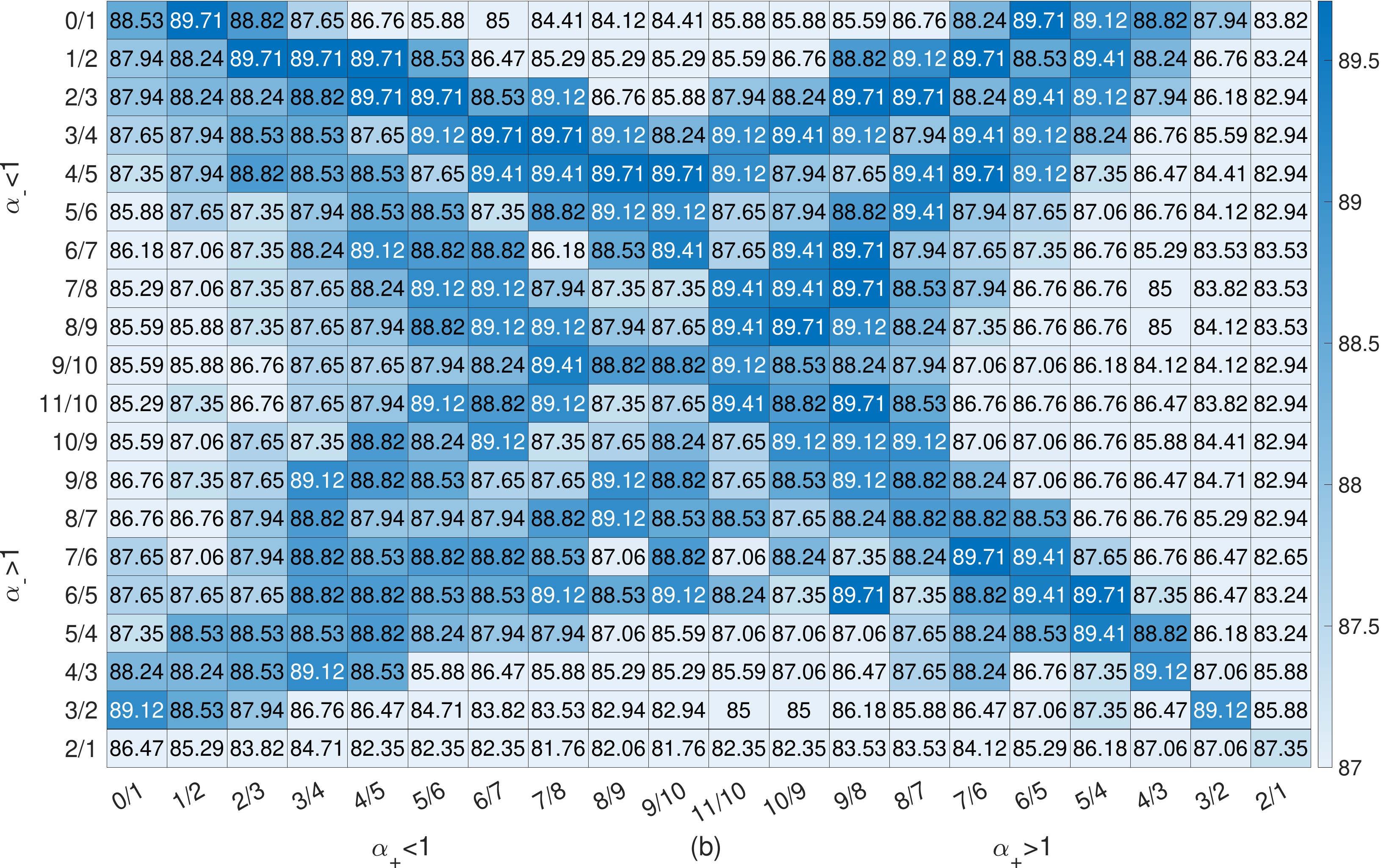}
\vskip 0.05in
\includegraphics[width=3.4in]{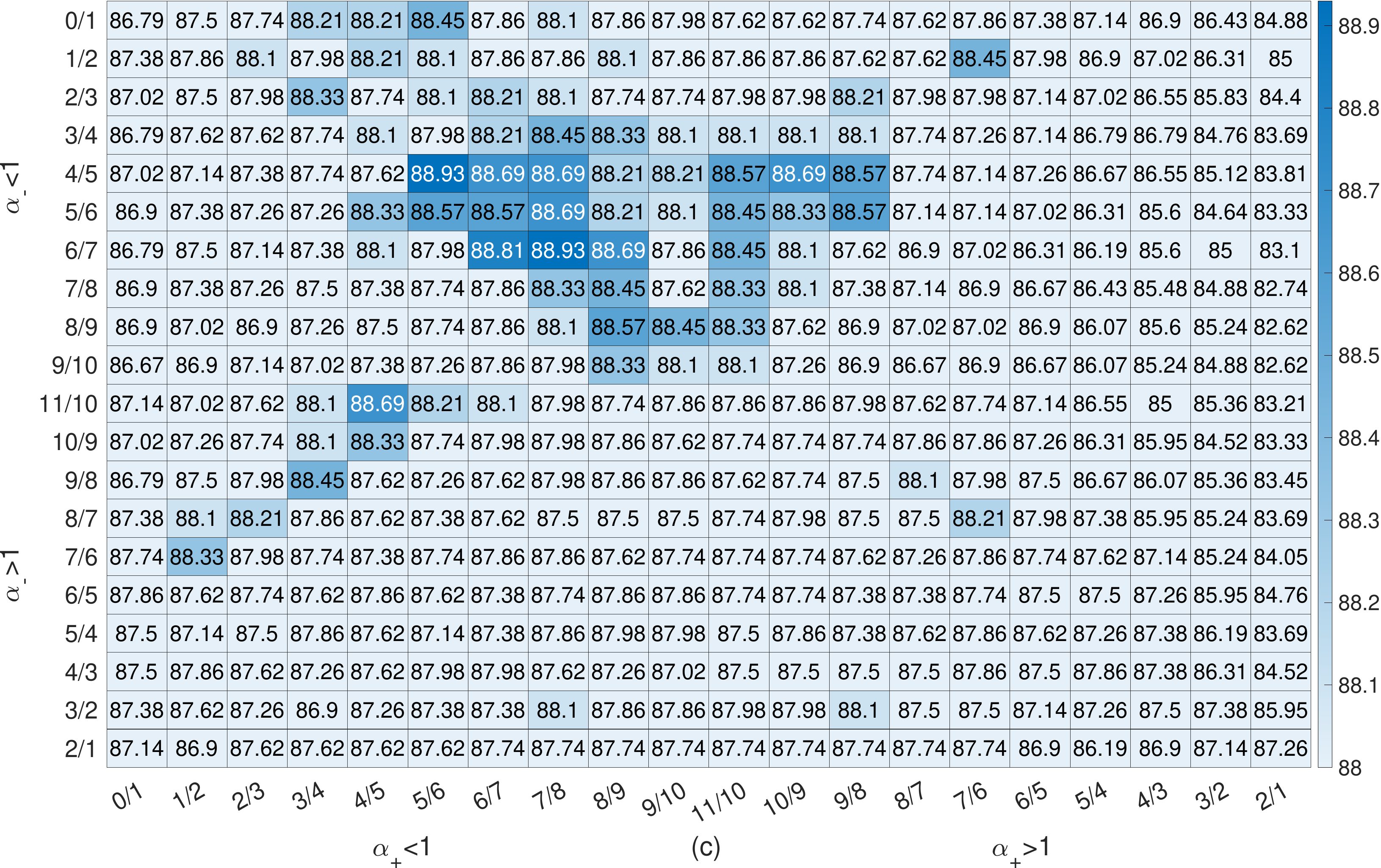} \hskip 0.07in
\includegraphics[width=3.4in]{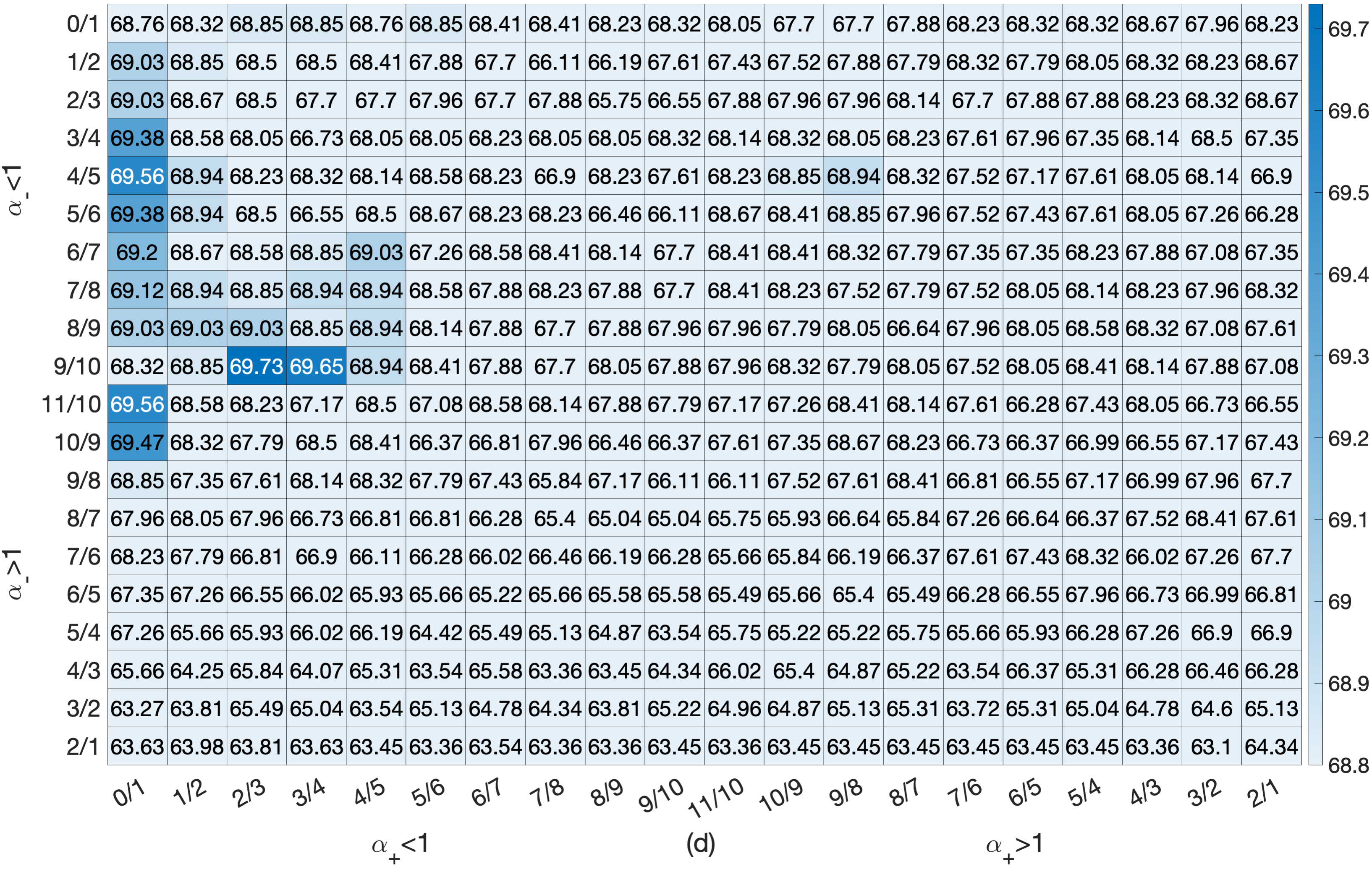}
\vskip 0.05in
\includegraphics[width=3.4in]{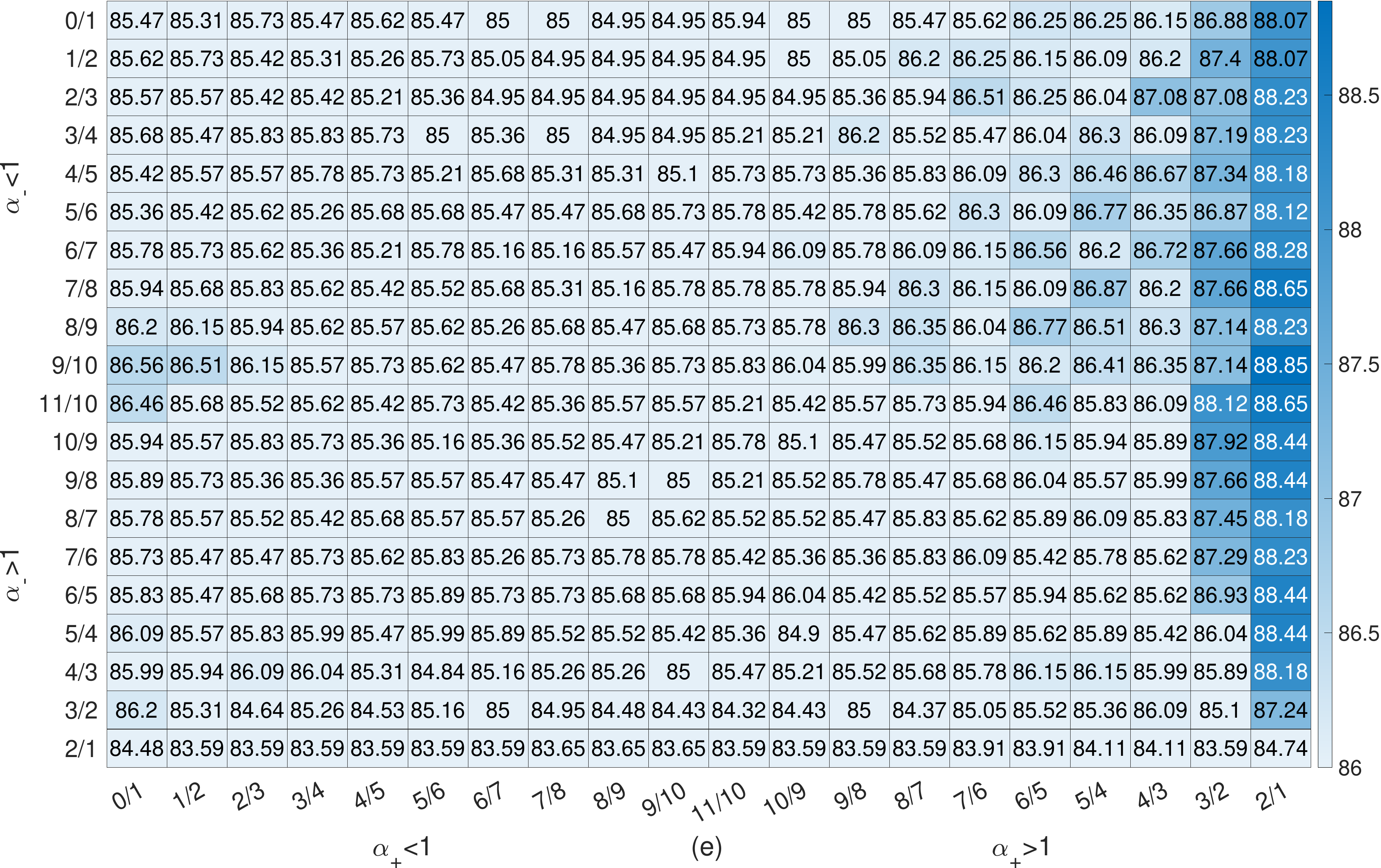} \hskip 0.07in
\includegraphics[width=3.4in]{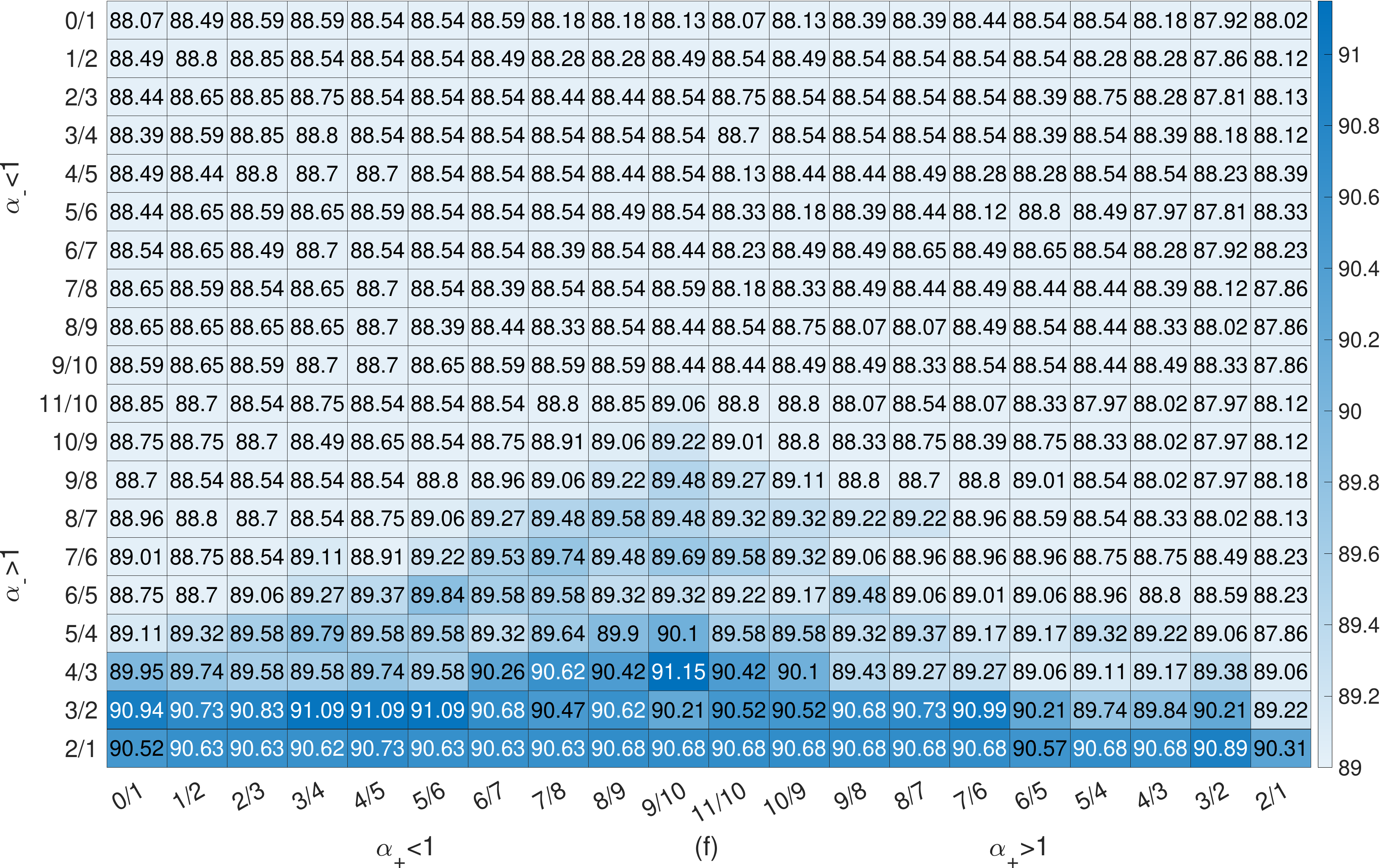}
\caption{Matrix patterns of the $20\times20$ SIC models of the test classification accuracy of (a) {\it statlog-australian-credit}, (b) {\it horse-colic}, (c) {\it ecoli}, (d) {\it arrhythmia}, (e) {\it energy-y1}, and (f)  {\it energy-y2}. Here {\it statlog-australian-credit} and {\it horse-colic} are two-class datasets. And  {\it ecoli}, {\it arrhythmia}, {\it energy-y1}, and {\it energy-y2} are multi-class datasets. Both {\it energy-y1} and {\it energy-y2} have the same input dataset but look for hyperplanes for opposite outputs, i.e., {\it y1} for the heating load, while {\it y2} for the cooling load~\cite{tsanas12}. The best performing $(\alpha_+,\alpha_-)$ for {\it energy-y1} and {\it energy-y2} exhibit opposite patterns: $(\alpha_+,\alpha_-)\approx (2,-)$ for {\it energy-y1}, and $(\alpha_+,\alpha_-) \approx (-, 2)$ for {\it energy-y2}. Overall, the best-performing region of multi-class datasets is more localized than that of two-class datasets.}
\label{fig:pattern}
\end{figure*}

Regarding multi-class datasets, the kernel-based classifier LIBSVM(C-SVC with RBF kernel) achieves the highest test classification accuracy. As presented in Table \ref{table:allperformance}, although the test accuracy of TOP$1$ is less than kernel-based LIBSVM, it still achieves a respectable $77.30\%$, which is $0.62\%$ better than TOP$1$-FL. In Figure \ref{fig:X} (b), we observe that the SIC model, which has only internal polynomial order parameters $k_{\pm} = \frac{1}{\abs{\alpha_{\pm} -1}}$, performs similarly to the convex focal loss, which has the external $\pi$-weight parameter and the internal $\xi$ and $\gamma$ parameters. Note that Figure \ref{fig:TWOandMULTI} (b) shows a pattern of the best-performing SIC model in the $(\alpha_+,\alpha_-)$-matrix. Compared to two-class, the ${\bm X}$-shaped pattern is rounded and biased toward $\alpha_->1$. In the case of convex focal loss in Figure \ref{fig:TWOandMULTI} (d), the best-performing region is much larger than the two-class convex focal loss. The region is shifted towards $\pi<0.5$.

Lastly, regarding computation time, L2SVM(primal) and logistic regression(primal) of LIBLINEAR are the fastest models. These models use the truncated Newton method~\cite{galli22} that is based on the unique Hessian structure of the large-margin linear classifier. On the other hand, for both the SIC model and convex focal loss, the proposed {\it Quasi-Newton(L-BFGS) optimization for virtual convex loss} is used. As shown in Figure \ref{fig:ALLclass} (b) in Appendix \ref{appB}, the $\pi$-weighted convex focal loss with $\pi=0.5,\gamma=1,\xi=0$, which corresponds to the logistic loss of LIBLINEAR, achieves reasonable performance-computation complexity, resulting in $78.30\%$ test accuracy at $83$ seconds. It is worth noting that the logistic regression of LIBLINEAR only obtains $77.93\%$ test accuracy at $60$ seconds.

Figure \ref{fig:pattern} demonstrates patterns of test classification accuracy for two-class datasets, {\it statlog-australian-credit} and {\it horse-colic} and for multi-class datasets, {\it ecoli}, {\it arrhythmia}, {\it energy-y1}, and {\it energy-y2}. Overall, the best test accuracy regions of multi-class datasets are more localized than those of two-class datasets. The ${\bm X}$-shaped pattern in Figure \ref{fig:TWOandMULTI} (a) is also observed in the {\it horse-colic} dataset in Figure \ref{fig:pattern} (b). Both {\it energy-y1} and {\it energy-y2} have the same input dataset but look for hyperplanes for opposite outputs. Specifically, {\it energy-y1} is used to determine the heating load, while {\it energy-y2} is used to determine the cooling load~\cite{tsanas12}. The best performing $(\alpha_+,\alpha_-)$ for {\it energy-y1} and {\it energy-y2} exhibit opposite patterns: $(\alpha_+,\alpha_-)\approx (2,-)$ for {\it energy-y1}, and $(\alpha_+,\alpha_-) \approx (-, 2)$ for {\it energy-y2}. Refer to Figure \ref{fig:pattern} (e) and (f) for further details. Understanding the correlation between the pattern of $(\alpha_+,\alpha_-)$-matrix and the structure of each dataset can be a valuable tool for multi-label classification and imbalanced classification.

\begin{table*}
\centering
\begin{tiny}\begin{tabular}{l||c|c|c|c||c|c|c|c||c|c|c||c}
\hline
\multicolumn{1}{l||}{{\bf MODEL}}&\multicolumn{4}{c||}{{\bf SIC model}\eqref{linmin}}&\multicolumn{4}{c||}{\textbf{Convex Focal Loss}\cite{lin20}}&\multicolumn{3}{c||}{\textbf{LIBLINEAR}\cite{fan08}}&\multicolumn{1}{c}{\textbf{LIBSVM}\cite{chang11}}\\\hline
\textbf{SubModel}&\textbf{TOP$1$}&\textbf{MaxA}&\textbf{Max$2$}&\textbf{MaxM}&\textbf{TOP$1$-FL}&\textbf{MaxA-FL}&\textbf{Max$2$-FL}&\textbf{MaxM-FL}&\textbf{Logistic}&\textbf{SVM}&\textbf{L2SVM}&\textbf{C-SVC}\\
$(\alpha_+,\alpha_-)$ or $(\pi,\gamma,\xi)$&-&$(\frac{7}{8},\frac{8}{7})$&$(\frac{3}{4},0)$&$(\frac{8}{9},\frac{11}{10})$&-&$(0.5,2,1)$&$(0.5,3,0)$&$(0.6,2,1)$&(primal)&(dual)&(primal)&\textbf{RBF} Kernel\\\hline\hline
\textbf{acute-inflammation  }&{\bf 100.00}&{\bf 100.00}&{\bf 100.00}&{\bf 100.00}&{\bf 100.00}&{\bf 100.00}&{\bf 100.00}&{\bf 100.00}&{\bf 100.00}&{\bf 100.00}&{\bf 100.00}&{\bf 100.00}\\\hline
\textbf{acute-nephritis  }&{\bf 100.00}&{\bf 100.00}&{\bf 100.00}&{\bf 100.00}&{\bf 100.00}&{\bf 100.00}&{\bf 100.00}&{\bf 100.00}&{\bf 100.00}&{\bf 100.00}&{\bf 100.00}&{\bf 100.00}\\\hline
\textbf{adult  }&84.31&84.24&84.00&84.26&84.38&84.38&84.29&83.73&84.28&84.33&84.05&{\bf 85.03}\\\hline
\textbf{balloons  }&{\bf 87.50}&{\bf 87.50}&{\bf 87.50}&{\bf 87.50}&{\bf 87.50}&{\bf 87.50}&{\bf 87.50}&72.50&{\bf 87.50}&{\bf 87.50}&{\bf 87.50}&{\bf 87.50}\\\hline
\textbf{bank  }&89.00&88.79&88.82&88.81&{\bf 89.51}&88.75&88.81&88.68&88.83&88.19&88.81&88.75\\\hline
\textbf{blood  }&{\bf 77.17}&76.58&76.68&76.15&76.74&75.99&76.20&76.74&76.20&76.20&75.67&76.84\\\hline
\textbf{breast-cancer  }&72.73&72.03&71.61&71.89&73.29&71.61&71.47&72.45&71.05&69.51&70.77&{\bf 75.66}\\\hline
\textbf{breast-cancer-wisc  }&{\bf 96.85}&96.62&96.56&96.62&{\bf 96.85}&96.50&96.33&95.82&96.50&96.62&96.68&95.99\\\hline
\textbf{breast-cancer-wisc-diag  }&{\bf 98.38}&98.17&97.75&98.10&98.31&98.24&98.17&98.24&98.24&97.46&97.96&{\bf 98.38}\\\hline
\textbf{breast-cancer-wisc-prog  }&81.41&79.60&78.59&79.80&{\bf 82.02}&78.79&79.19&79.60&78.38&75.15&79.39&78.38\\\hline
\textbf{chess-krvkp  }&96.93&96.12&96.77&96.46&97.01&96.71&96.53&96.78&96.48&96.33&96.68&{\bf 98.82}\\\hline
\textbf{congressional-voting  }&60.92&58.06&59.17&58.80&{\bf 61.94}&59.26&58.53&60.46&57.70&59.91&57.70&57.33\\\hline
\textbf{conn-bench-sonar-mines-rocks  }&79.23&76.35&75.96&77.88&78.46&75.77&77.12&75.19&75.58&77.50&75.77&{\bf 84.42}\\\hline
\textbf{connect-4  }&75.49&75.45&75.39&75.45&75.50&75.42&75.47&75.38&75.47&75.38&75.41&{\bf 86.26}\\\hline
\textbf{credit-approval  }&89.39&88.58&87.48&89.16&{\bf 89.62}&88.29&88.75&87.36&88.58&87.54&87.88&86.84\\\hline
\textbf{cylinder-bands  }&74.84&73.91&73.28&73.67&74.92&72.42&73.12&71.41&73.83&74.14&73.52&{\bf 77.34}\\\hline
\textbf{echocardiogram  }&86.77&84.92&84.92&84.62&87.38&84.62&84.62&86.15&85.85&{\bf 87.69}&86.15&87.38\\\hline
\textbf{fertility  }&89.60&88.40&88.00&89.60&{\bf 90.00}&84.80&89.60&86.80&85.60&87.20&86.00&86.80\\\hline
\textbf{haberman-survival  }&74.12&73.59&73.59&73.59&74.38&73.59&73.59&72.55&73.86&{\bf 74.77}&73.86&71.63\\\hline
\textbf{heart-hungarian  }&87.89&86.67&86.67&86.67&{\bf 88.03}&85.99&86.53&84.90&86.67&85.03&86.67&86.12\\\hline
\textbf{hepatitis  }&81.04&79.22&77.92&77.66&{\bf 81.82}&75.84&77.40&76.88&77.66&75.06&76.62&81.04\\\hline
\textbf{hill-valley  }&{\bf 92.08}&83.43&{\bf 92.08}&81.39&86.34&81.06&84.75&76.63&80.86&56.70&80.00&65.41\\\hline
\textbf{horse-colic  }&{\bf 89.71}&88.82&87.65&87.35&{\bf 89.71}&88.24&89.41&87.65&87.65&87.06&87.06&85.00\\\hline
\textbf{ilpd-indian-liver  }&{\bf 73.40}&{\bf 73.40}&72.10&72.16&72.58&72.23&71.96&71.55&71.96&71.48&73.06&71.41\\\hline
\textbf{ionosphere  }&88.80&86.29&86.40&86.51&87.09&86.63&86.63&86.97&88.34&87.77&86.74&{\bf 94.63}\\\hline
\textbf{magic  }&79.64&79.12&79.64&79.01&79.45&79.08&79.15&78.20&79.04&78.85&78.97&{\bf 87.20}\\\hline
\textbf{miniboone  }&90.35&90.16&89.77&90.26&90.30&90.30&90.26&90.14&89.79&89.98&89.04&{\bf 93.50}\\\hline
\textbf{molec-biol-promoter  }&{\bf 84.53}&77.36&76.60&76.60&79.25&76.98&77.36&76.60&78.11&76.23&76.23&76.98\\\hline
\textbf{mammographic  }&{\bf 83.75}&83.25&82.96&82.75&83.58&82.87&83.46&83.46&83.50&83.17&83.08&83.21\\\hline
\textbf{mushroom  }&95.24&94.52&95.02&94.60&97.46&95.62&94.58&94.85&94.40&97.64&93.89&{\bf 100.00}\\\hline
\textbf{musk-1  }&84.45&81.93&83.28&82.86&85.46&82.69&82.18&84.87&81.51&83.28&83.19&{\bf 91.18}\\\hline
\textbf{musk-2  }&94.99&94.63&94.56&94.76&95.25&94.80&94.63&95.06&94.67&95.02&94.72&{\bf 99.19}\\\hline
\textbf{oocytes-merluccius-nucleus-4d  }&{\bf 83.01}&82.04&82.04&81.41&82.54&82.07&82.11&81.57&81.80&80.74&82.74&82.07\\\hline
\textbf{oocytes-trisopterus-nucleus-2f  }&81.32&79.91&80.39&79.08&80.88&80.39&78.55&78.42&78.51&78.68&78.73&{\bf 82.19}\\\hline
\textbf{ozone  }&97.22&97.13&97.13&97.08&{\bf 97.32}&97.07&97.11&97.08&97.15&97.10&97.13&96.99\\\hline
\textbf{parkinsons  }&84.12&82.68&82.47&83.09&85.77&83.92&83.09&82.06&82.47&83.51&83.71&{\bf 91.34}\\\hline
\textbf{pima  }&76.98&75.52&75.36&76.61&{\bf 77.14}&76.72&76.61&76.09&76.35&75.31&76.35&73.80\\\hline
\textbf{pittsburg-bridges-T-OR-D  }&89.02&87.84&88.24&88.63&{\bf 90.20}&88.63&88.24&87.84&89.80&86.67&{\bf 90.20}&87.06\\\hline
\textbf{planning  }&70.55&67.47&65.27&67.03&{\bf 71.43}&65.93&67.47&67.91&64.40&70.11&65.27&69.45\\\hline
\textbf{ringnorm  }&77.95&77.37&77.86&77.11&77.87&77.25&76.89&74.16&76.86&77.51&77.11&{\bf 98.74}\\\hline
\textbf{spambase  }&92.88&92.82&92.72&92.44&93.03&92.77&92.45&91.42&92.37&92.78&92.22&{\bf 93.23}\\\hline
\textbf{spect  }&66.67&62.90&62.90&62.26&{\bf 67.85}&61.83&62.04&59.78&64.30&65.16&62.37&60.00\\\hline
\textbf{spectf  }&{\bf 64.60}&56.36&57.86&52.94&61.50&50.48&52.09&46.31&48.98&47.27&49.30&46.84\\\hline
\textbf{statlog-australian-credit  }&{\bf 68.12}&67.13&67.71&66.90&67.94&67.54&67.13&62.55&67.13&67.83&66.96&66.26\\\hline
\textbf{statlog-german-credit  }&77.16&76.96&76.44&76.60&{\bf 77.40}&75.92&76.20&74.96&76.80&76.24&77.16&76.08\\\hline
\textbf{statlog-heart  }&{\bf 88.59}&87.11&86.52&88.00&87.56&87.41&87.26&84.15&86.67&84.59&87.70&84.74\\\hline
\textbf{tic-tac-toe  }&97.91&97.91&97.91&97.91&97.91&97.91&97.91&97.91&97.91&97.91&97.91&{\bf 98.62}\\\hline
\textbf{titanic  }&78.09&77.55&77.55&77.55&78.09&77.55&77.55&77.55&77.55&77.55&77.55&{\bf 78.42}\\\hline
\textbf{trains  }&{\bf 64.00}&60.00&60.00&60.00&60.00&60.00&60.00&60.00&60.00&60.00&60.00&40.00\\\hline
\textbf{twonorm  }&97.66&97.57&97.54&97.56&97.68&97.62&97.44&97.40&97.57&97.49&97.49&{\bf 97.69}\\\hline
\textbf{vertebral-column-2clases  }&85.68&83.23&81.29&82.97&{\bf 87.61}&83.35&83.10&86.06&82.97&81.94&82.32&82.32\\\hline\hline
\textbf{Mean}&{\bf 83.96}&82.49&82.51&82.36&83.80&82.14&82.37&81.39&82.11&81.59&82.06&83.22\\\hline
\end{tabular}
\caption{A comparison of two-class test classification accuracy of the SIC model~\eqref{linmin} to $\pi$-weighted convex focal loss~\cite{lin20}, LIBLINEAR~\cite{fan08}, and LIBSVM~\cite{chang11}. Our results indicate that TOP$1$ performs better than kernel-based LIBSVM and the corresponding TOP$1$-FL. Max$2$ offers the best test classification accuracy among various linear classifiers.}
\label{table:TwoclassTOP5}
\end{tiny}
\end{table*}

\begin{table*}
\centering
\begin{tiny}\begin{tabular}{l||c|c|c|c||c|c|c|c||c|c|c||c}
\hline
\multicolumn{1}{l||}{{\bf MODEL}}&\multicolumn{4}{c||}{{\bf SIC model}\eqref{linmin}}&\multicolumn{4}{c||}{\textbf{Convex Focal Loss}\cite{lin20}}&\multicolumn{3}{c||}{\textbf{LIBLINEAR}\cite{fan08}}&\multicolumn{1}{c}{\textbf{LIBSVM}\cite{chang11}}\\\hline
\textbf{SubModel}&\textbf{TOP$1$}&\textbf{MaxA}&\textbf{Max$2$}&\textbf{MaxM}&\textbf{TOP$1$-FL}&\textbf{MaxA-FL}&\textbf{Max$2$-FL}&\textbf{MaxM-FL}&\textbf{Logistic}&\textbf{SVM}&\textbf{L2SVM}&\textbf{C-SVC}\\
$(\alpha_+,\alpha_-)$ or $(\pi,\gamma,\xi)$&-&$(\frac{7}{8},\frac{8}{7})$&$(\frac{3}{4},0)$&$(\frac{8}{9},\frac{11}{10})$&-&$(0.5,2,1)$&$(0.5,3,0)$&$(0.6,2,1)$&(primal)&(dual)&(primal)&\textbf{RBF} Kernel\\\hline\hline
\textbf{abalone  }&65.78&65.23&65.43&65.08&65.57&65.31&65.12&65.36&65.18&60.31&65.14&{\bf 66.22}\\\hline
\textbf{annealing  }&87.77&86.27&87.02&86.12&87.12&86.22&85.91&86.97&86.87&87.77&86.97&{\bf 92.08}\\\hline
\textbf{arrhythmia  }&{\bf 69.73}&65.40&68.85&67.88&69.47&68.32&68.76&67.88&68.50&65.93&64.16&67.52\\\hline
\textbf{audiology-std  }&{\bf 80.00}&76.00&78.40&76.00&{\bf 80.00}&72.80&71.20&73.60&70.40&68.80&68.80&65.60\\\hline
\textbf{balance-scale  }&88.46&88.14&88.01&88.27&88.46&88.40&88.14&88.40&88.21&88.21&88.08&{\bf 98.85}\\\hline
\textbf{breast-tissue  }&{\bf 69.06}&66.04&65.28&66.04&66.42&66.42&66.42&66.42&65.28&64.15&66.04&65.66\\\hline
\textbf{car  }&82.64&82.52&81.48&82.41&82.94&82.36&82.36&82.04&82.41&79.93&81.34&{\bf 98.47}\\\hline
\textbf{cardiotocography-10clases  }&79.49&78.46&77.57&78.19&78.74&77.46&77.99&77.84&78.04&73.89&77.38&{\bf 80.94}\\\hline
\textbf{cardiotocography-3clases  }&90.05&89.73&89.56&89.76&90.48&89.78&89.71&89.50&89.84&90.08&89.76&{\bf 91.95}\\\hline
\textbf{chess-krvk  }&28.07&27.83&27.38&27.77&28.10&28.03&27.82&27.96&27.84&17.36&27.59&{\bf 76.09}\\\hline
\textbf{conn-bench-vowel-deterding  }&57.65&52.73&54.24&52.80&55.68&53.26&52.65&54.09&51.89&50.53&54.47&{\bf 96.06}\\\hline
\textbf{contrac  }&51.66&50.98&50.65&50.82&51.22&50.38&50.87&50.49&50.87&46.44&50.03&{\bf 52.47}\\\hline
\textbf{dermatology  }&{\bf 99.34}&97.81&97.38&98.03&98.14&97.60&98.03&98.03&97.81&96.72&97.27&98.14\\\hline
\textbf{ecoli  }&{\bf 88.93}&87.50&88.21&87.74&88.57&88.10&88.21&88.21&87.98&86.07&87.98&85.60\\\hline
\textbf{energy-y1  }&88.85&85.26&85.47&85.57&89.06&85.00&85.57&85.89&86.20&85.16&86.09&{\bf 94.06}\\\hline
\textbf{energy-y2  }&91.15&89.48&88.59&88.85&90.10&88.75&88.23&88.54&89.43&88.85&89.69&{\bf 92.71}\\\hline
\textbf{flags  }&54.85&53.40&53.81&53.40&54.23&52.99&53.20&53.20&52.99&{\bf 55.05}&53.40&53.61\\\hline
\textbf{glass  }&68.22&62.06&63.93&62.80&65.42&64.86&62.80&64.11&63.93&60.00&65.23&{\bf 70.09}\\\hline
\textbf{hayes-roth  }&62.14&53.57&53.57&53.57&57.86&53.57&53.57&53.57&50.71&42.14&45.00&{\bf 77.14}\\\hline
\textbf{heart-cleveland  }&{\bf 64.11}&61.99&62.25&62.52&63.84&62.65&62.65&61.85&61.32&61.19&62.78&59.60\\\hline
\textbf{heart-switzerland  }&{\bf 41.97}&39.34&39.02&39.34&39.67&37.05&38.03&38.03&35.74&37.38&35.74&40.33\\\hline
\textbf{heart-va  }&{\bf 32.20}&29.40&29.40&29.60&{\bf 32.20}&29.60&29.40&29.80&28.20&31.60&27.00&28.00\\\hline
\textbf{image-segmentation  }&91.14&90.13&90.70&90.67&{\bf 91.30}&91.28&90.55&90.89&90.39&90.82&90.18&90.94\\\hline
\textbf{iris  }&{\bf 97.33}&93.33&94.67&94.40&96.00&94.93&94.67&94.67&94.67&91.47&94.40&96.00\\\hline
\textbf{led-display  }&{\bf 72.12}&71.12&70.80&71.56&71.80&70.36&71.32&70.48&70.44&69.16&70.32&69.60\\\hline
\textbf{lenses  }&{\bf 83.33}&75.00&75.00&75.00&75.00&75.00&75.00&75.00&75.00&75.00&75.00&75.00\\\hline
\textbf{letter  }&72.61&72.49&70.06&72.38&72.27&71.95&72.25&72.24&72.22&60.57&70.26&{\bf 96.53}\\\hline
\textbf{libras  }&64.56&61.56&61.78&62.11&65.00&62.56&62.33&62.22&62.67&61.56&61.11&{\bf 78.22}\\\hline
\textbf{low-res-spect  }&89.96&88.15&88.60&88.15&88.83&88.23&87.77&87.55&88.30&88.30&88.23&{\bf 90.42}\\\hline
\textbf{lung-cancer  }&{\bf 62.50}&61.25&57.50&{\bf 62.50}&{\bf 62.50}&{\bf 62.50}&{\bf 62.50}&{\bf 62.50}&57.50&60.00&58.75&56.25\\\hline
\textbf{lymphography  }&84.59&83.24&81.62&83.78&{\bf 85.14}&82.16&82.97&81.89&82.43&82.16&82.43&80.27\\\hline
\textbf{molec-biol-splice  }&84.13&82.98&82.70&82.75&82.81&82.71&82.46&82.62&82.41&81.93&81.98&{\bf 85.03}\\\hline
\textbf{nursery  }&90.64&89.87&89.88&89.84&90.04&89.86&89.86&89.98&89.84&89.51&89.81&{\bf 99.31}\\\hline
\textbf{oocytes-merluccius-states-2f  }&{\bf 92.41}&91.86&92.02&91.94&92.09&91.70&91.74&91.70&91.51&92.02&91.66&91.66\\\hline
\textbf{oocytes-trisopterus-states-5b  }&93.29&92.68&92.85&92.72&92.89&92.76&92.81&92.76&92.50&93.07&92.59&{\bf 93.60}\\\hline
\textbf{optical  }&94.98&94.78&94.64&94.82&95.05&94.92&94.79&94.75&94.74&94.31&94.76&{\bf 97.28}\\\hline
\textbf{page-blocks  }&{\bf 96.83}&96.43&96.21&96.39&96.62&96.24&96.24&96.12&96.29&95.96&96.05&96.64\\\hline
\textbf{pendigits  }&89.86&89.68&89.53&89.38&89.85&89.52&89.54&89.66&89.85&89.47&89.67&{\bf 97.66}\\\hline
\textbf{pittsburg-bridges-MATERIAL  }&88.30&87.92&87.17&86.79&87.55&86.42&86.42&86.79&87.55&{\bf 88.68}&{\bf 88.68}&85.66\\\hline
\textbf{pittsburg-bridges-REL-L  }&{\bf 69.80}&67.45&67.06&67.45&68.63&68.24&63.92&67.84&65.49&66.27&64.31&{\bf 69.80}\\\hline
\textbf{pittsburg-bridges-SPAN  }&79.57&76.09&69.57&74.35&{\bf 80.00}&77.39&74.35&77.39&75.65&74.78&75.22&70.00\\\hline
\textbf{pittsburg-bridges-TYPE  }&{\bf 65.77}&63.85&{\bf 65.77}&62.31&65.38&62.31&62.69&63.46&64.62&58.85&63.46&61.15\\\hline
\textbf{plant-margin  }&78.67&77.70&76.12&77.90&78.53&78.03&78.07&78.00&69.50&58.50&64.80&{\bf 80.12}\\\hline
\textbf{plant-shape  }&55.25&54.22&49.47&54.65&54.62&53.65&53.85&53.40&50.37&40.92&47.10&{\bf 67.00}\\\hline
\textbf{plant-texture  }&80.50&78.15&79.32&78.90&79.77&79.05&78.92&78.97&76.35&70.46&75.02&{\bf 80.88}\\\hline
\textbf{post-operative  }&{\bf 67.11}&66.67&63.56&63.11&66.67&64.89&62.67&64.44&54.67&57.33&55.56&63.11\\\hline
\textbf{primary-tumor  }&{\bf 49.09}&46.55&43.64&45.70&47.76&47.52&45.33&46.55&45.21&41.21&42.42&45.21\\\hline
\textbf{seeds  }&{\bf 94.29}&93.71&93.52&{\bf 94.29}&{\bf 94.29}&93.52&93.71&93.52&92.76&90.48&92.19&91.43\\\hline
\textbf{semeion  }&92.69&92.19&91.56&91.91&92.46&91.48&91.96&91.58&89.12&86.18&85.43&{\bf 94.65}\\\hline
\textbf{soybean  }&86.93&84.84&85.88&85.10&86.80&85.88&84.97&85.62&85.36&{\bf 88.24}&86.67&87.71\\\hline
\textbf{statlog-image  }&93.11&92.28&92.16&92.28&93.65&92.92&92.24&92.31&91.76&91.90&91.26&{\bf 95.64}\\\hline
\textbf{statlog-landsat  }&83.99&82.20&81.30&81.99&81.73&81.58&81.68&81.54&81.91&80.37&81.15&{\bf 91.89}\\\hline
\textbf{statlog-shuttle  }&96.54&93.49&93.41&93.50&94.55&93.88&93.61&93.63&93.12&89.87&92.49&{\bf 99.90}\\\hline
\textbf{statlog-vehicle  }&78.68&77.68&78.53&77.78&79.10&77.73&77.73&77.92&77.78&77.07&77.68&{\bf 81.89}\\\hline
\textbf{steel-plates  }&71.61&71.03&69.75&70.95&71.53&70.76&70.62&70.52&70.47&70.02&70.62&{\bf 75.03}\\\hline
\textbf{synthetic-control  }&97.93&97.27&93.00&97.33&97.73&94.27&96.60&96.07&91.33&87.87&89.20&{\bf 99.00}\\\hline
\textbf{teaching  }&49.60&48.27&46.67&48.53&50.13&48.80&48.80&48.53&48.00&40.80&46.67&{\bf 52.53}\\\hline
\textbf{thyroid  }&96.04&94.75&94.84&94.89&96.06&95.06&95.04&94.99&95.06&95.23&94.46&{\bf 96.66}\\\hline
\textbf{vertebral-column-3clases  }&{\bf 86.97}&85.55&85.16&85.81&86.19&84.52&85.16&83.87&84.00&83.61&84.77&83.35\\\hline
\textbf{wall-following  }&72.85&69.86&69.27&69.77&72.17&69.93&69.52&69.92&69.57&72.89&66.54&{\bf 90.66}\\\hline
\textbf{waveform  }&87.40&86.86&87.09&86.84&{\bf 87.43}&87.08&86.88&87.25&86.66&86.74&86.70&86.92\\\hline
\textbf{waveform-noise  }&{\bf 86.15}&85.86&85.78&85.97&86.06&85.88&86.02&85.86&85.98&85.77&85.90&85.44\\\hline
\textbf{wine  }&{\bf 98.88}&98.43&{\bf 98.88}&{\bf 98.88}&{\bf 98.88}&97.75&98.43&98.20&{\bf 98.88}&{\bf 98.88}&98.65&97.75\\\hline
\textbf{wine-quality-red  }&58.22&57.25&56.52&56.47&58.12&56.42&56.47&56.40&56.85&56.35&56.40&{\bf 61.20}\\\hline
\textbf{wine-quality-white  }&53.87&53.13&53.07&53.32&54.02&53.74&53.51&54.02&53.56&47.10&53.27&{\bf 60.21}\\\hline
\textbf{yeast  }&60.75&59.95&59.65&59.62&{\bf 61.11}&60.49&60.19&60.38&60.16&52.26&59.97&60.75\\\hline
\textbf{zoo  }&{\bf 96.00}&{\bf 96.00}&{\bf 96.00}&{\bf 96.00}&{\bf 96.00}&{\bf 96.00}&{\bf 96.00}&{\bf 96.00}&{\bf 96.00}&95.60&{\bf 96.00}&{\bf 96.00}\\\hline\hline
\textbf{Mean}&77.30&{\it 75.57}&75.19&{\it 75.57}&76.68&75.53&75.36&75.55&74.75&72.94&74.18&{\bf 79.96}\\\hline
\end{tabular}
\caption{A comparison of multi-class test classification accuracy of the SIC model~\eqref{linmin} to $\pi$-weighted convex focal loss~\cite{lin20}, LIBLINEAR~\cite{fan08}, and LIBSVM~\cite{chang11}. Although the accuracy of TOP$1$ is less than kernel-based LIBSVM, it still achieves $77.30\%$ accuracy, which is $0.62\%$ better than TOP$1$-FL. Within linear classifiers, the SIC model performs similarly to the convex focal loss having the external $\pi$-weight parameter, i.e., MaxM achieves $75.57\%$ vs. MaxM-FL achieves $75.55\%$.}
\label{table:MxclassTOP5}
\end{tiny}
\end{table*}

\section{Conclusion\label{seccon}}
This article introduces SIGTRON, an extended asymmetric sigmoid function with Perceptron, and its virtualized loss function called virtual SIGTRON-induced loss function. Based on this loss function, we propose the SIGTRON-imbalanced classification (SIC) model for cost-sensitive learning. Unlike other models, the SIC model does not use an external $\pi$-weight on the loss function but instead has an internal two-dimensional parameter $(\alpha_+,\alpha_-)$-matrix. We show that when a training dataset is close to a well-balanced condition, the SIC model is moderately resilient to variations in the dataset through the skewed hyperplane equation. When $r_{sc}$ is not severe, the proposed SIC model could be used as an alternative cost-sensitive learning model that does not require an external $\pi$-weight parameter. Additionally, we introduce {\it quasi-Newton(L-BFGS) optimization for virtual convex loss} with an interval-based bisection line search. This optimization is a competitive framework for a convex minimization problem, compared to conventional L-BFGS with cubic-interpolation-based line search. We utilize the proposed optimization framework for the SIC model and the $\pi$-weighted convex focal loss. Our SIC model has shown better performance in terms of test classification accuracy with 118 diverse datasets compared to the $\pi$-weighted convex focal loss and LIBLINEAR. In binary classification problems, where the severity of $r_{sc}$ is not high, selecting the best SIC model for each dataset(TOP$1$) can lead to better performance than the kernel-based LIBSVM. Specifically, TOP$1$ achieves a test classification accuracy of $83.96\%$, which is $0.74\%$ better than the accuracy of LIBSVM and $0.16\%$ better than the accuracy of TOP$1$-FL of the convex focal loss. In multi-class classification problems, although the test accuracy of TOP$1$ is lower than that of kernel-based LIBSVM, it achieves an accuracy of $77.30\%$, which is $0.62\%$ better than the accuracy of TOP$1$-FL of the convex focal loss. Last but not least, the proposed SIC model, which includes an $(\alpha_+,\alpha_-)$-matrix parameter, could be a valuable tool for analyzing various structures of datasets, such as $r_{sc}$-inconsistency and multi-label structures.

\appendix
\subsection{Proof of Theorem \ref{th:derivative-bernoulli-like}}\label{proofofnthderivative}

Let $\alpha=1$, then we have $s_{\alpha,c}(x) = \frac{1}{1+\exp(-x)} \in C^{\infty}(\R)$. Therefore, we only consider the case $\alpha\not=1$. Note that  $\nabla^n s_{\alpha,c}(x) = 0$, for all $x \in \R \setminus \dom(\sigma_{\alpha,c})$ and $n=1,2,3,\cdots$. Also, for $x \in int(\dom(\sigma_{\alpha,c}))$, let $y = 1 + \frac{x}{c_{\alpha}}>0$ and $a = \frac{1}{1-\alpha}$, then we get $s_{\alpha,c}(c_{\alpha}(y-1)) = \frac{1}{1 + y^a}$. Thus, it is not difficult to see $s_{\alpha,c}(x) = \sigma_{\alpha,c}(x) \in C^{\infty}(int(\dom(\sigma_{\alpha,c})))$. 

For the continuity of $\nabla^n s_{\alpha,c}(x)$ on $x \in \R$, we only need to check $s_{\alpha,c}(-c_{\alpha})=0$. Let us assume that \eqref{nth-derivative_s} is true. (A) When $0 \le \alpha<1$, $\exp_{\alpha,c}(-x)=0$ at $x=-c_{\alpha}$. Thus, we only need to consider numerator of $F_{n,k}(x)$, i.e., $(\exp_{\alpha,c}(-x))^{k-n(1-\alpha)}$. $F_{n,k}(c_{\alpha}) = 0$, if $k - n(1-\alpha) > 0$ for all $k=1,2,\cdots,n$. Now, we get $1>\alpha>1-1/n$. (B) When $\alpha>1$, $\exp_{\alpha,c}(-x)=+\infty$ at $x=-c_{\alpha}$. Thus, we have
$F_{n,k}(-c_{\alpha}) = \lim_{x \rightarrow -c_{\alpha}} c A_{n,k}\left(\frac{1}{1-\alpha}\right) (\exp_{\alpha,c}(-x))^{-n(1-\alpha) -1} = 0$
if $-n(1-\alpha)-1<0$. It means $1<\alpha< 1 + 1/n$. From (A) and (B), we get $\nabla^n s_{\alpha,c}(x) \in C^{n}(\R)$ for $\alpha \in \left(1-\frac{1}{n},1+\frac{1}{n}\right)$.

Now, we want to show \eqref{nth-derivative_s} by induction for $x \in int(\dom(\sigma_{\alpha,c}))$. Let $y=1+\frac{x}{c_{\alpha}}$ and $a = \frac{1}{1-\alpha}$, then \eqref{nth-derivative_s} becomes
\begin{equation}\label{sim-nth-derivative_s}
\frac{d^n}{dy^n} \frac{1}{1+y^a} = \sum_{k=1}^n B_{n,k}(a)\frac{y^{ka-n}}{(1+y^a)^{k+1}}
\end{equation}
where $B_{n,k}(a) = (-1)^{n+k}k! \sum_{l=0}^n {\left[n \atop l\right]}{\left\{ l \atop k \right\}} (-a)^{l}$. 

({\bf I}) Let $n=1$. Then the left-hand side of \eqref{sim-nth-derivative_s} is
$\frac{d}{dy}\frac{1}{1+y^a} = \frac{-ay^{a-1}}{(1+y^a)^2}$. The right-hand side is $B_{1,1}(a)\frac{x^{a-1}}{(1+x^a)^2}$ where $B_{1,1}(a) = \left[ 1 \atop 0 \right]\left\{ 0 \atop 1 \right\} + \left[ 1 \atop 1 \right]\left\{ 1 \atop 1 \right\}(-a) = -a$. For the computation of the Stirling number of the first and second kind, we use the following convention and rule in \cite{graham94}: $\left\{ 0 \atop 0 \right\} = \left[ 0 \atop 0 \right] = 1$ and $\left\{ a \atop 0 \right\} = \left[ a \atop 0  \right]=0$ for $a \ge 1$. Also, we have $\left\{ a \atop 1 \right\} = 1$ and $\left[ a \atop 1 \right] = (a-1)!$ with $0!=1$, for $a\ge1$. Additionally, $\left\{ a \atop b \right\} = \left[ a \atop b \right] = 0$ if $b>a \ge 0$. 

({\bf II}) For $n>1$, let \eqref{sim-nth-derivative_s} be true. Then, for $n+1$, we need to show that
\begin{equation}\label{th:eq1}
\frac{d}{dy}\left( \sum_{k=1}^n B_{n,k}(a)\frac{y^{ak-n}}{(1+y^a)^{k+1}} \right) 
= \sum_{k=1}^{n+1}B_{n+1,k}(a)\frac{y^{ak-(n+1)}}{(1+y^a)^{k+1}},
\end{equation}
where
\begin{equation}\label{th:eq2}
\frac{d}{dy}\left( \sum_{k=1}^n B_{n,k}(a)\frac{y^{ak-n}}{(1+y^a)^{k+1}} \right) = \sum_{k=1}^n B_{n,k}(a)\left(\frac{(ak-n)y^{ak-(n+1)}}{(1+y^a)^{k+1}} - (k+1)\frac{ay^{a(k+1)-(n+1)}}{(1+y^a)^{k+2}} \right).
\end{equation}
From \eqref{th:eq1} and \eqref{th:eq2}, we get
\begin{equation}\label{eqA}
B_{n+1,k}(a) = (-ka) B_{n,k-1}(a) + (ka-n) B_{n,k}(a)
\end{equation}
where $B_{n,0}(a) = B_{n,n+1}(a)=0$. It comes from the rule of the Stirling number in ({\bf I}). Now, we only need to prove \eqref{eqA}. The left-hand side of \eqref{eqA} is
\begin{eqnarray*}
B_{n+1,k}(a) &=& (-1)^k k! \sum_{l=0}^{n+1}\left[n+1 \atop l \right]\left\{ l \atop k \right\}(-1)^{n+1 - l}a^l  \\
&=& (-1)^k k! \left(\left\{ n+1 \atop k \right\} a^{n+1}  +  \sum_{l=1}^{n}\left( n\left[ n \atop l \right] + \left[ n \atop l-1 \right] \right)\left\{ l \atop k \right\}(-1)^{n+1 - l}a^l \right)   
\end{eqnarray*}
where $\left[ n+ 1 \atop 0 \right] = 0$, $\left[ n+ 1 \atop n+1 \right] = 1$, and $\left[ n+1 \atop l \right] = n\left[ n \atop l \right] + \left[ n \atop l-1 \right]$(see the rule of the Stirling number in ({\bf I}) and \cite{graham94}). By using the equivalence $\left\{l+1 \atop k \right\} = \left\{l \atop k-1 \right\} + k \left\{ l \atop k \right\}$ and $\left[ n \atop 0 \right]=0$, the right-hand side of \eqref{eqA} is simplified to the following equation.
\begin{eqnarray*}
(-ka)B_{n,k-1}(a) + (ka-n)B_{n,k}(a) = (-1)^{k} k! \sum_{l=0}^n \left( a\left\{ l+1 \atop k \right\} - n \left\{ l \atop k\right\} \right) \left[ n \atop l \right] (-1)^{n-l} a^l. 
\end{eqnarray*}
By dividing $(-1)^k k!$ and adding $\sum_{l=1}^n n\left[ n \atop l \right]\left\{l \atop k \right\}(-1)^{n-l}a^l$ on both sides, we obtain the equivalence \eqref{eqA}.

\subsection{The structure of the dataset and classification results of all-class\label{appB}}
We summarize the structure of $118$ datasets used in our experiments and present classification accuracy of the SIC model and $\pi$-weighted convex focal loss of all-class. Table \ref{2classimb} summarizes two-class datasets. For each dataset, we describe the number of instances, the size of training dataset, the size of test dataset, the size of class, and the feature dimension. Additionally, we show imbalancedness, i.e., $r_c/r_c{\bf T}/r_c{\bf Te}$ for class-imbalance ratio of combined/training/test dataset and $r_{sc}/r_{sc}{\bf T}/r_{sc}{\bf Te}$ for scale-class-imbalance ratio of combined/training/test dataset. Table \ref{mclassimb} summarizes multi-class datasets. For each dataset, we describe the number of instances, the size of training dataset, the size of test dataset, the feature dimension, and the size of class. Additionally, we show imbalancedness, i.e., minimum and maximum of $r_c$ for combined/training dataset: $r_c{\bf m}/r_c{\bf M}/r_c{\bf Tm}/r_c{\bf TM}$. Also, minimum and maximum of $r_{sc}$ for combined/training dataset: $r_{sc}{\bf m}/r_{sc}{\bf M}/r_{sc}{\bf Tm}/r_{sc}{\bf TM}$.

Figure \ref{fig:ALLclass} presents classification accuracy matrix and the corresponding histogram of all-class for $20\times20$ SIC models and $19\times8$ convex focal losses. The test classification accuracy of all SIC models ranges between $76.88\%$ and $78.56\%$. On the other hand, the test classification accuracy of all convex focal losses ranges between $71.04\%$ and $78.39\%$. 

\begin{table*}
\centering
\begin{tiny}\begin{tabular}{l||c|c|c|c|c||c|c||c|c||c|c}
\hline
&\textbf{instance}&\textbf{train}&\textbf{test}&\textbf{dim}&\textbf{class}&\textbf{$r_c$}&\textbf{$r_{sc}$}&\textbf{$r_c$T}&\textbf{$r_{sc}$T}&\textbf{$r_c$Te}&\textbf{$r_{sc}$Te}\\\hline\hline
\textbf{acute-inflammation  }&120&60&60&6&2&1.03&1.01&1&0.95&1.07&1.06\\\hline
\textbf{acute-nephritis  }&120&60&60&6&2&1.40&1.14&1.40&1.14&1.40&1.14\\\hline
\textbf{adult  }&48842&32561&16281&14&2&3.18&2.10&3.15&2.09&3.23&2.14\\\hline
\textbf{balloons  }&16&8&8&4&2&1.29&1.18&1&0.72&1.67&2.15\\\hline
\textbf{bank  }&4521&2261&2260&16&2&7.68&4.43&7.66&4.38&7.69&4.45\\\hline
\textbf{blood  }&748&374&374&4&2&3.20&2.63&3.20&2.41&3.20&2.80\\\hline
\textbf{breast-cancer  }&286&143&143&9&2&2.36&1.92&2.33&1.91&2.40&1.89\\\hline
\textbf{breast-cancer-wisc  }&699&350&349&9&2&1.90&1.12&1.89&1.13&1.91&1.11\\\hline
\textbf{breast-cancer-wisc-diag  }&569&285&284&30&2&1.68&1.06&1.69&0.99&1.68&1.14\\\hline
\textbf{breast-cancer-wisc-prog  }&198&99&99&33&2&3.21&2.06&3.12&2.42&3.30&1.98\\\hline
\textbf{chess-krvkp  }&3196&1598&1598&36&2&0.91&0.95&0.92&0.97&0.91&0.95\\\hline
\textbf{congressional-voting  }&435&218&217&16&2&1.59&1.53&1.60&1.54&1.58&1.51\\\hline
\textbf{conn-bench-sonar-mines-rocks  }&208&104&104&60&2&1.14&1.04&1.12&1.02&1.17&1.06\\\hline
\textbf{connect-4  }&67557&33779&33778&42&2&3.06&2.94&3.06&2.95&3.06&2.92\\\hline
\textbf{credit-approval  }&690&345&345&15&2&0.80&0.90&0.81&0.99&0.80&0.83\\\hline
\textbf{cylinder-bands  }&512&256&256&35&2&0.64&0.75&0.64&0.79&0.64&0.74\\\hline
\textbf{echocardiogram  }&131&66&65&10&2&2.05&1.47&2&1.52&2.10&1.44\\\hline
\textbf{fertility  }&100&50&50&9&2&7.33&5.47&7.33&4.96&7.33&6.07\\\hline
\textbf{haberman-survival  }&306&153&153&3&2&2.78&2.53&2.73&2.22&2.83&2.79\\\hline
\textbf{heart-hungarian  }&294&147&147&12&2&1.77&1.26&1.77&1.37&1.77&1.17\\\hline
\textbf{hepatitis  }&155&78&77&19&2&0.26&0.52&0.26&0.65&0.26&0.44\\\hline
\textbf{hill-valley  }&606&303&303&100&2&1.03&1.03&1.02&0.84&1.03&0.83\\\hline
\textbf{horse-colic  }&368&300&68&25&2&1.71&1.30&1.75&1.31&1.52&1.22\\\hline
\textbf{ilpd-indian-liver  }&583&292&291&9&2&2.49&2.04&2.48&2.07&2.51&2.06\\\hline
\textbf{ionosphere  }&351&176&175&33&2&0.56&0.81&0.56&0.86&0.56&0.83\\\hline
\textbf{magic  }&19020&9510&9510&10&2&1.84&1.51&1.84&1.51&1.84&1.52\\\hline
\textbf{miniboone  }&130064&65032&65032&50&2&0.39&0.55&0.39&0.55&0.39&0.55\\\hline
\textbf{molec-biol-promoter  }&106&53&53&57&2&1&1&0.96&1.01&1.04&0.99\\\hline
\textbf{mammographic  }&961&481&480&5&2&1.16&1.09&1.16&1.11&1.16&1.06\\\hline
\textbf{mushroom  }&8124&4062&4062&21&2&1.07&1.03&1.07&1.05&1.07&1\\\hline
\textbf{musk-1  }&476&238&238&166&2&1.30&1.07&1.29&1.53&1.31&0.93\\\hline
\textbf{musk-2  }&6598&3299&3299&166&2&5.49&1.74&5.48&1.78&5.49&1.73\\\hline
\textbf{oocytes-merluccius-nucleus-4d  }&1022&511&511&41&2&0.49&0.59&0.49&0.55&0.49&0.62\\\hline
\textbf{oocytes-trisopterus-nucleus-2f  }&912&456&456&25&2&0.73&0.79&0.73&0.85&0.73&0.71\\\hline
\textbf{ozone  }&2536&1268&1268&72&2&33.74&6.98&33.27&7&34.22&7.31\\\hline
\textbf{parkinsons  }&195&98&97&22&2&0.33&0.73&0.32&0.92&0.33&0.56\\\hline
\textbf{pima  }&768&384&384&8&2&1.87&1.53&1.87&1.51&1.87&1.53\\\hline
\textbf{pittsburg-bridges-T-OR-D  }&102&51&51&7&2&6.29&4.10&6.29&4.71&6.29&3.41\\\hline
\textbf{planning  }&182&91&91&12&2&2.50&2.46&2.50&2.39&2.50&2.45\\\hline
\textbf{ringnorm  }&7400&3700&3700&20&2&0.98&0.99&0.98&1&0.98&0.98\\\hline
\textbf{spambase  }&4601&2301&2300&57&2&1.54&1.16&1.54&1.16&1.54&1.17\\\hline
\textbf{spect  }&265&79&186&22&2&1.41&1.21&2.04&1.41&1.21&1.11\\\hline
\textbf{spectf  }&267&80&187&44&2&0.26&0.52&1&1&0.09&0.26\\\hline
\textbf{statlog-australian-credit  }&690&345&345&14&2&0.47&0.49&0.47&0.49&0.47&0.48\\\hline
\textbf{statlog-german-credit  }&1000&500&500&24&2&2.33&1.82&2.33&1.81&2.33&1.84\\\hline
\textbf{statlog-heart  }&270&135&135&13&2&1.25&1.09&1.25&1.15&1.25&1.02\\\hline
\textbf{tic-tac-toe  }&958&479&479&9&2&0.53&0.60&0.53&0.61&0.53&0.59\\\hline
\textbf{titanic  }&2201&1101&1100&3&2&2.10&1.76&2.09&1.75&2.10&1.77\\\hline
\textbf{trains  }&10&5&5&29&2&1&1&0.67&0.70&1.50&1.21\\\hline
\textbf{twonorm  }&7400&3700&3700&20&2&1&1&1&1&1&1\\\hline
\textbf{vertebral-column-2clases  }&310&155&155&6&2&2.10&1.56&2.10&1.51&2.10&1.63\\\hline
\end{tabular}
\end{tiny}
\caption{Two-class datasets in \cite{delgado14,wainberg16}. Note that $r_c$ is the class-imbalance ratio and $r_{sc}$ is the scale-class-imbalance ratio defined in~\eqref{imbrsc}. Additionally, $r_c${\bf T} and $r_{sc}${\bf T} (or $r_c${\bf Te} and $r_{sc}${\bf Te}) are $r_c$ and $r_{sc}$ of training dataset (or test dataset).
}\label{2classimb}
\end{table*}

\begin{table*}
\centering
\begin{tiny}\begin{tabular}{l|c|c|c|c|c||c|c||c|c||c|c||c|c}
\hline
&\textbf{inst.}&\textbf{train}&\textbf{test}&\textbf{dim}&\textbf{cls}&\textbf{$r_c$m}&\textbf{$r_c$M}&\textbf{$r_{sc}$m}&\textbf{$r_{sc}$M}&\textbf{$r_{c}$Tm}&\textbf{$r_{c}$TM}&\textbf{$r_{sc}$Tm}&\textbf{$r_{sc}$TM}\\\hline\hline
\textbf{abalone  }&4177&2089&2088&8&3&0.46&0.53&0.53&0.82&0.46&0.53&0.54&0.82\\\hline
\textbf{annealing  }&798&399&399&31&5&0.01&3.20&0.06&2.06&0.01&3.20&0.11&2.12\\\hline
\textbf{arrhythmia  }&452&226&226&262&13&0&1.18&0.05&1.04&0&1.11&0.06&1.02\\\hline
\textbf{audiology-std  }&196&171&25&59&18&0.01&0.32&0.05&0.59&0.01&0.36&0.05&0.59\\\hline
\textbf{balance-scale  }&625&313&312&4&3&0.09&0.85&0.09&0.91&0.09&0.85&0.09&0.94\\\hline
\textbf{breast-tissue  }&106&53&53&9&6&0.15&0.26&0.31&0.71&0.15&0.26&0.27&0.58\\\hline
\textbf{car  }&1728&864&864&6&4&0.04&2.34&0.08&1.77&0.04&2.32&0.08&1.79\\\hline
\textbf{cardiotocography-10clases  }&2126&1063&1063&21&10&0.03&0.37&0.07&0.55&0.03&0.37&0.07&0.54\\\hline
\textbf{cardiotocography-3clases  }&2126&1063&1063&21&3&0.09&3.51&0.34&1.85&0.09&3.50&0.31&1.89\\\hline
\textbf{chess-krvk  }&28056&14028&14028&6&18&0&0.19&0&0.24&0&0.19&0&0.24\\\hline
\textbf{conn-bench-vowel-deterding  }&528&264&264&11&11&0.10&0.10&0.11&0.25&0.10&0.10&0.12&0.26\\\hline
\textbf{contrac  }&1473&737&736&9&3&0.29&0.75&0.37&0.78&0.29&0.74&0.39&0.77\\\hline
\textbf{dermatology  }&366&183&183&34&6&0.06&0.44&0.39&0.90&0.06&0.43&0.40&0.87\\\hline
\textbf{ecoli  }&336&168&168&7&8&0.01&0.74&0.01&0.89&0.01&0.71&0.01&0.84\\\hline
\textbf{energy-y1  }&768&384&384&8&3&0.22&0.88&0.31&0.97&0.22&0.87&0.29&0.99\\\hline
\textbf{energy-y2  }&768&384&384&8&3&0.33&0.99&0.58&1&0.33&0.99&0.60&0.95\\\hline
\textbf{flags  }&194&97&97&28&8&0.02&0.45&0.06&0.68&0.02&0.43&0.05&0.66\\\hline
\textbf{glass  }&214&107&107&9&6&0.04&0.55&0.11&0.60&0.05&0.51&0.10&0.57\\\hline
\textbf{hayes-roth  }&160&132&28&3&3&0.24&0.68&0.40&0.74&0.29&0.63&0.46&0.69\\\hline
\textbf{heart-cleveland  }&303&152&151&13&5&0.04&1.18&0.11&1.07&0.05&1.14&0.14&1.14\\\hline
\textbf{heart-switzerland  }&123&62&61&12&5&0.04&0.64&0.06&0.67&0.05&0.63&0.08&0.66\\\hline
\textbf{heart-va  }&200&100&100&12&5&0.05&0.39&0.10&0.46&0.05&0.37&0.09&0.48\\\hline
\textbf{image-segmentation  }&2310&210&2100&18&7&0.17&0.17&0.27&0.69&0.17&0.17&0.27&0.69\\\hline
\textbf{iris  }&150&75&75&4&3&0.50&0.50&0.58&0.82&0.50&0.50&0.56&0.84\\\hline
\textbf{led-display  }&1000&500&500&7&10&0.09&0.12&0.18&0.30&0.09&0.13&0.19&0.29\\\hline
\textbf{lenses  }&24&12&12&4&3&0.20&1.67&0.35&1.37&0.20&1.40&0.35&1.39\\\hline
\textbf{letter  }&20000&10000&10000&16&26&0.04&0.04&0.06&0.14&0.04&0.04&0.06&0.14\\\hline
\textbf{libras  }&360&180&180&90&15&0.07&0.07&0.12&0.54&0.07&0.07&0.13&0.51\\\hline
\textbf{low-res-spect  }&531&266&265&100&9&0&1.08&0.05&1.01&0&1.06&0.04&0.98\\\hline
\textbf{lung-cancer  }&32&16&16&56&3&0.39&0.68&0.78&0.87&0.45&0.60&0.72&0.87\\\hline
\textbf{lymphography  }&148&74&74&18&4&0.01&1.21&0.08&1.10&0.01&1.18&0.08&1.04\\\hline
\textbf{molec-biol-splice  }&3190&1595&1595&60&3&0.32&1.08&0.51&1.05&0.32&1.08&0.51&1.07\\\hline
\textbf{nursery  }&12960&6480&6480&8&5&0&0.50&0&0.67&0&0.50&0&0.67\\\hline
\textbf{oocytes-merluccius-states-2f  }&1022&511&511&25&3&0.06&2.19&0.37&1.14&0.06&2.17&0.37&1.07\\\hline
\textbf{oocytes-trisopterus-states-5b  }&912&456&456&32&3&0.02&1.36&0.08&1.04&0.02&1.35&0.09&1.08\\\hline
\textbf{optical  }&5620&3823&1797&62&10&0.11&0.11&0.30&0.51&0.11&0.11&0.31&0.53\\\hline
\textbf{page-blocks  }&5473&2737&2736&10&5&0.01&8.77&0.06&4.07&0.01&8.74&0.06&4.01\\\hline
\textbf{pendigits  }&10992&7494&3498&16&10&0.11&0.12&0.23&0.44&0.11&0.12&0.25&0.44\\\hline
\textbf{pittsburg-bridges-MATERIAL  }&106&53&53&7&3&0.12&2.93&0.20&1.67&0.13&2.79&0.29&1.37\\\hline
\textbf{pittsburg-bridges-REL-L  }&103&52&51&7&3&0.17&1.06&0.24&1.04&0.18&1&0.25&1.20\\\hline
\textbf{pittsburg-bridges-SPAN  }&92&46&46&7&3&0.31&1.09&0.45&1.07&0.31&1.09&0.39&1.08\\\hline
\textbf{pittsburg-bridges-TYPE  }&105&53&52&7&6&0.11&0.72&0.14&0.78&0.10&0.66&0.17&0.79\\\hline
\textbf{plant-margin  }&1600&800&800&64&100&0.01&0.01&0.03&0.14&0.01&0.01&0.03&0.14\\\hline
\textbf{plant-shape  }&1600&800&800&64&100&0.01&0.01&0.01&0.25&0.01&0.01&0.01&0.25\\\hline
\textbf{plant-texture  }&1599&800&799&64&100&0.01&0.01&0.03&0.11&0.01&0.01&0.03&0.12\\\hline
\textbf{post-operative  }&90&45&45&8&3&0.02&2.46&0.06&2.31&0.02&2.46&0.06&1.95\\\hline
\textbf{primary-tumor  }&330&165&165&17&15&0.02&0.34&0.04&0.55&0.02&0.32&0.04&0.50\\\hline
\textbf{seeds  }&210&105&105&7&3&0.50&0.50&0.65&0.86&0.50&0.50&0.69&0.87\\\hline
\textbf{semeion  }&1593&797&796&256&10&0.11&0.11&0.52&0.74&0.11&0.11&0.52&0.69\\\hline
\textbf{soybean  }&307&154&153&35&18&0.01&0.15&0.10&0.49&0.01&0.15&0.10&0.49\\\hline
\textbf{statlog-image  }&2310&1155&1155&18&7&0.17&0.17&0.27&0.69&0.17&0.17&0.27&0.68\\\hline
\textbf{statlog-landsat  }&6435&4435&2000&36&6&0.11&0.31&0.26&0.83&0.10&0.32&0.25&0.84\\\hline
\textbf{statlog-shuttle  }&58000&43500&14500&9&7&0&3.67&0&1.67&0&3.63&0&1.67\\\hline
\textbf{statlog-vehicle  }&846&423&423&18&4&0.31&0.35&0.48&0.62&0.31&0.35&0.35&0.72\\\hline
\textbf{steel-plates  }&1941&971&970&27&7&0.03&0.53&0.09&0.76&0.03&0.53&0.10&0.71\\\hline
\textbf{synthetic-control  }&600&300&300&60&6&0.20&0.20&0.28&0.85&0.20&0.20&0.29&0.87\\\hline
\textbf{teaching  }&151&76&75&5&3&0.48&0.53&0.53&0.58&0.49&0.52&0.54&0.54\\\hline
\textbf{thyroid  }&7200&3772&3428&21&3&0.02&12.48&0.08&5.19&0.03&12.28&0.08&4.96\\\hline
\textbf{vertebral-column-3clases  }&310&155&155&6&3&0.24&0.94&0.43&0.98&0.24&0.94&0.43&0.92\\\hline
\textbf{wall-following  }&5456&2728&2728&24&4&0.06&0.68&0.24&0.78&0.06&0.68&0.23&0.80\\\hline
\textbf{waveform  }&5000&2500&2500&21&3&0.49&0.51&0.75&0.85&0.49&0.51&0.74&0.85\\\hline
\textbf{waveform-noise  }&5000&2500&2500&40&3&0.49&0.51&0.76&0.84&0.49&0.51&0.77&0.86\\\hline
\textbf{wine  }&178&89&89&13&3&0.37&0.66&0.78&0.87&0.37&0.65&0.76&0.89\\\hline
\textbf{wine-quality-red  }&1599&800&799&11&6&0.01&0.74&0.02&0.81&0.01&0.74&0.02&0.81\\\hline
\textbf{wine-quality-white  }&4898&2449&2449&11&7&0&0.81&0&0.82&0&0.81&0&0.82\\\hline
\textbf{yeast  }&1484&742&742&8&10&0&0.45&0.03&0.51&0&0.45&0.03&0.51\\\hline
\textbf{zoo  }&101&51&50&16&7&0.04&0.68&0.13&0.93&0.04&0.65&0.13&0.91\\\hline
\end{tabular}
\end{tiny}
\caption{Multi-class datasets in \cite{delgado14,wainberg16}. Here, $0$ means $<0.01$. Note that $r_c${\bf m/M} is the minimum/maximum class-imbalance ratio of datasets and $r_{sc}${\bf m/M} is the minimum/maximum scale-class-imbalance ratio defined in~\eqref{imbrsc}. Additionally, $r_c${\bf Tm/M} and $r_{sc}${\bf Tm/M} are the corresponding imbalance ratios for training datasets. OVA strategy is used for $r_c$ and $r_{sc}$.}
\label{mclassimb}
\end{table*}

 \begin{figure*}[t]
\centering
\includegraphics[width=4in]{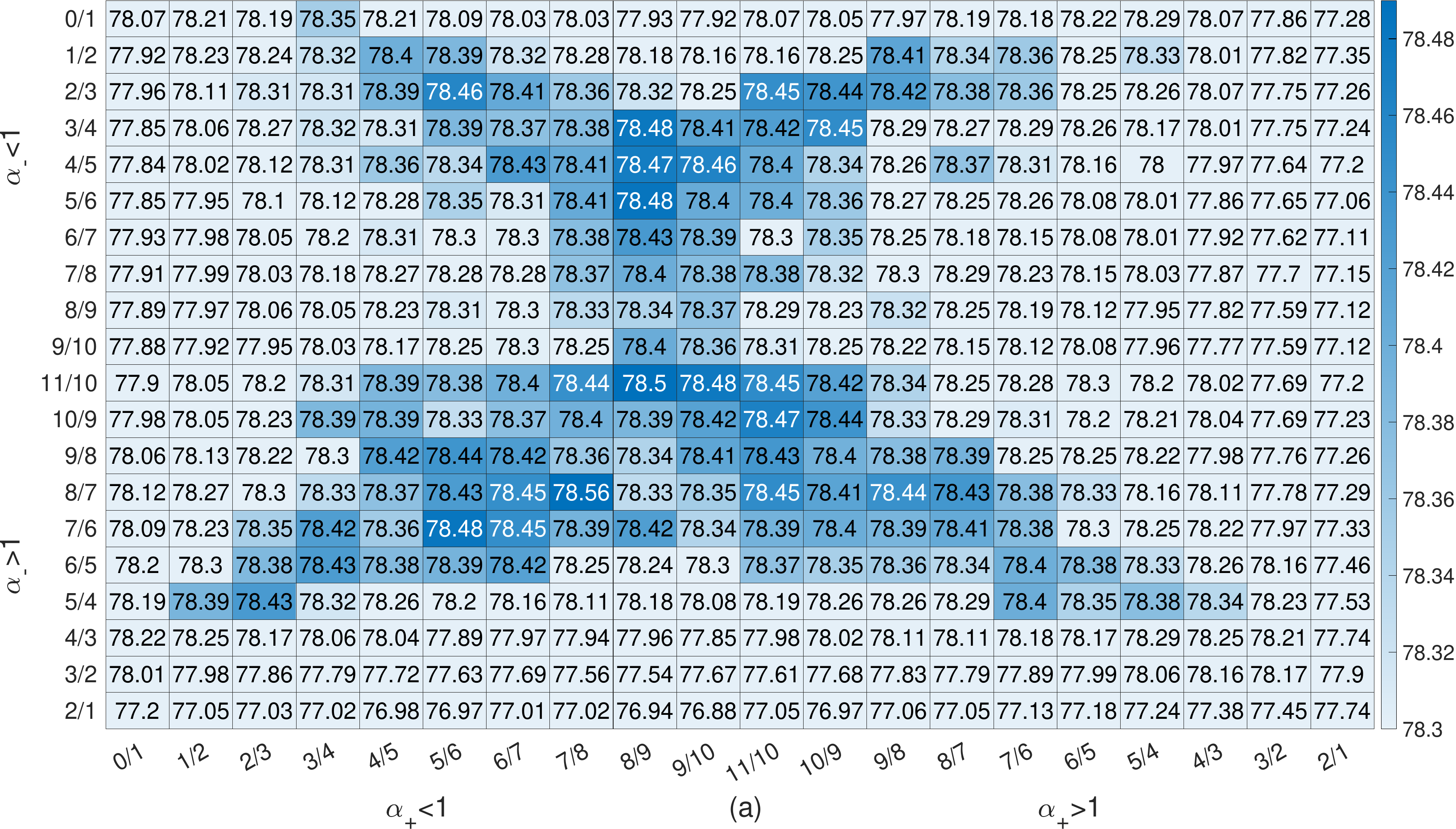}  \hskip 0.2in \vskip 0.2in
\includegraphics[width=4in]{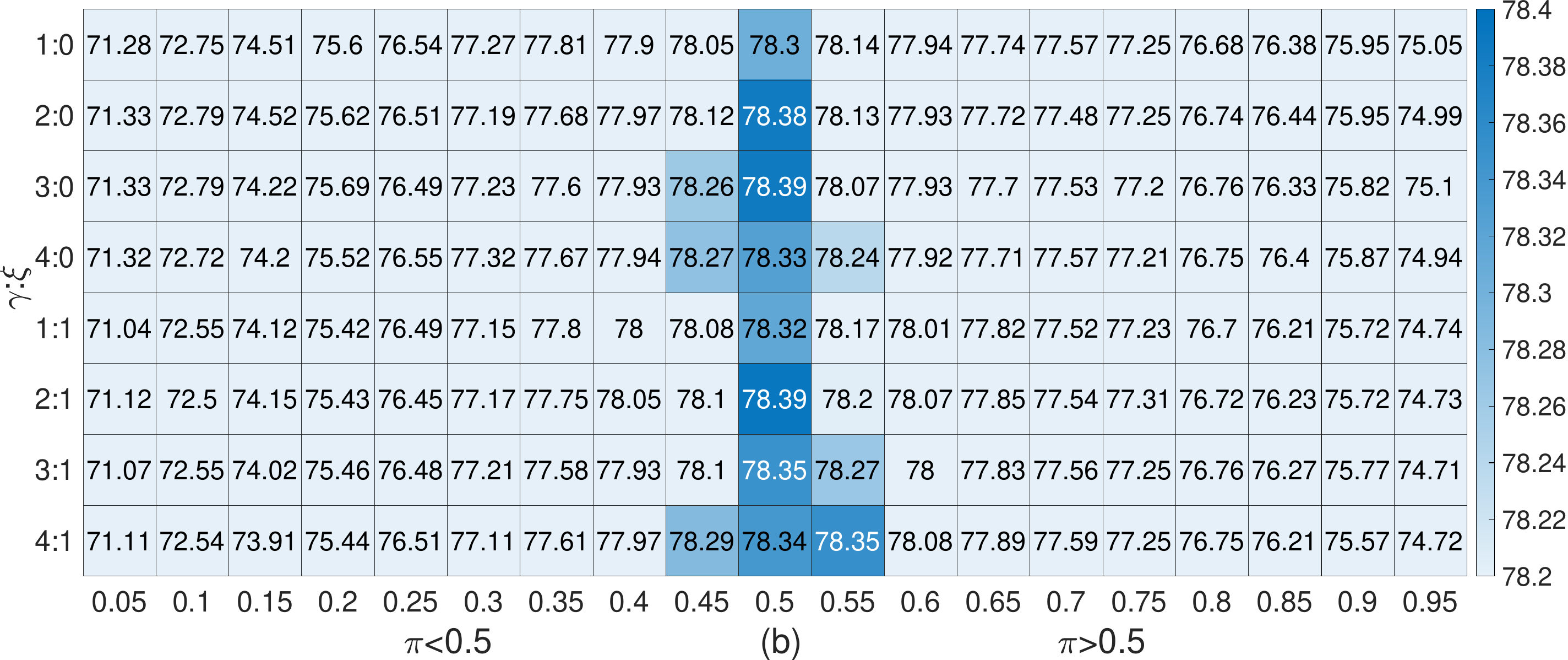} \hskip 0.3in  
\includegraphics[width=2.5in]{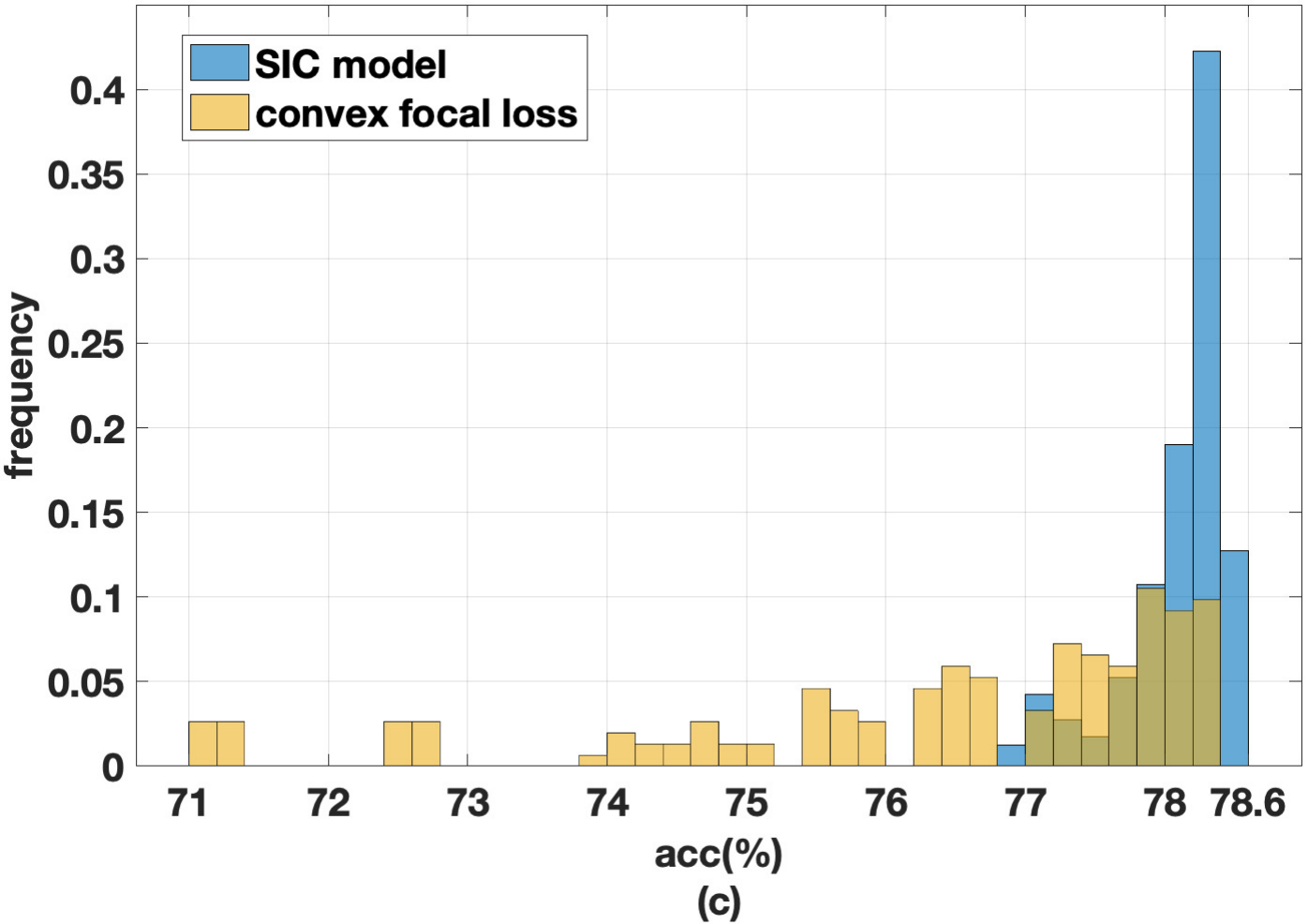}
\caption{Classification accuracy(\%) patterns of all-class are exhibited for (a) $20\times20$ SIC models and (b) $19\times8$ convex focal losses. The corresponding histograms are shown in (c). In the case of the SIC model, the best test classification accuracy $78.56\%$ is achieved when $(\alpha_+,\alpha_-) = (7/8,8/7)$. On the other hand, the convex focal loss model achieves the best test accuracy $78.39\%$ at $(\pi,\gamma,\xi)=(0.5,3,0), (0.5,2,1)$. Moreover, a histogram comparison between the two models shows that more than half of all SIC models obtain at least $78.20\%$ accuracy, while only $10\%$ of $\pi$-weighted convex focal losses obtain that accuracy.}
\label{fig:ALLclass}
\end{figure*}

\section*{Acknowledgments}
H. Woo is supported by Logitron X.


\begin{thebibliography}{30}
\bibitem{amari16}{\sc S. Amari,} {\em Information geometry and its applications}, Springer, 2016.

\bibitem{bach06}{\sc F. R. Bach, D. Heckerman, and E. Horvitz,} ``Considering cost asymmetry in learning classifiers,", {\em Journal of Machine Learning Research,} vol. 7, pp. 1713-1741, 2006.

\bibitem{brebisson16}{\sc A. de Br\'ebisson and P. Vincent,}  ``An exploration of softmax alternatives belonging to the spherical loss family", {\em arXiv:1511.05042v3}, 2016.

\bibitem{byrd19}{\sc J. Byrd and Z. C. Lipton,}  ``What is the effect of importance weighting in deep learning", {\em Proceedings of the 36th International Conference on Machine Learning}, 2019.

\bibitem{cao19}{\sc K. Cao, C. Wei, A. Gaidon, N. Arechiga, and T. Ma,} ``Learning imbalanced datasets with label-distribution-aware margin loss", {\em 33rd Conference on Neural Information Processing Systems (NeurIPS 2019)}, 2019.

\bibitem{chang11}{\sc C.-C. Chang and C.-J. Lin,} ``LIBSVM : a library for support vector machines", {\em ACM Transactions on Intelligent Systems and Technology,} vol. 2, pp. 27:1-27:27, 2011.

\bibitem{cui19}{\sc Y. Cui, M. Jia, T.-Y. Lin, Y. Song, S. Belongie,} ``Class-balanced loss based on effective number of samples", {\em arXiv:1901.5555v1}, 28 September 2018.

\bibitem{delgado14}{\sc M. F.-Delgado, E. Cernadas, S. Barro, and D. Amorim,} ``Do we need hundreds of classifiers to solve real world classification problems?", {\em Journal of Machine Learning Research}, vol. 15, pp. 3133-3181, 2014.

\bibitem{ding10}{\sc N. Ding and S.V.N. Vishwanathan,} ''t-logistic regression", {\em Advances in Neural Information Processing Systems 23}, 2010.

\bibitem{dubey22}{\sc S. R. Dubey, S. K. Singh, and B. B. Chaudhuri,} ``Activation functions in deep learning: A comprehensive survey and benchmark", {\em arXiv:2109.14545v3}, 2022.

\bibitem{elsayed18}{\sc G. F. Elsayed, D. Krishnan, H. Mobahi, K. Regan, and S. Bengio,} ``Large margin deep networks for classification", {\em 32nd Conference on Neural Information Processing Systems (NeurIPS)}, 2018.

\bibitem{fan08}{\sc R.-E. Fan, K.-W. Chang, C.-J. Hsieh, X.-R. Wang, and C.-J. Lin,} "LIBLINEAR: A library for large linear classification", {\em Journal of Machine Learning Research}, vol. 9, pp. 1871-1874, 2008.

\bibitem{fang21}{\sc C. Fang, H. He, Q. Long, and W. J. Su,} ``Exploring deep neural networks via layer-peeled model: minority collapse in imbalanced training", {\em Proc. Natl. Acad. Sci. U.S.A.}, vol. 118, pp. 1-12, 2021.

\bibitem{fernandez18}{\sc A. Fern\'andez, S. Garc\'ia, M. Galar, R. C. Prati, B. Krawczyk, and F. Herrera,} {\em Learning from imbalanced datasets,} Springer-Verlag, 2018.

\bibitem{galli22}{\sc L. Galli and C.-J. Lin,} ``A study on truncated Newton methods for linear classification", {\em IEEE Transactions on Neural Networks and Learning Systems}, vol. 33, pp. 2828-2841, 2022.

\bibitem{garcia15}{\sc S. Garcia, J. Luengo, and F. Herrera,} {\em Data preprocessing in data mining}. Springer-Verlag, 2015.

\bibitem{graham94}{R. Graham, D. Knuth, and O. Patashnik,} {\em Concrete Mathematics: A foundation for computer science}, Second Edition, Addison-Wesley, 1994.

\bibitem{guo17}{C. Guo, G. Pleiss, Y. Sun, and K. Q. Weinberger,} ``On calibration of modern neural networks", {\em Proceedings of the 34th International Conference on Machine Learning}, 2017.

\bibitem{hager05}{\sc W. W. Hager and H. Zhang,} ``A new conjugate gradient method with guaranteed descent and an efficient line search", SIAM Journal on Optimization, vol. 16, pp. 170-192, 2005.

\bibitem{hager06}{\sc W. W. Hager and H. Zhang,} ``Algorithm 851: CG\_DESCENT, a conjugate gradient method with guaranteed descent", ACM Transactions on Mathematical Software, vol. 32, pp. 113-137, 2006.

\bibitem{he09}{\sc H. He and E. A. Garcia,} ``Learning from imbalanced data", {\em IEEE Transactions on Knowledge and Data Engineering,} vol. 21, pp. 1263-1284, 2009. 

\bibitem{hiriart-urruty96}{\sc J.-B. Hiriart-Urruty and C. Lemarechal,} {\em Convex Analysis and Minimization Algorithms I}. Springer-Verlag, 1996.


\bibitem{ioffe15}{\sc S. Ioffe and C. Szegedy,} ``Batch normalization: Accelerating deep network training by reducing internal covariate shift", {\em arXiv:1502.03167v3}, 2015.  

\bibitem{janocha17}{\sc K. Janocha and W. M. Czarnecki,} ``On loss functions for deep neural networks in classification", {\em arXiv:1702.05659v1}, 2017.  

 \bibitem{ji20}{\sc Z. Ji, M. Dudik, R. E. Schapire, and M. Telgarsky,} ``Gradient descent follows the regularization path for general losses", {\em 33rd Annual Conference on Learning Theory}, 2020.

\bibitem{jiang18}{\sc Y. Jiang, D. Krishnan, H. Mobahi, S. Bengio,} ``Predicting the generalization gap in deep networks with margin distributions", {\em arXiv:1810.00113v1}, 28 September 2018.

\bibitem{johnson19}{\sc J. M. Johnson and T. M. Khoshgoftaar,} ``Survey on deep learning with class imbalance", {\em Journal of Big Data,} vol. 6, pp. 1-54., 2019.

\bibitem{jorgensen97}{\sc B. Jorgensen,} {\em The Theory of Dispersion Models}, Chapman \& Hall, 1997.


\bibitem{kurgan01}{\sc L. A. Kurgan, K. J. Cios, R. Tadeusiewicz, M. Ogiela, L. Goodenday,}  ``Knowledge discovery approach to automated cardiac SPECT diagnosis,", {\em Artificial Intelligence in Medicine}, vol. 23, pp. 149-169, 2001. 

\bibitem{liao18}{\sc J. Liao, O. Kosut, L. Sankar, F. du Pin Calmon,} ``Tunable measures for information leakage and applications to privacy-utility tradeoffs", {\em IEEE Transactions on Information Theory}, vol. 65, pp. 8043-8066, 2019.

\bibitem{lin20}{\sc T.-Y. Lin, P. Goyal, R. Girshick, K. He, and P. Doll\'ar,} ``Focal loss for dense object detection", {\em IEEE Transactions on Pattern Analysis and Machine Intelligence,} vol. 42, pp. 318-327, 2020.

\bibitem{lin02}{\sc Y. Lin,} ``Support Vector Machines and the Bayes rule in classification", {\em Data Mining and Knowledge Discovery}, vol. 6, pp. 259-275, 2002.

\bibitem{lin07}{\sc H.-T. Lin, C.-J. Lin, R. C. Weng,} ``A note on Platt's probabilistic outputs for support vector machines", {\em Mach. Learn.}, 68 (2007), pp. 267-276. 

\bibitem{lyu21}{\sc K. Lyu, Z. Li, R. Wang, and S. Arora,} ``Gradient descent on two-layer nets: margin maximization and simplicity bias", {\em Advances in Neural Information Processing Systems 35}, 2021.

\bibitem{martins16}{\sc A. F. T. Martins and R. F. Astudillo,} ``From softmax to sparsemax: A sparse model of attention and multi-label classification", {\em Proceedings of the 33th International Conference on Machine Learning}, 2016.

\bibitem{masnadi08}{\sc H. Masnadi-Shirazi and N. Vasconcelos,}  ``On the design of loss functions for classification: theory, robustness to outliers, and savageboost", {\em Advances in Neural Information Processing Systems 21}, 2008.

\bibitem{mccullagh89}{\sc P. McCullagh and J. A. Nelder,} {\em Generalized Linear Models}, Second Edition, Chapman \& Hall/CRC, 1989.

\bibitem{more94}{\sc J. J. Mor\'e and D. J. Thuente,} ``Line search algorithm with guaranteed sufficient decrease", {\em ACM Transactions on Mathematical Software} vol. 20, pp. 286-307, 1994.

\bibitem{murphy12}{\sc K. P. Murphy,} {\em Machine Learning}, MIT Press, 2012.

\bibitem{mutschler20}{\sc M. Mutschler and A. Zell,} ``Parabolic approximation line search for DNNs", {\em Advances in Neural Information Processing Systems 33},  2020.

\bibitem{nocedal06}{\sc J. Nocedal and S. J. Wright,} {\em Numerical Optimization}, Second Edition, Springer-Verlag, 2006.

\bibitem{oksuz21}{\sc K. Oksuz, B. C. Cam, S. Kalkan, and E. Akbas,} ``Imbalance problems in object detection: a review", {\em IEEE Transactions on Pattern Analysis and Machine Intelligence,} vol. 43, pp. 3388-3415, 2021. 

\bibitem{ollivier15}{\sc Y. Ollivier,} ``Riemannian metrics for neural networks I: feedforward networks", {\em Information and Inference: A Journal of the IMA}, vol. 4. pp. 108-153, 2015.

\bibitem{platt99}{\sc J. C. Platt,} ``Probabilistic outputs for support vector machines and comparisons to regularized likelihood methods", in Advances in Large Margin Classifiers, A.J. Smola, P. Bartlett, B. Sch\"olkopf, D, Schuurmans eds, MIT Press., 1999.

\bibitem{reid11}{\sc M. D. Reid and R. C. Williamson,} ``Information, divergence and risk for binary experiments", {\em Journal of Machine Learning Research}, vol. 12, pp. 731-817, 2011.

\bibitem{rockafellar70}{\sc R. T. Rockafellar,} {\em Convex Analysis},
Princeton University Press, Princeton, 1970.

\bibitem{schmidt05}{\sc M. Schmidt,} {\em minFunc: unconstrained differentiable multivariate optimization in Matlab}, \url{http://www.cs.ubc.ca/~schmidtm/Software/minFunc.html}, 2005.

\bibitem{sigmoidwiki} {\em Wikipedia - sigmoid function} \url{https://en.wikipedia.org/wiki/Sigmoid_function}.

\bibitem{soudry18}{\sc D. Soudry, E. Hoffer, M. S. Nacson, S. Gunasekar, N. Srebro,} ``The implicit bias of gradient descent on separable data", {\em Journal of Machine Learning Research},  vol. 19, pp. 1-57, 2018.

\bibitem{sypherd19}{\sc T. Sypherd, M. Diaz, J. K. Cava, G. Dasarathy, P. Kairouz, and L. Sankar,} ``A tunable loss function for robust classification: calibration, landscape, and generalization", {\em IEEE Transactions on Information Theory}, vol. 68, pp. 6021-6051, 2022.

\bibitem{tsanas12}{\sc A. Tsanas and A. Xifara,} ``Accurate quantitative estimation of energy performance of residential building using statistical machine learning tools,", {\em Energy Buildings}, vol. 49, pp. 560-567, 2012. 

\bibitem{ucidata}{\sc M. Kelly, R. Longjohn, and K. Nottingham,} {\em The UCI Machine Learning Repository}, \url{https://archive.ics.uci.edu}

\bibitem{vardi23}{\sc G. Vardi,}  ``On the implicit bias in deep-learning algorithms", {\em Communications of the ACM}, vol. 66, pp. 86-93, 2023.

\bibitem{vaswani19x}{\sc S. Vaswani, F. Bach, M. Schmidt,} "Fast and faster convergence of SGD for over-parameterized models (and an accelerated Perceptron)", {\em Proceedings of the 22nd International Conference on Artificial Intelligence and Statistics}, 2019.

\bibitem{wainberg16}{\sc M. Wainberg, B. Alipanahi, and B. J. Frey,}  ``Are random forests truly the best classifiers?", {\em Journal of Machine Learning Research}, vol. 17, pp. 1-5, 2016.

\bibitem{wedderburn74}{\sc R. Wedderburn,} ``Quasi-likelihood functions, generalized linear models, and the Gauss-Newton method'', {\em Biometrika}, vol. 61, pp. 439-447, 1974.

\bibitem{woo17}{\sc H. Woo,} ``A characterization of the domain of Beta-divergence and its connection to Bregman variational model", {\em Entropy}, vol. 19, 482, 2017.

\bibitem{woo19a}{\sc H. Woo,} ``Logitron: Perceptron-augmented classification model based on an extended logistic loss function", {\em arXiv:1904.02958v1}, 2019.

\bibitem{woo19b}{\sc H. Woo,} ``The Bregman-Tweedie classification model", {\em arXiv: 1907.06923v1}, 2019.

\bibitem{woo19c}{\sc H. Woo,} ``Bregman-divergence-guided Legendre exponential dispersion model with finite cumulants ($k$-LED)", {\em arXiv: 1910.03025v1}, 2019.

\end{thebibliography}
\end{document}